%% file: main.tex
\documentclass{article} 

\PassOptionsToPackage{table,dvipsnames}{xcolor}
\usepackage{iclr2027_conference,times}

\usepackage{amsmath,amssymb,amsthm,mathtools}
\usepackage{bm}
\usepackage{nicefrac}

\usepackage{graphicx}
\usepackage{caption}
\usepackage{subcaption}
\usepackage{wrapfig}

\usepackage{booktabs}
\usepackage{multirow}
\usepackage{makecell}
\usepackage{adjustbox}
\usepackage{array}
\usepackage{float}
\usepackage{tabularx}

\usepackage[table,dvipsnames]{xcolor}
\usepackage{colortbl}

\usepackage[ruled,lined]{algorithm2e}

\usepackage{microtype}
\usepackage{enumitem}
\usepackage{chngcntr}
\usepackage{placeins}
\usepackage{xspace}

\definecolor{inkblack}{HTML}{1C1F23}    
\definecolor{inkgraphite}{HTML}{3E464F} 
\definecolor{inkslate}{HTML}{5B6672}    

\definecolor{tabfocus}{HTML}{F4F4F2} 
\definecolor{tabrow}{HTML}{EAEAE7}   
\definecolor{tabhead}{HTML}{DCDCD8}  
\definecolor{tabbest}{HTML}{C9C9C4}  

\definecolor{inkrule}{HTML}{9A9A94}  
\definecolor{inkmuted}{HTML}{6E6E68} 

\usepackage[pagebackref=true,breaklinks=true,colorlinks,
            citecolor=inkgraphite,
            linkcolor=inkblack,
            urlcolor=inkslate]{hyperref}
\usepackage{aliascnt}
\usepackage{cleveref}
\usepackage{url}
\usepackage{etoolbox}

\makeatletter
\@ifundefined{Hy@theorem@makelinktarget}{}{%
  \newcommand{\paper@theorem@refstepcounter}[2][]{%
    \@skiphyperreftrue
    \if\relax\detokenize{#1}\relax
      \refstepcounter{#2}%
    \else
      \refstepcounter[#1]{#2}%
    \fi
    \@skiphyperreffalse
    \hyper@makecurrent{#2}%
    \global\let\Hy@dth@currentHref\@currentHref
    \Hy@theorem@makelinktarget{\Hy@dth@currentHref}%
  }
  \patchcmd{\cref@thmoptarg}{\refstepcounter}{\paper@theorem@refstepcounter}{}%
    {\PackageError{paper}{Cannot patch cleveref theorem anchors}{Check the cleveref/amsthm integration.}}
  \patchcmd{\cref@thmnoarg}{\refstepcounter}{\paper@theorem@refstepcounter}{}%
    {\PackageError{paper}{Cannot patch amsthm theorem anchors}{Check the cleveref/amsthm integration.}}
}
\makeatother

\theoremstyle{plain}
\newtheorem{theorem}{Theorem}[section]
\newaliascnt{proposition}{theorem}
\newtheorem{proposition}[proposition]{Proposition}
\aliascntresetthe{proposition}
\newaliascnt{lemma}{theorem}
\newtheorem{lemma}[lemma]{Lemma}
\aliascntresetthe{lemma}
\newaliascnt{corollary}{theorem}
\newtheorem{corollary}[corollary]{Corollary}
\aliascntresetthe{corollary}
\newtheorem*{theorem*}{Informal Theorem}

\theoremstyle{definition}
\newaliascnt{definition}{theorem}
\newtheorem{definition}[definition]{Definition}
\aliascntresetthe{definition}
\newaliascnt{assumption}{theorem}

\aliascntresetthe{assumption}
\newaliascnt{conjecture}{theorem}
\newtheorem{conjecture}[conjecture]{Conjecture}
\aliascntresetthe{conjecture}

\theoremstyle{remark}
\newaliascnt{remark}{theorem}
\newtheorem{remark}[remark]{Remark}
\aliascntresetthe{remark}

\input{macros}

\title{Mapping the Solution-Set Structure of Grokking on Latin Squares}

\author{
\textbf{Zihan Gu}$^{1,2}$, \textbf{Lulu Zhao}$^{3}$, \textbf{Tianyi Ma}$^{4}$,\textbf{Hua Zhang}$^{1,2}$ , \textbf{Wenbo Su}$^{3}$,\textbf{Yanan Cao}$^{1,2}$,\textbf{Yue Hu}$^{1,2}$ \\
  $^{1}$Institute of Information Engineering, Chinese Academy of Sciences; \\  $^{2}$School of Cyber Security, University of Chinese Academy of Sciences; \\ $^{3}$ Alibaba Group; $^{4}$Shanghai Jiaotong University. \\
  \texttt{\{guzihan\}@iie.ac.cn}
}

\iclrfinalcopy 

\begin{document}

\maketitle
\fancyhead{}

\begin{abstract}
Grokking is a delayed transition from memorisation to generalisation. A network first fits observed examples while remaining near chance on held-out inputs, then generalises abruptly. Prior work has focused mainly on this delay, finding that the network's main loss becomes weight decay after the fitting phase. We name the solution to this optimization problem as the minimum-norm endpoint, and study which tasks make all minimum-norm endpoints generalise from partial observations. Specifically, we model pairwise input tasks as edge colourings of $K_{q,q}$, with Latin colourings as our structured family, and classify each observed instance as all-good, mixed, or no-good by its endpoint set. Furthermore, we prove two contrasting results: (1) Finite-abelian-group families are asymptotically all-good from partial observations; (2) Under a uniform random Latin square, all-good tasks become asymptotically rare for observation-based learners, leaving mixed or no-good behaviour. Moreover, experiments show that relabellings of addition generalise despite their dense polynomial representations. Finally, on tasks of composite order, finite-training results suggest that predictors learn a coarse part of the output before the full label.
\end{abstract}


\input{section/01-introduction}
\input{section/03-related-work}
\input{section/02-model}
\input{section/03-overview}
\input{section/04-main-theorem}
\input{section/05-open-region}
\input{section/07-conclusion}


\input{main-bib}


\appendix
\crefalias{section}{appendix}
\crefalias{subsection}{subappendix}
\crefalias{subsubsection}{subsubappendix}
\input{appendix/00-notation}
\input{appendix/00-supplementary-theory}
\input{appendix/01-proofs}
\input{appendix/02-additional-experiments}
\input{appendix/03-discussion}

\end{document}

%% file: macros.tex
\newcommand{\Rbb}{\mathbb{R}}
\newcommand{\Zbb}{\mathbb{Z}}

\newcommand{\qset}{[q]}                        
\newcommand{\Xq}{X_q}                          
\newcommand{\Cq}{\mathcal{C}_q}                
\newcommand{\Lq}{\mathcal{L}_q}                
\newcommand{\Qallq}{\widehat{\mathcal{Q}}_q}   
\newcommand{\Qq}{\mathcal{Q}_q}                
\newcommand{\Sq}{\mathcal{S}_q}                
\newcommand{\Dset}{\mathcal{D}}                

\newcommand{\Prob}{\operatorname{\mathbb{P}}}
\newcommand{\Exp}{\operatorname{\mathbb{E}}}
\newcommand{\Unif}{\operatorname{Unif}}
\newcommand{\simplex}{\Delta_2}

\DeclareMathOperator{\relu}{ReLU}
\DeclareMathOperator{\rank}{rank}
\DeclareMathOperator{\per}{per}
\DeclareMathOperator*{\argmin}{arg\,min}

\newcommand{\opnorm}[1]{\lVert #1 \rVert_{\mathrm{op}}}
\newcommand{\nucnorm}[1]{\lVert #1 \rVert_{*}}
\newcommand{\frob}[1]{\lVert #1 \rVert_{F}}
\newcommand{\euc}[1]{\lVert #1 \rVert_{2}}
\newcommand{\card}[1]{\lvert #1 \rvert}
\newcommand{\pospart}[1]{\left( #1 \right)_{+}}

\newcommand{\Acc}{\operatorname{Acc}}
\newcommand{\Cadd}{C_q^{\mathrm{add}}}         
\newcommand{\Ccar}{F^{\star}_{q}}              
\newcommand{\Ccarc}[1]{F^{\star}_{q,#1}}       
\newcommand{\Madd}{\mathsf{M}_{\mathrm{add}}}
\newcommand{\Munif}{\mathsf{M}_{\mathrm{unif}}}
\newcommand{\Aplus}{A_q^{+}}
\newcommand{\Aminus}{A_q^{-}}

\newcommand{\Gall}{G_{q,\tau}^{\forall}}
\newcommand{\Gmix}{G_{q,\tau}^{\mathrm{mix}}}
\newcommand{\Gnone}{G_{q,\tau}^{\varnothing}}
\newcommand{\Gex}{G_{q,\tau}^{\exists}}
\newcommand{\pall}{p_{q,\tau}^{\forall}}
\newcommand{\pmix}{p_{q,\tau}^{\mathrm{mix}}}
\newcommand{\pnone}{p_{q,\tau}^{\varnothing}}
\newcommand{\pex}{p_{q,\tau}^{\exists}}
\newcommand{\Pall}{P_{q,\tau}^{\forall}}
\newcommand{\Pex}{P_{q,\tau}^{\exists}}
\newcommand{\profile}{\Pi_{q,\tau}}

\newcommand{\xicell}{\xi}                      
\newcommand{\Emb}{\Phi}                        
\newcommand{\Jred}{J}                          
\newcommand{\Jstar}{J_{*}}
\newcommand{\Vstar}{V_{*}}

\newcommand{\nucsphere}{\mathbb{S}_{*}}
\newcommand{\Eg}{E_g}
\newcommand{\smin}{s_{-,\tau}}
\newcommand{\smax}{s_{+,\tau}}

\newcommand{\Kinf}{K_{\infty,\tau}}
\newcommand{\vertexV}{\mathsf{v}}              
\newcommand{\edgeE}{\mathsf{e}}                
\newcommand{\Udefect}{U_{q,\tau}}
\newcommand{\Xexc}{\mathcal{X}_{\mathrm{exc}}} 

\newcommand{\ie}{i.e.\@\xspace}


%% file: section/01-introduction.tex
\section{Introduction}\label{sec:intro}

\emph{Grokking} is delayed generalisation: a network fits its training
examples, remains near chance on held-out inputs for a long period, and
then generalises abruptly \citep{power2022grokking}.  The classical
instance is modular addition, where the network observes some pairs
$(x,y)$ with their sums $x+y\bmod q$ and must infer the rest.  Such a
learning curve combines two things, the quality of the predictor that
training eventually reaches and the time taken to reach it, and existing
work examines how learned representations, regularisation, and the
training data shape this behaviour
\citep{nandaProgressMeasuresGrokking2022,mohamadiWhyYouGrok2024,guBeyondProgressMeasures2025}. However, a run that generalises shows that there exsits one solution does; it does not
show that every solution a training could have selected would.
We isolate the generalisation behaviour and ask it at the level of the task. Defining the minimum-norm endpoint as the solution to the weight decay-induced optimization problem, our
central question is therefore: \emph{which task structures make the
entire minimum-norm endpoint set generalise?}

Answering it needs modelling of the task and endpoint set.  A
pairwise input task assigns an output to every pair of inputs, and we encode it
as an \emph{edge colouring} of the complete bipartite graph $K_{q,q}$: the
two input alphabets are the two vertex classes, each pair $(u,v)$ is an
edge, which we also call a cell, and the output is its colour.  Written
out, the colouring is a $q\times q$ table, and at prime-power order it is
a polynomial over $\mathbb F_q$; table, graph, and polynomial are three
coordinates on one object.  The colouring is \emph{Latin} when it is
proper, so that no vertex sees a colour twice; in table coordinates every
symbol appears exactly once in each row and column, so the Latin
colourings are the Latin squares of combinatorics.  Every group
multiplication table is one, and the family therefore contains modular
addition together with its natural competitors.  Permuting the vertices
within either class, permuting the colours, or swapping the two classes
only changes how the table is written; we call such a move a
\emph{relabelling}, and the orbit of a colouring under relabellings is
our task object.  Training observes a uniformly random subset of the
cells of fixed size.

The network reads an edge through its two endpoints, with one embedding
table per vertex class, a hidden ReLU layer, and a head over the colours.
For the endpoint of training we follow implicit-bias theory, which
motivates the minimum-norm interpolator as the idealised late-training
object \citep{soudry2018implicit,lyu2020gradient,neyshabur2015norm}.  A
partially observed colouring, however, has many minimum-norm
interpolators: all meet the margins on the observed cells and are free to
disagree on the rest, and a handful of training runs yields a few
selected predictors without certifying that they belong to this set or
how the set behaves as a whole.  We therefore take the whole set, the
\emph{endpoint set}, as the object of study, and describe it by what
holds across it at a fixed held-out threshold $\tau$: an observed
instance is \textsf{all-good} if every minimum-norm interpolator
generalises, \textsf{no-good} if none does, and \textsf{mixed} if both
kinds coexist.  Averaged over the random observations, the three
probabilities form the task's \emph{phase profile}, a point of a triangle
(\Cref{fig:intro}, middle and right).

\begin{figure}[t]
  \centering
  \vspace{-8pt}
  \includegraphics[width=0.95\linewidth]{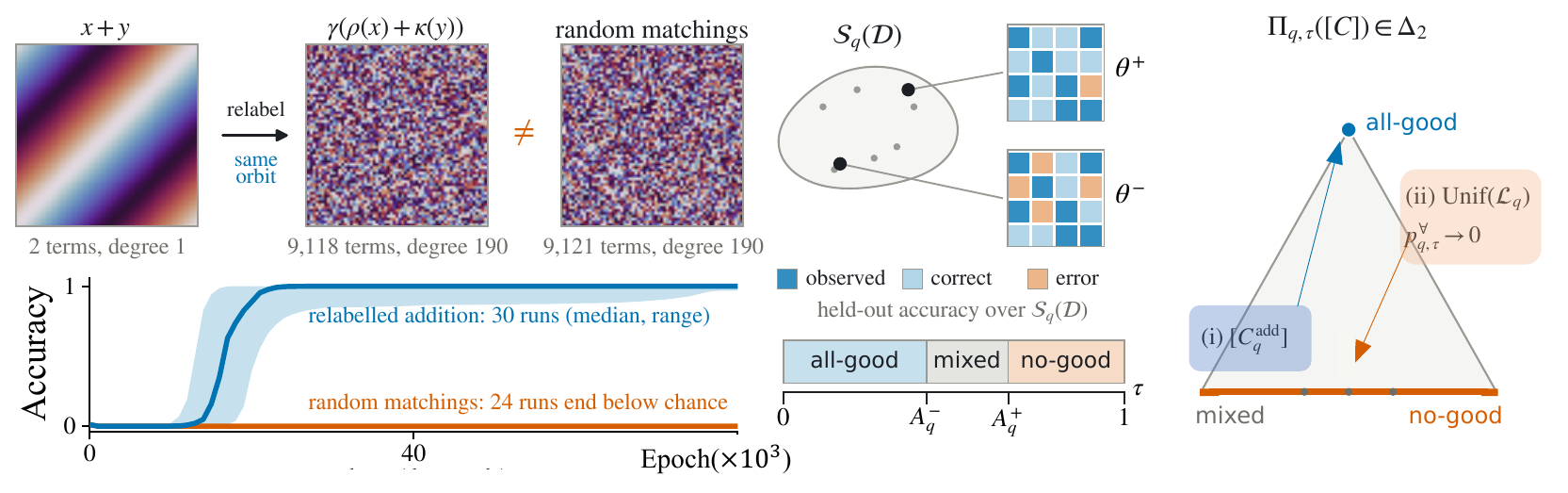}
    \caption{\textbf{Coordinates, endpoint sets, and the two poles.}
      Left: $x+y \bmod 97$; a relabelling of it, $9{,}118$ terms and degree
      $190$; and a random-matching colouring outside the orbit.
      Middle: the minimum-norm interpolators $\mathcal S_q(\mathcal D)$ agree
      on the observed cells, may differ elsewhere; their extremes
      $A_q^{-}\le A_q^{+}$ cut $\tau$ into \textsf{all-good}, \textsf{mixed},
      \textsf{no-good}.  Right: each orbit maps to a simplex point; abelian
      group families sit at the all-good vertex (cyclic addition
      illustrated), while the uniform Latin ensemble sits on the opposite
      edge, where the mixed/no-good split is open.}
  \label{fig:intro}
  \vspace{-8pt}
\end{figure}

Relabelling induces a norm-preserving bijection between endpoint sets and
preserves the law of the observations, so the phase profile is well
defined on orbits: it is a map from colouring orbits to the triangle.
The network realises the bijection by permuting the rows of its embedding
tables and of its head.  This separates task structure from its
polynomial description.  Take $x+y \bmod 97$ and relabel it.  The
endpoint set is transported by the bijection, but the polynomial
coordinate becomes a polynomial over $\mathbb F_{97}$ with $9{,}118$
nonzero terms and total degree $190$, with essentially the same support
density and the same degree as a proper colouring assembled from random
perfect matchings, which lies outside the orbit.  Networks trained on the
relabelled colouring memorise and then generalise as they do on $x+y$;
networks trained on the random-matching colouring end below chance
(\Cref{fig:intro}, left).  Relabelling preserves the phase while changing
polynomial sparsity and degree, and in this comparison those coordinate
statistics also fail to distinguish the trained predictors' outcomes.
This is why we work with orbits.

Our results place two structures at opposite poles of the triangle.  The all-good pole contains \emph{every finite
abelian group family}: at any fixed positive observation fraction, the
worst endpoint accuracy converges to one in probability as the order
grows.  Cyclic addition is one instance, and non-isomorphic abelian
groups supply distinct all-good orbits.  At the opposite pole, a
uniformly random Latin colouring defeats every learner that reads only
the observations, at any fixed positive accuracy; consequently the
colourings with a non-negligible all-good component have vanishing
uniform mass.  Generalisation is thus guaranteed throughout a structured
family yet rare in the ambient space.

The core of our theory is a lemma of \emph{carrier} criterion: a network that scores the
full grid with controlled margins at a known norm cost.  It bounds the
cost of every minimum-norm endpoint on every mask, and a finite atlas
fixed before the observations counts the harmful low-norm alternatives,
whose number is too small for any of them to evade a random mask.  The
criterion does not depend on the colouring, and abelian characters
provide carriers uniformly across group structures.  Therefore, we have proven that satisfying this lemma is a sufficient condition for all good. We conjecture that
the carrier criterion is also necessary: every asymptotically all-good
Latin family admits a full-grid carrier in the same norm-and-margin
regime.

Experiments at composite order $64$ compare the guaranteed families with
non-abelian and non-group candidates and show that coarse output factors
are recovered before exact labels.  These trained predictors probe
structure; they do not measure the phase of the whole endpoint set.
Whether a typical random colouring has some good endpoint, and whether
any family carries non-vanishing mixed mass, remain open.

\noindent\textbf{Contributions.}
\begin{itemize}[leftmargin=1.2em,itemsep=0pt,topsep=0pt,parsep=0pt,partopsep=0pt]
  \item A task-level formulation: Latin colourings of $K_{q,q}$ compared
        up to relabelling, each observed instance judged by its whole
        minimum-norm solution set.
  \item Two poles: every finite abelian group family is asymptotically
        all-good, while the uniform Latin ensemble defeats every learner
        at any fixed positive held-out accuracy.
  \item A colouring-independent carrier criterion for all-goodness,
        together with the conjecture that this criterion is necessary.
  \item Experiments in the same model class: relabelled addition groks like
        addition while looking like noise, and at composite order structured
        tasks recover coarse output factors before exact labels.
\end{itemize}

\textbf{Paper Organization.}
\Cref{sec:related} reviews prior work, \Cref{sec:model} defines the
model, \Cref{sec:overview} states and proves the two poles and the
abelian extension, \Cref{sec:open-region} reports the composite-order
experiments and states the conjecture, and \Cref{sec:conclusion}
concludes. Experiments that speak directly to the theory follow the theory itself.

%% file: section/03-related-work.tex
\section{Related Work}\label{sec:related}
Grokking is reported in non-neural models
\citep{mallinarEmergenceNonneuralModels2024}, deep networks
\citep{fanDeepGrokking2024,manirSystematicEmpiricalGrokking2026}, in-context and
multi-hop reasoning
\citep{heLearningGrokEmergence2024,wangGrokkedTransformersAre2024,abramov2025grokking},
and LLM pretraining \citep{li2025grokkingllm}; its onset needs a critical data
fraction \citep{zhuCriticalDataSize2024,huangUnifiedViewGrokking2024} and its
delay can be shortened
\citep{thilakSlingshotMechanismEmpirical2022a,leeGrokfastAcceleratedGrokking2024,
parkAccelerationGrokkingLearning2024,kuninGetRichQuick2024,xu2025letmegrok}.
Related approaches include representation learning, norm and optimisation
bias, and circuit competition.  \emph{Representation-level} work studies Fourier circuits and progress
measures \citep{nandaProgressMeasuresGrokking2022,gromov2023grokkingmodulararithmetic,furutaInterpretingGrokkedTransformers2024,
barakHiddenProgressDeep2022}, spectral dynamics
\citep{zhouRationaleFrequencyPerspective2024,yunisApproachingDeepLearning2024},
embedding bottlenecks \citep{alquabeh2025mechanistic}, architectural bias
\citep{singh2026explaining,yildirim2026geometric}, task symmetry
\citep{hwang2026intrinsic}, embedding uniformity coupled to the training
distribution \citep{guBeyondProgressMeasures2025}, feature-emergence scaling laws
\citep{tian2025provable}, and topological signatures \citep{tang2026topological}.
\emph{Norm- and optimisation-based} accounts appeal to implicit bias toward
max-margin, minimum-norm endpoints
\citep{soudry2018implicit,lyu2020gradient,neyshabur2015norm,Lyu2024dichotomy,
mohamadiWhyYouGrok2024,humayun2024deep}, to norm minimisation on the zero-loss
manifold \citep{musat2025geometry,notsawo2025grokking}, to the edge of linear
separability \citep{beck2025grokking}, to numerical stability
\citep{prieto2025grokking}, sharpness \citep{millerMeasuringSharpnessGrokking2024},
Jacobian alignment \citep{walker2025grokalign}, and quantitative delay laws
\citep{khanh2026norm,tan2026deciphering,ersoy2026noise}.  \emph{Competition}
accounts pit a slow generalising circuit against a fast memorising one
\citep{varmaExplainingGrokkingCircuit2023,merrill2023grokking}, linking grokking
to double descent and emergent abilities
\citep{davies2023unifying,huangUnifiedViewGrokking2024,liu2022towards}, with
phase-transition, statistical, basin-selection, and capacity variants
\citep{clauw2024information,carvalho2025grokking,cullen2026basin,
song2026model,luo2026stochastic}.
Provable results include generalisation guarantees for bounded-norm
quadratic networks on modular addition \citep{mohamadiWhyYouGrok2024}
and delay bounds in ridge regression \citep{xu2026grokgrokking}.
\citet{guBeyondProgressMeasures2025} separately analyse the accuracy
ceiling and embedding dynamics; \Cref{app:discussion-timescale} discusses
their ceiling and transition-time measurements.
Our setting fixes a ReLU architecture and a uniform mask law, compares
Latin colourings up to relabelling, and quantifies over every minimum-norm
response.  These choices specify the scope of our endpoint guarantees.

%% file: section/02-model.tex
\section{Latin-Square Tasks and the Endpoint Set}
\label{sec:model}

This section builds the objects the introduction used informally: the
task and its observed part, the endpoint set, its three phases, and the
relabelling invariance that makes the phase a property of the task rather
than of its presentation.

\subsection{Tasks, masks, and three coordinates}
\label{sec:model-task}

Let $q\ge2$, $\qset=\{0,\ldots,q-1\}$, $\Xq=\qset^2$, and $N=q^2$.  A
pairwise classification task is a colouring $C:\Xq\to\qset$; the set of all
colourings is $\Cq$.  A colouring is \emph{Latin} when every symbol occurs
exactly once in each row and each column; $\Lq\subset\Cq$ denotes the Latin
colourings of order $q$.  For every $q$, modular addition
$\Cadd(u,v)=u+v \pmod q$ is Latin and is the cyclic prototype of the
abelian all-good family proved below.

Fix $\alpha\in(0,1)$.  The observed cells form a uniformly random mask of
fixed size,
\begin{equation}\label{eq:mask}
  \Omega\sim\nu_q:=\Unif\bigl\{\Omega'\subseteq\Xq:\card{\Omega'}=\lfloor\alpha N\rfloor\bigr\},
  \qquad
  T=\Xq\setminus\Omega ;
\end{equation}
$T$ is the held-out set and
$\Dset(C,\Omega)=\{(u,v,C(u,v)):(u,v)\in\Omega\}$ the dataset.  All
asymptotic statements hold at fixed $\alpha$.

\begin{proposition}[Two coordinates, and a third at prime-power order]\label{thm:bipartite-model}
For every $q\ge2$, a label table $C\in\Cq$ and a $q$-edge-colouring of
$K_{q,q}$ (colour $C(u,v)$ on the edge $U_uV_v$) determine each other.  When
$q$ is a prime power and $\qset$ is identified with $\mathbb F_q$, both are
also equivalent to the unique reduced polynomial $P_C\in\mathbb F_q[X,Y]$ with
$\deg_X P_C,\deg_Y P_C\le q-1$ and $P_C(u,v)=C(u,v)$.  $C$ is Latin if and
only if the edge colouring is proper, \ie each colour class is a perfect
matching.
\end{proposition}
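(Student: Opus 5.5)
The proof has three parts, one for each equivalence; each is a bookkeeping argument, with a single genuine algebraic input in the polynomial part.

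\emph{Table and edge colouring.} Write the two vertex classes of $K_{q,q}$ as $\{U_u\}_{u\in\qset}$ and $\{V_v\}_{v\in\qset}$. The map $(u,v)\mapsto U_uV_v$ is a bijection from $\Xq$ onto the edge set. A function on cells is therefore the same data as a function on edges, and the correspondence $C\mapsto(U_uV_v\mapsto C(u,v))$ is a bijection between $\Cq$ and the $q$-edge-colourings.

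\emph{Table and reduced polynomial.} Assume $q$ is a prime power and identify $\qset$ with $\mathbb F_q$. Let $R$ be the space of polynomials with $\deg_X,\deg_Y\le q-1$; it has dimension $q^2$ over $\mathbb F_q$, as does the space of functions $\mathbb F_q^2\to\mathbb F_q$. The evaluation map $R\to\mathbb F_q^{\mathbb F_q^2}$ is linear, so it suffices to show it is surjective.
\begin{itemize}
\item For a point $a$, set $\delta_a(X)=1-(X-a)^{q-1}$. Since $t^{q-1}=1$ for every $t\neq0$ in $\mathbb F_q$, $\delta_a$ equals $1$ at $a$ and $0$ elsewhere, and it has degree $q-1$.
\item Then $P_C=\sum_{u,v}C(u,v)\,\delta_u(X)\delta_v(Y)$ lies in $R$ and interpolates $C$.
\end{itemize}
Surjectivity plus equality of dimensions gives injectivity, which is the uniqueness claim. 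Composing this with the first bijection gives the three-way equivalence. This is the only step that uses more than unwinding definitions, and the interpolation identity is the key fact.

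\emph{Latin versus proper.} Fix a colour $c$ and let $M_c=\{U_uV_v: C(u,v)=c\}$.
\begin{itemize}
\item The edges of $M_c$ at $U_u$ correspond to the occurrences of $c$ in row $u$, and those at $V_v$ to the occurrences of $c$ in column $v$.
\item Suppose $C$ is Latin. Then each vertex has exactly one edge in $M_c$, so $M_c$ is a perfect matching. Equivalently, no vertex meets two edges of the same colour, so the colouring is proper.
\item Conversely, suppose the colouring is proper. At each vertex the $q$ incident edges then carry $q$ distinct colours out of $q$ available. So every colour appears exactly once at each vertex, which is the Latin condition; it also makes each $M_c$ a perfect matching.
\end{itemize}
The only subtlety is this counting step: properness alone gives "at most once", and the pigeonhole argument with $q$ colours upgrades it to "exactly once". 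No step is a real obstacle.
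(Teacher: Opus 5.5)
Your proposal is correct and follows essentially the same route as the paper's proof. Both use the cell--edge bijection, the indicator polynomials $1-(X-a)^{q-1}$ to get surjectivity of evaluation and hence uniqueness from the matching dimension $q^2$, and the count over $q$ colours that upgrades properness to the Latin condition and to each colour class being a perfect matching.
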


\subsection{The minimum-norm endpoint set}
\label{sec:model-endpoint}

The model is a bias-free one-hidden-layer ReLU network with a separate
embedding table for each input, $E_U,E_V:\qset\to\Rbb^{d_q}$, a hidden
layer $W\in\Rbb^{h_q\times d_q}$, and a head $A\in\Rbb^{q\times h_q}$, where
$d_q=2q+1$ and $h_q=q^2$; the width allows one hidden unit per cell, which
is what feasibility needs, and the embedding dimension exceeds the rank of
any reduced representation.  Writing
$\theta=(E_U,E_V,W,A)$,
\begin{equation}\label{eq:network}
  F_\theta(u,v)=A\,\relu\!\bigl(W\,(E_U(u)+E_V(v))\bigr)\in\Rbb^q,
  \;
  \lVert\theta\rVert^2
  =\frob{E_U}^2+\frob{E_V}^2+\frob{W}^2+\frob{A}^2 .
\end{equation}
As a function class the pair embedding contains the common design that
shares one table $E$ and concatenates $[E(u);E(v)]$, since
$W[E(u);E(v)]=W_1E(u)+W_2E(v)$; the norms differ, so the endpoint sets need
not coincide.  The
\emph{endpoint set} of an observed dataset is the set of global minimisers of
the parameter norm under unit margins on the observed cells,
\begin{equation}\label{eq:minnorm}
  \Sq(\Dset)
  =\argmin_{\theta}\ \tfrac12\lVert\theta\rVert^2
  \quad\text{s.t.}\quad
  F_{\theta,y}(u,v)\ \ge\ 1+\max_{a\ne y}F_{\theta,a}(u,v)
  \qquad\bigl((u,v,y)\in\Dset\bigr).
\end{equation}
Max-margin and implicit-bias results for homogeneous networks motivate this
as the idealised late-training object
\citep{soudry2018implicit,lyu2020gradient,neyshabur2015norm}.  $\Sq(\Dset)$
is a set: all its members meet the margins on $\Omega$, and they may predict
differently on $T$.  \Cref{app:bias} treats the extension to
$L_2$-regularised biases.

Three of the four parameter blocks can be folded into one matrix, which
is the form the proofs work with.  Assemble
$\Emb=[E_U(0),\ldots,E_U(q-1),E_V(0),\ldots,E_V(q-1)]\in\Rbb^{d_q\times2q}$,
put $Z=W\Emb$ and $\xicell_{(u,v)}=e_u+e_{q+v}\in\Rbb^{2q}$, so that
$F_\theta(x)=A\relu(Z\xicell_x)$.  Call $(A,Z)$ a \emph{reduced
representation} and define
\begin{equation}\label{eq:Jred}
\begin{aligned}
  \Jred(F)&=\inf\bigl\{\frob{A}\,\nucnorm{Z}:\ F=A\relu(Z\xicell_{\cdot}),\
  \text{any finite width}\bigr\},\\
  \Jstar&=\min\bigl\{\frob{A}\,\nucnorm{Z}:\ Z\in\Rbb^{h_q\times2q},\
  A\relu(Z\xicell_{\cdot})\text{ meets the margins of \eqref{eq:minnorm}}\bigr\}.
\end{aligned}
\end{equation}

\begin{lemma}[Reduced normalisation]\label{lem:reduced-normalization}
The optimal value of \eqref{eq:minnorm} is $\Vstar=\tfrac32\Jstar^{2/3}$, and
every global minimiser is balanced:
$\frob{A}^2=\frob{W}^2=\frob{\Emb}^2=\Jstar^{2/3}$ and
$\frob A\nucnorm{W\Emb}=\Jstar$.
\end{lemma}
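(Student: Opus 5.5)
The plan is to compare the three-block norm with the reduced cost by rescaling inside the parameters. The problem \eqref{eq:minnorm} depends on $(E_U,E_V)$ only through $\Emb$, and $\frob{E_U}^2+\frob{E_V}^2=\frob{\Emb}^2$. So we minimise $\tfrac12(\frob{A}^2+\frob{W}^2+\frob{\Emb}^2)$ over $(A,W,\Emb)$ subject to the margins of $A\relu(W\Emb\xicell_x)$ on $\Omega$.

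\emph{Lower bound.} For a fixed feasible $\theta$ we have $\nucnorm{W\Emb}\le\frob{W}\frob{\Emb}$: this is the variational characterisation $\nucnorm{Z}=\min_{Z=BC}\frob B\frob C$, or Cauchy--Schwarz applied to the singular values. AM--GM on three terms then gives
\[
\tfrac12\lVert\theta\rVert^2\ \ge\ \tfrac32\bigl(\frob A\,\frob W\,\frob\Emb\bigr)^{2/3}\ \ge\ \tfrac32\bigl(\frob A\,\nucnorm{W\Emb}\bigr)^{2/3}\ \ge\ \tfrac32\Jstar^{2/3}.
\]
The last step uses that $(A,W\Emb)$ is a feasible reduced representation of width $h_q$, so the $\min$ defining $\Jstar$ applies.

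\emph{Upper bound.} Take a minimiser $(A,Z)$ of $\Jstar$ with $Z\in\Rbb^{h_q\times2q}$. Existence needs a short argument. The objective is invariant under $(A,Z)\mapsto(cA,Z/c)$ with $c>0$, so we may normalise $\nucnorm Z=1$, which puts $Z$ in a compact set. The margin constraints force $\frob A$ to be bounded below, and a minimising sequence can be taken with $\frob A$ bounded above. The margins are closed conditions, so a limit point is feasible.

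Write $Z=U\Sigma V^\top$ with rank $r\le 2q$. Set $W=U\Sigma^{1/2}$ and $\Emb=\Sigma^{1/2}V^\top$, padded with zero rows up to dimension $d_q=2q+1\ge r$. This gives $\frob W^2=\frob\Emb^2=\nucnorm Z$. Now use positive homogeneity of $\relu$. Scaling $A\to aA$, $W\to bW$, $\Emb\to c\Emb$ with $abc=1$ leaves $F_\theta$ unchanged. We choose $a,b,c$ so that all three squared norms equal $(\frob A\nucnorm Z)^{2/3}=\Jstar^{2/3}$. The resulting $\theta$ is feasible and attains $\tfrac12\cdot3\Jstar^{2/3}$, which proves $\Vstar=\tfrac32\Jstar^{2/3}$.

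\emph{Balance of minimisers.} For any global minimiser, all inequalities in the chain above must be equalities. Equality in AM--GM forces $\frob A^2=\frob W^2=\frob\Emb^2=:s$. Equality in the last two steps gives $\frob A\nucnorm{W\Emb}=\Jstar$. From $s^{3/2}=\frob A\frob W\frob\Emb$ and the equality case, $s^{3/2}=\Jstar$, so $s=\Jstar^{2/3}$.

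The main obstacle is the existence and attainment claim for $\Jstar$, in particular keeping the optimal width at most $h_q=q^2$ while the reduced problem ranges over widths. I would handle this by taking the width-$h_q$ minimum literally, as the definition of $\Jstar$ does. Everything else is the standard factorisation identity for the nuclear norm together with rescaling.
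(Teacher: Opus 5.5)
Your proof is correct and uses the same two ingredients as the paper's: the nuclear-norm factorisation identity (you use its product form $\nucnorm{W\Emb}\le\frob W\frob\Emb$) and positive homogeneity of $\relu$, with three-term AM--GM replacing the paper's minimisation of $\phi(t)=t\nucnorm Z+\frob A^2/(2t^2)$. Your equality-case reading of the chain proves that \emph{every} global minimiser is balanced, and your compactness argument supplies attainment of $\Jstar$; the paper's proof leaves both points implicit.
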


In words: the nuclear norm of $Z$ is the cost of its cheapest factorisation
into a hidden layer and an embedding, and the scale freedom of the ReLU then
balances the head against the rest, so an endpoint is simply a score table
whose product $\frob A\nucnorm Z$ is as small as the observations allow.
This product is the one quantity the main proof tracks.

\subsection{Phases of the endpoint set and the phase profile}
\label{sec:model-phases}

Fix $C$ and $\Omega$ and write $\Sq=\Sq(\Dset(C,\Omega))$.  Strict held-out
accuracy counts a score tie as an error, and the best and worst endpoint
accuracies are its extrema over the endpoint set:
\begin{equation}\label{eq:acc}
  \Acc_T(\theta;C)=\frac1{\card T}\sum_{x\in T}
  \mathbf 1\Bigl\{F_{\theta,C(x)}(x)>\max_{a\ne C(x)}F_{\theta,a}(x)\Bigr\},
  \;
  \Aplus=\max_{\Sq}\Acc_T,\ \
  \Aminus=\min_{\Sq}\Acc_T .
\end{equation}
For a threshold $\tau\in(0,1)$ the endpoint set is in exactly one of three
phases, each a quantifier over $\Sq$:
\begin{equation}\label{eq:phase-quantifiers}
  \textsf{all-good}:\ \Aminus\ge\tau,\qquad
  \textsf{mixed}:\ \Aplus\ge\tau>\Aminus,\qquad
  \textsf{no-good}:\ \Aplus<\tau .
\end{equation}
Write $\Gall,\Gmix,\Gnone$ for their indicators and $\Gex=\Gall+\Gmix$ for
``some endpoint is good''.  Averaging over the mask, for
$\star\in\{\forall,\mathrm{mix},\varnothing,\exists\}$, gives the phase
probabilities and the \emph{phase profile},
\begin{equation}\label{eq:phase-probabilities}
  p^\star_{q,\tau}(C)=\Exp_{\Omega\sim\nu_q}\bigl[G^\star_{q,\tau}(C,\Omega)\bigr],
  \qquad
  \profile(C)=\bigl(\pall(C),\,\pmix(C),\,\pnone(C)\bigr).
\end{equation}

\begin{proposition}[Phase decomposition]\label{thm:phase-profile}
$\Sq(\Dset)$ is non-empty for every $C\in\Cq$ and $\Omega\subseteq\Xq$.  For
every $\tau\in(0,1)$ the three phases in \eqref{eq:phase-quantifiers} are
mutually exclusive and exhaustive, so $\profile(C)$ lies in the simplex
$\simplex=\{(a,b,c)\in[0,1]^3:a+b+c=1\}$.
\end{proposition}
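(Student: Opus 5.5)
The statement has two parts: non-emptiness of $\Sq(\Dset)$, and the fact that the three phases partition the outcomes. The second part is elementary once the first is in hand. The real work is showing that the minimisation \eqref{eq:minnorm} is feasible and that its infimum is attained.

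\textbf{Feasibility.} The plan is to use the width $h_q=q^2$ to build a memorising network with one hidden unit per cell $(u,v)\in\Xq$. Take $d_q=2q+1\ge 2q$ and embed the $2q$ vertices as standard basis vectors, so that $E_U(u)=e_u$, $E_V(v)=e_{q+v}$, and $E_U(u)+E_V(v)=\xicell_{(u,v)}$ (padded with zeros). Give the hidden unit for cell $(u',v')$ the weight row $e_{u'}+e_{q+v'}-e_{\text{others}}$; concretely, set weight $1$ on coordinates $u'$ and $q+v'$ and weight $0$ elsewhere, with a shift realised through the ReLU threshold. Since the network is bias-free, I will use the pre-activation $\langle w,\xicell_x\rangle$. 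With weights $1$ on $u'$ and $q+v'$, this equals $2$ on the cell $(u',v')$ itself, $1$ on cells sharing exactly one coordinate with it, and $0$ otherwise.

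This fails to isolate the cell, so I would use the spare coordinate or negative weights instead. Set weight $1$ on $u'$, weight $1$ on $q+v'$, and weight $-1$ on every other $U$-coordinate and every other $V$-coordinate. The pre-activation is then $2$ on $(u',v')$, $0$ on cells sharing one coordinate, and $-2$ otherwise. After ReLU, unit $(u',v')$ therefore fires exactly on its own cell, with value $2$.

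Let the head $A$ put weight $1$ in row $C(u',v')$ of column $(u',v')$, restricted to cells in $\Omega$. Then on an observed cell the correct score is $2$ and every other score is $0$, so the unit margins hold. This shows the feasible set is non-empty.

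\textbf{Attainment.} The feasible set is closed, since $F_\theta$ is continuous in $\theta$ and the constraints are non-strict inequalities. The objective $\tfrac12\lVert\theta\rVert^2$ is coercive. Intersecting the feasible set with the sublevel set through the feasible point above gives a non-empty compact set, and a continuous function attains its minimum there. Hence $\Sq(\Dset)\neq\varnothing$. The case $\Omega=\varnothing$ is trivial, since then $\theta=0$ is the minimiser. As a consistency check, \Cref{lem:reduced-normalization} already presupposes that a minimiser exists; the argument here supplies that.

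\textbf{Phase decomposition.} Once $\Sq$ is non-empty, I need $\Aplus$ and $\Aminus$ to be well defined as a maximum and a minimum. $\Acc_T$ takes values in the finite set $\{k/\card T\}$ (or is treated as a constant if $T=\varnothing$), so both extrema over the non-empty set $\Sq$ are attained, and $\Aminus\le\Aplus$. Fix $\tau$. Comparing $\tau$ with the pair $(\Aminus,\Aplus)$, exactly one of three cases holds:
\begin{itemize}
\item $\tau\le\Aminus$, which is the all-good phase;
\item $\Aminus<\tau\le\Aplus$, which is the mixed phase;
\item $\Aplus<\tau$, which is the no-good phase.
\end{itemize}
These cases are mutually exclusive because $\Aminus\le\Aplus$, and together they cover every possibility. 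So the indicators satisfy $\Gall+\Gmix+\Gnone=1$ pointwise. Taking expectations over $\Omega\sim\nu_q$ gives nonnegative entries summing to one, so $\profile(C)\in\simplex$.

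The only delicate step is the feasibility construction, which requires isolating cells without biases. The negative-weight trick handles it, and it uses only $2q\le d_q$ embedding coordinates and $q^2=h_q$ hidden units.
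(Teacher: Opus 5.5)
Your proof is correct and follows the paper's route: a memorising network with one hidden unit per cell shows feasibility, closedness plus coercivity gives attainment, and comparing $\tau$ with $\Aminus\le\Aplus$ gives the decomposition. The only difference is cosmetic. The paper isolates each cell by using the spare coordinate $e_{2q}$ as a constant offset (preactivation $\tfrac12$ on the cell, at most $-\tfrac12$ elsewhere), whereas your $\pm1$ weights use the fact that each $\xicell_x$ has exactly one active row coordinate and one active column coordinate; you should also delete the abandoned first attempt, since the bias-free model has no ReLU threshold to shift.
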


The threshold only decides where an instance falls; the extremes
$\Aminus\le\Aplus$ are properties of the set itself.  A fixed instance is
all-good for thresholds up to $\Aminus$, mixed between $\Aminus$ and
$\Aplus$, and no-good above, so it is mixed at \emph{some} threshold exactly
when different endpoints achieve different accuracies
(\Cref{app:finite-criterion}).

\subsection{Invariance: the profile lives on colouring orbits}
\label{sec:model-invariance}

Permuting rows, columns, or symbols, or exchanging the two inputs, changes
how a table is written and nothing about the learning problem.  The formal
name for this equivalence is colouring isomorphism (for Latin squares,
isotopy together with transposition); the paper calls it relabelling.

\begin{definition}[Colouring isomorphism]\label{def:quotient}
$C\sim_q C'$ when there are $\rho,\kappa,\gamma\in S_q$ with
$C'(u,v)=\gamma(C(\rho(u),\kappa(v)))$ for all $u,v$, or
$C'(u,v)=\gamma(C(\kappa(v),\rho(u)))$ for all $u,v$.  The quotients are
$\Qallq=\Cq/\!\sim_q$ and $\Qq=\Lq/\!\sim_q$.
\end{definition}

\begin{theorem}[Quotient invariance]\label{thm:quotient-invariance}
Let $C\sim_q C'$.  There are bijections of fixed-size masks and of endpoint
sets, preserving the parameter norm and the mask law, under which
corresponding endpoints have the same logits, the same truth--rival margins,
and the same strict correctness at every cell.  Consequently
$\pall,\pmix,\pnone,\pex$ and $\profile$ are well defined on $\Qallq$, hence
on $\Qq$, for every $\tau\in(0,1)$.
\end{theorem}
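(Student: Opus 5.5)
It suffices to treat two generators of $\sim_q$ separately and then compose: the isotopy $C'(u,v)=\gamma(C(\rho(u),\kappa(v)))$ and the transposition $C'(u,v)=C(v,u)$. Every relabelling in \Cref{def:quotient} is a composite of these. Bijections that preserve norm, mask law, logits and margins are closed under composition and inverses, so this reduction costs nothing. For each generator there is a cell map $\pi:\Xq\to\Xq$ with $C'(x)=\gamma(C(\pi(x)))$. For the isotopy, $\pi(u,v)=(\rho(u),\kappa(v))$. For the transposition, $\pi(u,v)=(v,u)$ and $\gamma=\mathrm{id}$. In both cases $\pi$ is a bijection of $\Xq$.

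\textbf{Masks.} Send $\Omega'\mapsto\Omega=\pi(\Omega')$. This is a bijection of the $\lfloor\alpha N\rfloor$-subsets, so it pushes $\nu_q$ to itself. It also maps held-out sets to held-out sets, $T=\pi(T')$, and datasets to datasets: $(x,C'(x))\in\Dset(C',\Omega')$ corresponds to $(\pi(x),C(\pi(x)))\in\Dset(C,\Omega)$ after applying $\gamma^{-1}$ to the label.

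\textbf{Parameters, isotopy case.} Given $\theta=(E_U,E_V,W,A)$, define $\theta'$ by
\[
E_U'(u)=E_U(\rho(u)),\qquad E_V'(v)=E_V(\kappa(v)),\qquad W'=W,\qquad A'=P_\gamma^{-1}A,
\]
where $P_\gamma$ is the permutation matrix with $(P_\gamma s)_{\gamma(c)}=s_c$. Then
\[
F_{\theta',a}(u,v)=F_{\theta,\gamma^{-1}(a)}(\rho(u),\kappa(v)),
\]
so the logit vector at $x$ under $\theta'$ is the $\gamma$-relabelled logit vector at $\pi(x)$ under $\theta$. Consequently the truth logit $F_{\theta',C'(x)}(x)$ equals $F_{\theta,C(\pi(x))}(\pi(x))$, and the rival maximum over $a\neq C'(x)$ equals the rival maximum over $b\neq C(\pi(x))$. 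Every margin, and strict correctness, is therefore identical at corresponding cells.

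\textbf{Parameters, transposition case.} Set $E_U'=E_V$ and $E_V'=E_U$, keeping $W,A$ fixed. This works because $W(E_U(u)+E_V(v))$ is symmetric under exchanging the two tables together with their arguments. The architecture's use of a sum rather than a concatenation is exactly what makes this step free. Row permutations, swapping tables and permuting rows of $A$ leave every Frobenius norm unchanged, so $\lVert\theta'\rVert=\lVert\theta\rVert$. Each map is invertible, with $\rho^{-1},\kappa^{-1},\gamma^{-1}$ in the isotopy case and the swap being an involution.

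\textbf{Endpoint sets and phase probabilities.} The map $\theta\mapsto\theta'$ is a norm-preserving bijection of parameter space. It carries the feasible set of \eqref{eq:minnorm} for $\Dset(C,\Omega)$ onto the feasible set for $\Dset(C',\Omega')$, since the margin constraints correspond cell by cell. It therefore carries minimisers to minimisers, so $\Sq(\Dset(C',\Omega'))$ is the image of $\Sq(\Dset(C,\Omega))$, with non-emptiness from \Cref{thm:phase-profile}. Strict correctness is preserved cell by cell and $\pi$ bijects $T'$ onto $T$, so $\Acc_{T'}(\theta';C')=\Acc_T(\theta;C)$. Hence $\Aplus$, $\Aminus$ and every indicator $G^\star_{q,\tau}$ agree on corresponding pairs $(C,\Omega)$ and $(C',\Omega')$. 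Since $\Omega\mapsto\Omega'$ preserves $\nu_q$, the expectations $p^\star_{q,\tau}$ coincide. This makes $\profile$ well defined on $\Qallq$, and by restriction on $\Qq$, because $\Lq$ is closed under $\sim_q$.

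\textbf{Main obstacle.} The only delicate point is bookkeeping. One must check that the symbol permutation acts on $A$'s rows in the direction that matches $C'=\gamma\circ C\circ\pi$ rather than its inverse. One must also handle the transposed form of \Cref{def:quotient}, which composes a swap with the isotopy. I would fix conventions by verifying the single identity for $F_{\theta',a}$ displayed above and derive everything else from it.
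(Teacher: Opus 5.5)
Your argument is the paper's. The paper also sends $\Omega'$ to $\{(\rho(u),\kappa(v)):(u,v)\in\Omega'\}$, reindexes the two embedding tables by $\rho,\kappa$, keeps $W$, and permutes the rows of $A$ by $\gamma$. It then reads the bijection of feasible sets, hence of minimisers, off the transported score identity. The only organisational difference is immaterial: the paper handles the transposed form of the definition in one step, with $E_U'(u)=E_V(\rho(u))$ and $E_V'(v)=E_U(\kappa(v))$, rather than composing a pure transposition with an isotopy.

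There is, however, a slip at exactly the point you flagged as delicate.

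\begin{itemize}
\item With your convention $(P_\gamma s)_{\gamma(c)}=s_c$, one has $(P_\gamma^{-1}s)_a=s_{\gamma(a)}$.
\item So $A'=P_\gamma^{-1}A$ gives $F_{\theta',a}(x)=F_{\theta,\gamma(a)}(\pi(x))$, not the identity $F_{\theta',a}(x)=F_{\theta,\gamma^{-1}(a)}(\pi(x))$ you display.
\item The truth logit at $x$ would then be $F_{\theta',C'(x)}(x)=F_{\theta,\gamma^{2}(C(\pi(x)))}(\pi(x))$.
\item Hence the margin constraints for $\Dset(C',\Omega')$ would not follow from those for $\Dset(C,\Omega)$ unless $\gamma^2=\mathrm{id}$.
\end{itemize}

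The correct choice is $A'=P_\gamma A$, i.e.\ $A'_{\gamma(c),:}=A_{c,:}$. This is the paper's choice, and it yields your displayed identity. With that one correction, every subsequent step of your proposal goes through as written.
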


The proof is short: relabelling the task amounts to permuting the entries
of the two embedding tables and the rows of the head, and swapping the two
tables under transposition, and each of these moves preserves the norm, the
margin constraints, and cellwise correctness.  The consequence is that the
profile becomes a function on orbits, the \emph{classification map}
\begin{equation}\label{eq:informal-classification-map}
  \Qq\longrightarrow\simplex,\qquad [C]\longmapsto\profile([C]),
\end{equation}
which is the object classified in the rest of the paper.  One consequence
deserves to be seen before the full proof (\Cref{app:struct-proofs}):
the support size and degree of the polynomial coordinate are not orbit
invariants, so every relabelled representative $\gamma(\rho(u)+\kappa(v))$
of the addition orbit has the profile of $\Cadd$, however dense its
polynomial.

\begin{wrapfigure}{r}{0.6\linewidth}
  \vspace{-11pt}
  \centering
  \includegraphics[width=\linewidth]{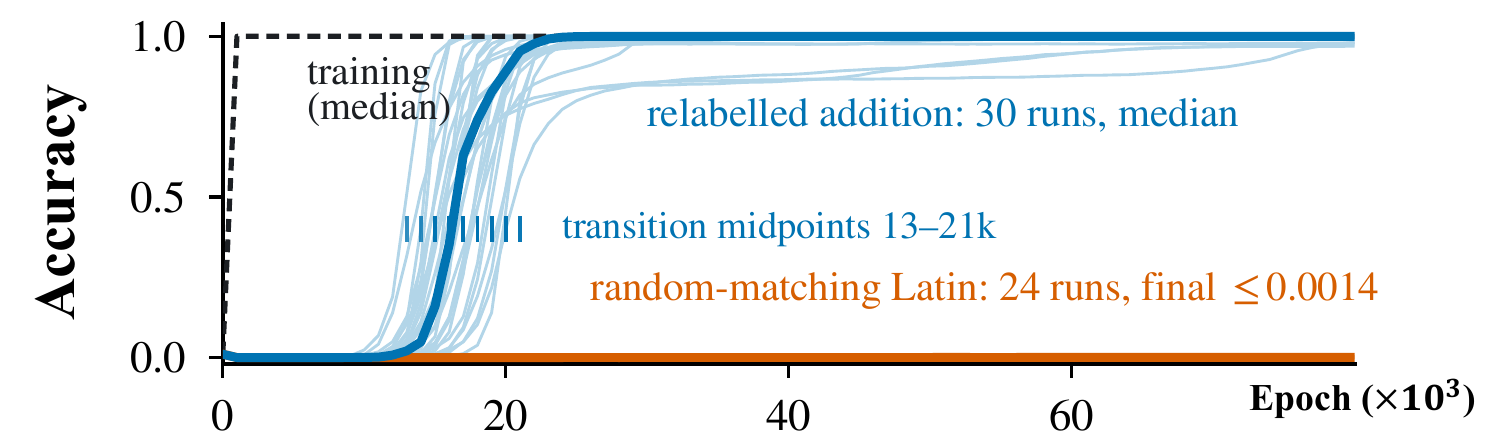}
  \vspace{-10pt}
  \caption{\textbf{Relabelled addition groks like addition.}}
  \label{fig:orbit-representation}
  \vspace{-5pt}
\end{wrapfigure}
\textbf{Dense representatives of addition grok.}
\Cref{fig:orbit-representation} is the invariance theorem seen from the
data side.  Ten random relabellings $(\rho,\kappa,\gamma)$ of $\Cadd$ at
$q=97$ give tables whose reduced polynomials have total degree $190$ and
$97\%$ coefficient support, matching the random-matching Latin controls in
both statistics, while their orbit is still that of $\Cadd$.  All $30$ runs memorise and then generalise, ending at
median held-out accuracy $1.000$; the $24$ controls fit their training
cells and end below $0.0014$ held-out.  Reusing a symbol already seen in
the row or column of a cell is always wrong by the Latin property, so
below-chance accuracy is possible.  Within the
orbit final accuracies cluster near one but the delays vary with the optimisation
seed: a random-effects decomposition of the transition midpoint attributes
about $93\%$ of its variance to it (\Cref{app:orbit-details}).  At this
order, these runs support a stable final outcome with variable delay;
they do not certify membership in $\Sq$ or measure its phase.  The next
section states the endpoint guarantees.

%% file: section/03-overview.tex
\section{Two Poles of the Latin Quotient}
\label{sec:overview}
\textbf{Informal Results.}
The objects of the previous section turn the question of the introduction
into a map: each relabelling orbit of Latin colourings is sent to its phase
profile, a point of the triangle whose corners are all-good, mixed, and
no-good.  Which orbits sit at the all-good corner?  Which carry mixed
mass?  And where does the orbit of a uniformly random Latin colouring sit?  Our results answer the
first for every abelian group family and the third in its all-good coordinate, with the
observation fraction and the success threshold held fixed while the order
grows.
\begin{theorem*}[Two poles of the Latin quotient]
Fix an observation fraction and a success threshold and let the order tend
to infinity.
\begin{enumerate}[label=(\roman*),leftmargin=1.8em,itemsep=0pt,topsep=0pt,parsep=0pt,partopsep=0pt]
  \item \emph{Abelian group families are all-good:} along any sequence
        of finite abelian groups of growing order, the worst minimum-norm
        endpoint accuracy converges to one in probability, so the profile
        converges to the all-good corner.
  \item \emph{Uniform Latin squares are unlearnable from the observations:}
        for a uniformly random Latin colouring, no learner whose prediction
        depends only on the observed cells reaches the success threshold
        with probability bounded away from zero.  Consequently orbits with
        a non-negligible all-good component have vanishing mass under this law.
\end{enumerate}
\end{theorem*}
Part~(i) concerns every endpoint of each task in a structured family;
part~(ii) concerns every learner under a random task law.  The latter
controls the all-good coordinate, leaving open whether a typical
colouring has some good endpoint.

%% file: section/04-main-theorem.tex
\subsection{The main results}
\label{sec:main}

We prove cyclic addition first and obtain every abelian group from it as
a corollary, rather than proving the general statement directly: both
proofs share one method, a carrier and a notch; the notch is insensitive
to the colouring, and only the characters that build the carrier change
from $\Zbb_q$ to a general abelian group.  The cyclic case thus shows the
whole argument in its most concrete form.  Part~(ii) needs two pieces
of vocabulary: a task ensemble, since it concerns a random Latin square,
and a learner, since it quantifies over everything that reads the
observations.  A \emph{task ensemble} is a sequence $\mathsf M=(\mu_q)_q$ of distributions
on $\Lq$; we use $\Madd=(\delta_{\Cadd})_q$ and $\Munif=(\Unif(\Lq))_q$.  A
\emph{learner} $\mathsf A_q$ is a deterministic map from observed datasets
to score tables on $\Xq$, that is, to functions $\Xq\to\Rbb^q$ read through
$\Acc_T$ as in \eqref{eq:acc} (a randomised learner is covered by
conditioning on its coins).  With $C\sim\mu_q$ and $\Omega\sim\nu_q$ independent,
\begin{equation}\label{eq:selector-probability}
  P^\star_{q,\tau}(\mathsf M)=\Exp_{C\sim\mu_q}\bigl[p^\star_{q,\tau}(C)\bigr],
  \qquad
  P^{\mathsf A}_{q,\tau}(\mathsf M)
  =\Prob\bigl[\Acc_T(\mathsf A_q(\Dset(C,\Omega));C)\ge\tau\bigr].
\end{equation}
A \emph{selector} is a learner with $\mathsf A_q(\Dset)=F_\theta$ for some
$\theta\in\Sq(\Dset)$; since $\Sq(\Dset)$ depends on the observations alone,
any choice function on it is a learner, and one exists because the
endpoint set is never empty.

\begin{theorem}[Cyclic prototype and uniform obstruction]\label{thm:main}
Fix $\alpha,\tau\in(0,1)$ and let $q\to\infty$.
\begin{enumerate}[label=(\roman*),leftmargin=2.2em,itemsep=2pt,topsep=2pt]
  \item\label{it:main-add} \textup{(Addition is all-good.)}  The worst
        endpoint accuracy converges to one in probability over the mask,
        \begin{equation}\label{eq:addition-allgood}
          \Aminus(\Cadd,\Omega)\xrightarrow{\ \mathrm P\ }1,
          \quad\text{hence}\quad
          \Pall(\Madd)\to1
          \quad\text{and}\quad
          \profile([\Cadd])\to(1,0,0).
        \end{equation}
  \item\label{it:main-unif} \textup{(Uniform Latin squares defeat every learner.)}
        Uniformly over all learners $\mathsf A_q$,
        \begin{equation}\label{eq:separation}
          P^{\mathsf A}_{q,\tau}(\Munif)\ \le\ (2e^2)^{N}\,q^{-\tau(1-\alpha)N}
          \ \longrightarrow\ 0,
        \end{equation}
        hence $\Pall(\Munif)\to0$ and
        $\Prob_{C\sim\Unif(\Lq)}\bigl[\pall(C)\ge\varepsilon\bigr]\to0$ for
        every $\varepsilon>0$.
\end{enumerate}
\end{theorem}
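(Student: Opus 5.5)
The two parts are independent, so I will prove them by different methods.

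\textbf{Part (ii).} This part is purely information-theoretic and does not use the network at all. Fix a learner $\mathsf A_q$ and condition on $\Omega$ and on the observed labels. The learner's score table is then a fixed function, so it determines at most one strictly predicted colour $\hat c(x)$ on each held-out cell. If the learner succeeds, the true $C$ agrees with $\hat c$ on at least $k=\lceil\tau\card T\rceil$ cells of $T$. I will therefore bound
\[
\Prob[\text{success}]\le \Exp_\Omega\sum_{S\subseteq T,\ \card S=k}\Prob_{C}\bigl[C|_S=\hat c|_S,\ C\text{ agrees with }\Dset\bigr]\Big/\Prob_C[C\text{ agrees with }\Dset].
\]
Equivalently, I count pairs (a Latin square $L$, a set $S$ of size $k$) such that $\hat c$, which is a function of $L|_\Omega$, matches $L$ on $S$. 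Such a square is determined by three pieces of data: its restriction to $\Omega$, the set $S$ (at most $2^{N}$ choices), and its values on $T\setminus S$ (at most $q^{\card T-k}$ choices). Hence $\#\{\text{successes}\}\le \card{\Lq}\cdot 2^N q^{-k}\cdot q^{\card T}/(\text{typical fibre size})$.

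To make this rigorous without knowing fibre sizes, I will use the van der Waerden/Egorychev and Brégman bounds, which give $\card{\Lq}=\bigl((1+o(1))q/e^2\bigr)^{N}$, and compare with the crude upper bound $q^{N}$ on all tables. Concretely, $\Prob[\text{success}]\le \card{\Lq}^{-1}\#\{(L,S)\}$, and
\[
\#\{(L,S)\}\le \sum_\Omega\ \sum_{L|_\Omega}\ 2^{N}q^{\card T-k}\le 2^N q^{\card\Omega}q^{\card T-k}\cdot(\text{mask averaging}).
\]
This yields the ratio $2^N q^{N-k}/(q/e^2)^{N}=(2e^2)^Nq^{-k}$ with $k\ge\tau(1-\alpha)N$, which is \eqref{eq:separation}. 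The ratio tends to zero because $q^{-\tau(1-\alpha)N}$ beats any exponential in $N$. Applying the bound to a selector gives $\Pall(\Munif)\le P^{\exists}\to0$, and Markov's inequality gives $\Prob[\pall(C)\ge\varepsilon]\le\Pall/\varepsilon\to0$.

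\textbf{Part (i).} This part goes through the carrier criterion stated in the introduction, using \Cref{lem:reduced-normalization}, so that every endpoint is a score table minimising $\frob A\nucnorm Z$.

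First, the carrier. Using Fourier characters of $\Zbb_q$, I will build an explicit full-grid reduced representation $(A_0,Z_0)$ that meets unit margins on every cell of $\Xq$ with cost $J_0$. By restriction, $J_0$ bounds $\Jstar$ for every mask.

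Second, a lower bound on bad endpoints. I will show that any reduced representation whose held-out accuracy falls below $1-\varepsilon$ must either pay cost exceeding $J_0$, or belong to one of a finite atlas of low-norm alternatives fixed before the mask. To build the atlas, I will discretise the set $\{\frob A\nucnorm Z\le J_0\}$ by a net of prediction patterns at resolution comparable to the margin, with size $e^{o(N)}$, using covering numbers of nuclear-norm balls in $\Rbb^{h\times 2q}$; these have effective dimension $O(q\,\text{rank})$.

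Third, a union bound. Each fixed pattern that errs on $\varepsilon N$ cells avoids a uniform mask of size $\alpha N$ with probability at most $(1-\alpha)^{\varepsilon N}$. The union bound over the atlas then shows that with probability tending to one, no low-cost bad pattern survives the margins on $\Omega$. Hence every endpoint has accuracy at least $1-\varepsilon$ on $T$, so $\Aminus\to1$ in probability. The profile statement follows from \Cref{thm:quotient-invariance}.

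\textbf{Main obstacle.} The hard step is the atlas in part (i): showing that the relevant low-norm alternatives are $e^{o(N)}$ many despite ReLU nonlinearity. I would first try to bound the ReLU output by a Lipschitz argument in $Z$, and cover $Z$ in operator norm at the scale set by the margin gap $J_0$ versus $J_0(1+\delta)$ (the ``notch'').
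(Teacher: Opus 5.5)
\textbf{Part (ii).} This is the paper's counting argument. The injection $(L,S)\mapsto(L|_\Omega,S,L|_{T\setminus S})$ is the Hamming-ball count of the counting lemma, and you get the same ratio $(2e^2)^Nq^{-k}$. Two repairs are needed.
\begin{itemize}
\item The displayed inequality needs the non-asymptotic bound $\card{\Lq}\ge(q!)^{2q}/q^{q^2}\ge(q/e^2)^N$, which follows from van der Waerden applied row by row. The asymptotic $\bigl((1+o(1))q/e^2\bigr)^N$ gives it only up to a $(1+o(1))^N$ factor: enough for the limit, not for the stated bound. Br\'egman is an upper bound you do not need.
\item The last step must read $\Pall(\Munif)\le P^{\mathsf A}_{q,\tau}(\Munif)$ for a selector $\mathsf A$, since every selector succeeds on an all-good instance. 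Writing $\Pall\le\Pex\to0$ claims more than the counting gives. ``Some endpoint is good'' is not the success event of any observation-measurable learner, and the paper leaves the limit of $\Pex(\Munif)$ open.
\end{itemize}

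\textbf{Part (i): the atlas.} Your skeleton matches the paper: the carrier bounds $\Jstar$ on every mask, the atlas is mask-independent, and a union bound is taken against the uniform mask. Working with $F$ directly instead of $9F-\Ccar$ is harmless, because observed gaps $\ge1$ against held-out gaps $\le0$ already give a unit notch.

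The trouble is that the step you defer is the whole difficulty, and the route you sketch does not give $\log\card{\mathcal H_q}=o(N)$.
\begin{itemize}
\item A cover of $\{\frob A\nucnorm Z\le J_0\}$ by parameter counting sees the $qh$ entries of $A$ and the $2qh$ entries of $Z$, with $h$ up to $N$. ReLU acts row by row, so neither a Lipschitz argument nor an operator-norm cover removes the width.
\item Your ``effective dimension $O(q\cdot\rank)$'' is itself $\Theta(N)$, because a nuclear-norm bound does not bound the rank ($\rank Z$ may be $2q$). Discretising at polynomial resolution then costs $\Theta(N\log q)$, which swamps a deviation exponent of order $\alpha\varepsilon N$.
\end{itemize}

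The paper's notch lemma reaches $o(N)$ through four specific devices.
\begin{enumerate}
\item It truncates $Z$ at rank $m=\lceil\sqrt q\rceil$ and uses $\frob{Z-Z_m}^2\le\nucnorm Z^2/(m+1)$. This is where the nuclear norm, not merely the Frobenius norm, is essential.
\item It Maurey-samples hidden units into $K_0\approx r_qB_q^2/q$ core atoms and $K_1\approx r_qB_q^2/(qm)$ tail atoms, which removes the width.
\item It covers the core output weights with Zhang's norm-based $\ell_\infty$ bound. Parameter counting would fail here: the $K_0$ output vectors in $\Rbb^q$ carry about $r_qB_q^2$ parameters, more than $N$ at the budget $B_q\asymp q\sqrt{\log q}$.
\item It counts gate patterns with a Warren-type sign-pattern bound over $2qm+mK_0+5qK_1$ parameters.
\end{enumerate}
The total is $O(q^{3/2}\log q+r_qB_q^2\log q/\sqrt q)$, which is $o(N)$ under $B_q^2\log q/q^{5/2}\to0$.

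\textbf{Part (i): two further points.}

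\emph{Carrier cost.} You never estimate $J_0$, and everything hinges on it: the budget forces $J_0=o(q^{5/4}/\sqrt{\log q})$. The Fourier construction costs on the order of $q\sqrt L$ with $L$ frequencies, so using all $q-1$ of them lands near $q^{3/2}$. The paper instead picks $L=O(\log q)$ random frequencies; Hoeffding and a union bound over $e\ne0$ give an averaged-cosine gap. It then realises each product of a row quantity and a column quantity, such as $\cos\phi_\ell(u)\cos\phi_\ell(v)$, through a piecewise-linear square whose hinges $\relu(r_u+s_v-t)$ are additive in the two inputs. This yields cost $O(q\sqrt{\log q})$.

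\emph{Exceptional cells.} The Maurey proxies are accurate only in $L_2$, so they are reliable only off $O(\eta N)$ exceptional cells. A bad atlas set therefore need not avoid $\Omega$; it may meet it on those cells. The per-set estimate must be a hypergeometric lower-tail bound $\Prob(\card{H\cap\Omega}\le2\eta N)\le e^{-cN}$ with $\eta$ small relative to $\alpha\beta$, not the avoidance probability $(1-\alpha)^{\varepsilon N}$.
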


The two parts are proved in the next two subsections.  The second part is
pure counting and never looks at the model; the phase enters only through
the observation that on an all-good instance every selector succeeds, so
$\Pall(\Munif)\le P^{\mathsf A}_{q,\tau}(\Munif)$, and Markov's inequality
turns the vanishing expectation into the vanishing-mass statement.  Nothing
in either proof depends on the arithmetic of $q$: the theorem holds along
every sequence of orders, with $\Cadd$ the cyclic group table of order $q$.
Written as $\exp\bigl(-N[\tau(1-\alpha)\log q-\log 2e^2]\bigr)$, the
bound in \eqref{eq:separation} becomes informative once
$q^{\tau(1-\alpha)}>2e^2$; the statement about orbits is the push-forward
of the uniform law on squares.

\subsubsection{Addition: a carrier and a notch}
\label{sec:main-addition}

The all-good pole rests on two lemmas.  The first exhibits a cheap network
that computes the whole addition table with a comfortable margin; the second
says that, with high probability over the observed cells, no cheap network
can agree with the observations and still be confidently wrong on a constant
fraction of the table.

\begin{lemma}[Addition carrier]\label{lem:addition-carrier}
There is an absolute constant $C_0$ such that for every sufficiently large
$q$ there is a full-grid score table
$\Ccar=A_{\rm car}\relu(Z_{\rm car}\xicell_{\cdot})$ of width $w_q=O(\log q)$
whose truth--rival gaps and product norm satisfy, for all $x\in\Xq$ and
$a\ne\Cadd(x)$,
\begin{equation}\label{eq:carrier-margins}
  1\ \le\ \Ccarc{\Cadd(x)}(x)-\Ccarc{a}(x)\ \le\ 9,
  \qquad
  J_{\rm car}:=\frob{A_{\rm car}}\,\nucnorm{Z_{\rm car}}\ \le\ C_0\,q\sqrt{\log q}.
\end{equation}
\end{lemma}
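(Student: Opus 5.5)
The plan is to build the carrier explicitly from a small random set of Fourier frequencies. Each frequency contributes a cosine of the \emph{sum} $u+v$, and ReLU units produce such a cosine from an additive pre-activation through a Euclidean-norm trick. The head then recombines these features into a peaked score $\sum_{k\in K}\cos\bigl(2\pi k(s-a)/q\bigr)$ with $s=u+v$, whose maximum over $a$ sits at $a=\Cadd(x)$.

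\textbf{Step 1 (frequencies).} Draw $K\subseteq\Zbb_q\setminus\{0\}$ as $L=\lceil c\log q\rceil$ independent uniform frequencies. For each fixed $d\ne0$ the variables $\cos(2\pi kd/q)$ are bounded and have mean $-1/(q-1)$. Hoeffding's inequality and a union bound over the $q-1$ nonzero $d$ then give, for suitable $c$, a realisation with
\[
\Bigl|\tfrac1L\textstyle\sum_{k\in K}\cos(2\pi kd/q)\Bigr|\le\tfrac14\qquad\text{for all } d\ne0 .
\]
At $d=0$ the sum equals exactly $L$.

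\textbf{Step 2 (ReLU features of $u+v$).} Fix a frequency $k$ and a phase $\beta\in\{0,\pi/2,\pi,3\pi/2\}$. Give $u$ the planar vector $e^{2\pi i ku/q}$ and $v$ the vector $e^{i\beta}e^{-2\pi i kv/q}$; these are coordinates of $\Emb$ and cost nothing at the level of $Z$. Then
\[
\bigl|e^{i\theta_u}+e^{i\beta}e^{-i\theta_v}\bigr| = 2\bigl|\cos\bigl(\pi k s/q-\beta/2\bigr)\bigr|,
\]
which depends on $(u,v)$ only through $s=u+v$. The planar norm is a positive combination of ReLUs of projections, $|p|=\tfrac14\int_0^{2\pi}\relu(\langle e_\psi,p\rangle)\,d\psi$. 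I would discretise this integral with $m=O(1)$ equally spaced directions, which incurs a uniform error of $O(m^{-2})$.

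\textbf{Step 3 (head).} Expand $|\cos y|=\tfrac2\pi+\tfrac4\pi\sum_{j\ge1}\tfrac{(-1)^{j+1}}{4j^2-1}\cos 2jy$. Taking the differences $g_{k,0}-g_{k,\pi}$ and $g_{k,\pi/2}-g_{k,3\pi/2}$ cancels the constant and the even $j$. What remains is a multiple of $\cos(2\pi ks/q)$, respectively $\sin(2\pi ks/q)$, plus odd harmonics $j\ge3$ whose relative weight is at most $\sum_{j\ \mathrm{odd}\ge3}3/(4j^2-1)<0.15$. The head row for colour $a$ combines these with coefficients proportional to $\cos(2\pi ka/q)$ and $\sin(2\pi ka/q)$, scaled by $\lambda/L$. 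This gives
\[
\Ccarc a(x)=\tfrac{\lambda}{L}\sum_{k\in K}\cos\bigl(2\pi k(s-a)/q\bigr)+\text{error}.
\]
The error has size at most $\lambda(0.15+O(m^{-2}))$, uniformly in $x$ and $a$. By Step 1 the truth score is at least $\lambda(1-0.2)$ and every rival score is at most $\lambda(\tfrac14+0.2)$. Choosing $\lambda$ a fixed constant (about $3$) therefore puts all truth--rival gaps in $[1,9]$.

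\textbf{Step 4 (norms).} The width is $w_q=4mL=O(\log q)$. Each row of $Z_{\rm car}$ has $2q$ entries of modulus at most $1$, so $\nucnorm{Z_{\rm car}}\le\sum_{\text{rows}}\euc{\cdot}\le w_q\sqrt{2q}$. The head has $q$ rows of $w_q$ entries each of size $O(1/L)$, so $\frob{A_{\rm car}}=O(\sqrt{qw_q}/L)=O(\sqrt{q/\log q})$. The product is therefore $O(\log q\cdot\sqrt q\cdot\sqrt{q/\log q})=O(q\sqrt{\log q})$, which is the bound in \eqref{eq:carrier-margins}.

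The main obstacle I expect is Step 3: keeping the harmonic leakage of $|\cos|$ and the ReLU discretisation error uniformly below the Hoeffding slack, so that the upper margin bound $9$ and the lower bound $1$ hold simultaneously. If the constant $0.15$ proves too loose, I would first use more phases $\beta$ to cancel the $j=3$ harmonic explicitly before adjusting $c$ or $\lambda$.
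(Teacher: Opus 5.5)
Your proposal is correct, and it differs from the paper exactly at the step that turns additive preactivations into functions of $s=u+v$. The paper's frequency selection, head, and norm count are essentially your Steps 1, 3 and 4. For the hidden layer, however, it builds approximations of $\cos\phi_\ell(u+v)$ and $\sin\phi_\ell(u+v)$ from the addition formulas. Each product $xy$ of a row quantity and a column quantity is replaced by the polarisation $P(x,y)=\tfrac14\bigl(Q_h(x+y)-Q_h(x-y)\bigr)$, where $Q_h$ is a piecewise-linear ReLU interpolant of $t^2$. Its hinges $\relu(\pm t-jh)$ need their offsets split between row and column potentials, and the product error $h^2/16$ is controlled by the mesh.

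You instead use unit modulus. The quantity $\lvert e^{i\theta_u}+e^{i\beta}e^{-i\theta_v}\rvert$ is already a function of $s$, and it is an exact positive combination of ReLUs of purely linear additive preactivations. The phase differences then isolate the fundamental of $\lvert\cos\rvert$. Your route gives a multiplication-free circuit with no offsets. The price is a harmonic leakage that does not shrink as $m$ grows, so you rely on the Hoeffding slack rather than on a tunable approximation. The trick is also tied to scalar unit-modulus characters. It transfers verbatim to abelian groups, since $\lvert\chi(u)+e^{i\beta}\overline{\chi(v)}\rvert=\lvert\chi(u+v)+e^{i\beta}\rvert$. It has no obvious analogue for higher-dimensional representations, whereas the paper proposes reusing its product surrogate for matrix coefficients of non-abelian representations.

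Two constants need correcting; neither affects the conclusion.
\begin{itemize}
\item The quadrature identity is $\lvert p\rvert=\tfrac12\int_0^{2\pi}\relu(\langle e_\psi,p\rangle)\,d\psi$, because $\int_{-\pi/2}^{\pi/2}\cos\psi\,d\psi=2$.
\item The leakage is $3\sum_{j\ \mathrm{odd}\ge3}(4j^2-1)^{-1}=3\pi/8-1\approx0.178$, not below $0.15$.
\end{itemize}
The second slip is harmless. With $w_j=3/(4j^2-1)$, $y=\pi ks/q$ and $\phi_a=2\pi ka/q$, the recombined leakage at frequency $k$ is $\sum_{j\ \mathrm{odd}\ge3}w_j\cos\bigl(2jy-(-1)^{(j-1)/2}\phi_a\bigr)$. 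So no extra factor $\sqrt2$ appears when the cosine and sine channels are recombined. Your budget of $0.2$ therefore still holds once $m$ is a constant large enough that the quadrature error is below about $0.02$. Alternatively, lower the threshold $1/4$ in Step 1.
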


The carrier is the Fourier-multiplication circuit of the interpretability
literature, made explicit and costed.  Roughly $\log q$ frequencies suffice
for their averaged cosine to separate the correct sum from every wrong one by
a constant gap; a small ReLU expansion of a piecewise-linear square approximates
the required products of cosines and sines, and each of its hinges
splits into a row part plus a column part, which is what the additive
architecture needs.  A constant number of hidden units per frequency gives
product norm $O(q\sqrt{\log q})$ (\Cref{app:carrier-proof}).

\begin{lemma}[One-sided notch]\label{lem:one-sided-notch}
Fix $\alpha,\beta\in(0,1)$ and let $B_q$ be a deterministic sequence with
$B_q^2\log q/q^{5/2}\to0$.  With probability $1-o(1)$ over $\Omega\sim\nu_q$
there is no table $D(x)=A\relu(Z\xicell_x)$, of any finite width, with
$\frob{A}\nucnorm{Z}\le B_q$ that satisfies both
\begin{enumerate}[label=(\arabic*),leftmargin=2.2em,itemsep=1pt,topsep=1pt]
  \item $D_{y(x)}(x)-D_a(x)\ge0$ for every $x\in\Omega$ and $a\ne y(x)$,
        where $y(x)=\Cadd(x)$; and
  \item at least $\beta N$ cells $x$ admit a rival $a(x)\ne y(x)$ with
        $D_{a(x)}(x)-D_{y(x)}(x)\ge1$.
\end{enumerate}
\end{lemma}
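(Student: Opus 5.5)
The plan is a covering (``atlas'') argument combined with a union bound over a random mask. The atlas is a finite family $\mathcal G_q$ of score tables $G:\Xq\to\Rbb^q$, fixed before $\Omega$ is drawn, with $\log\card{\mathcal G_q}=o(N)$. Every admissible $D$ (product norm at most $B_q$) should be approximated by some $G\in\mathcal G_q$ in the averaged sense
\[
\frac1N\sum_{x\in\Xq}\max_a\card{D_a(x)-G_a(x)}^2\le\varepsilon^2 ,
\]
with $\varepsilon=\varepsilon(\alpha,\beta)$ a small constant. Given such a $G$, Markov's inequality shows that the $\varepsilon$-exceptional set $E=\{x:\max_a\card{D_a(x)-G_a(x)}>1/4\}$ has $\card E\le16\varepsilon^2N$. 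Off $E$, the two conditions transfer from $D$ to $G$ with slack $1/2$:
\begin{itemize}[leftmargin=1.2em,itemsep=0pt,topsep=0pt,parsep=0pt,partopsep=0pt]
  \item condition (1) gives $G_{y(x)}(x)-G_a(x)\ge-1/2$ on $\Omega\setminus E$;
  \item condition (2) gives a set $S_G=\{x:\exists a\ne y(x),\ G_a(x)-G_{y(x)}(x)>1/2\}$ with $\card{S_G}\ge(\beta-16\varepsilon^2)N\ge\beta N/2$.
\end{itemize}
Together these force $\card{\Omega\cap S_G}\le\card E\le16\varepsilon^2N$. The set $S_G$ depends only on $G$, not on $\Omega$.

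For each fixed $G$ with $\card{S_G}\ge\beta N/2$, the intersection $\card{\Omega\cap S_G}$ is hypergeometric with mean at least $\alpha\beta N/2$. Choosing $\varepsilon^2\le\alpha\beta/64$, a Chernoff/Hoeffding bound for sampling without replacement gives $\Prob[\card{\Omega\cap S_G}\le16\varepsilon^2N]\le e^{-c(\alpha,\beta)N}$. A union bound over $\mathcal G_q$ then bounds the bad-mask probability by $\card{\mathcal G_q}\,e^{-cN}=e^{-cN+o(N)}\to0$, which is the lemma. Note that the truth labels $y(x)=\Cadd(x)$ enter only through the definition of $S_G$, so this step never uses the arithmetic of addition. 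That matches the paper's remark that the notch is insensitive to the colouring.

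The real work, and the main obstacle, is the atlas. I would build it by Maurey sparsification followed by quantisation, as follows.
\begin{itemize}[leftmargin=1.2em,itemsep=0pt,topsep=0pt,parsep=0pt,partopsep=0pt]
  \item \emph{Normalise.} Use the ReLU rescaling freedom (as in \Cref{lem:reduced-normalization}) to write $D=\sum_j a_j\relu(z_j^\top\xicell_\cdot)$ with $\sum_j\euc{a_j}\euc{z_j}\le\frob A\frob Z\le\frob A\nucnorm Z\le B_q$. Then $D$ is $B_q$ times a convex combination of atoms $\pm\,a\,\relu(z^\top\xicell_\cdot)$ with $\euc a=\euc z=1$.
  \item \emph{Bound the atoms.} Each atom has empirical $L_2(\Xq)$ size at most $O(q^{-1/2})$ per cell. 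This follows because $\sum_x\euc{Z\xicell_x}^2\le 4q\frob Z^2$.
  \item \emph{Sparsify.} Maurey's lemma then replaces $D$ by an average of $k=O(B_q^2/(q\varepsilon^2))$ atoms within RMS error $\varepsilon$.
  \item \emph{Quantise.} Quantise each atom's $(a,z)\in\Rbb^{q}\times\Rbb^{2q}$ on a grid of mesh $q^{-O(1)}$. This costs $O(q\log q)$ nats per atom, so $\log\card{\mathcal G_q}=O(B_q^2\log q/\varepsilon^2)$.
\end{itemize}
This crude count yields the lemma under $B_q^2\log q=o(q^2)$. The stated hypothesis $B_q^2\log q=o(q^{5/2})$ is weaker, so to reach it I would sharpen one of the two estimates by a further factor $\sqrt q$. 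I would first try to exploit $\nucnorm Z$ rather than $\frob Z$: decompose $Z$ along its singular directions so that each atom's preactivation $z^\top\xicell_{(u,v)}=z_U(u)+z_V(v)$ is a sum of a row vector and a column vector. Counting such sign patterns and hinges on the $q\times q$ grid (a VC/pseudo-dimension bound of order $q$ per atom, rather than a parameter grid) should save the extra factor. Failing that, I would accept the $q^2$ exponent, which is still compatible with the carrier's $B_q=O(q\sqrt{\log q})$ scale up to the logarithm, and check whether the downstream use in \Cref{thm:main} tolerates it.
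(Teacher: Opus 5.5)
\textbf{There is a genuine gap: the atlas entropy bound.} Your outer argument is sound and is essentially Step~5 of the paper: an atlas fixed before $\Omega$ is drawn, transfer of conditions (1) and (2) with slack off a Markov-small exceptional set, a hypergeometric lower tail for each fixed set of size at least $\beta N/2$, and a union bound. (With tolerance $1/4$, condition (2) only gives $G_a-G_y\ge1/2$, so the strict threshold in $S_G$ needs a slightly smaller tolerance; this is cosmetic.) The atlas entropy is the whole content of the lemma, and there your count $\log\card{\mathcal G_q}=O(B_q^2\log q)$ proves the lemma only under $B_q^2\log q=o(q^2)$.

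Your fallback fails. The budget used downstream is $B_q\asymp q\sqrt{\log q}$ from \eqref{eq:JD}, so your count is of order $N(\log q)^2$ and the union bound against $e^{-cN}$ gives nothing. The logarithm is decisive, not a minor loss. Your proposed repair also misses the bottleneck. With $k\asymp B_q^2/q$ sampled atoms, each carrying a normal in $\Rbb^{2q}$ and an output vector in $\Rbb^{q}$, any count that pays per atom for its parameters is of order $qk\asymp B_q^2\asymp N\log q$ even before logarithms. The output vectors alone already cost this much, so replacing the grid by a VC or sign-pattern count of order $q$ per atom changes nothing.

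\textbf{How the paper removes both costs.} It handles the two costs separately.
\begin{itemize}
\item The output layer is never parametrised. The core proxy $B_0H_0$ is covered by a dimension-free $\ell_\infty$ cover of a norm-bounded linear map (\Cref{ext:zhang} with the block-diagonal lift of \Cref{ext:ledent}). This cover is encoded as integer multiplicity codes on an address set and threshold grid fixed in advance, with $\log\#\{\kappa\}=O((1+B_q^2/q)\log q)$.
\item For the hidden layer, $Z$ is split at rank $m=\lceil\sqrt q\rceil$. All core normals lie in one $m$-dimensional row space. Hence $K_0\approx r_qB_q^2/q$ sampled core atoms (with $r_q\to\infty$ slowly) cost $2qm+mK_0$ parameters instead of $2qK_0$.
\item The tail obeys $\frob{Z-Z_m}^2\le\nucnorm{Z}^2/(m+1)$. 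This, not the Frobenius bound you use, is where the nuclear norm matters. As a result only $K_1\approx r_qB_q^2/(qm)$ tail atoms are needed, each with $5q$ explicit parameters.
\end{itemize}
Counting ternary sign patterns of the degree-$2$ gates and degree-$4$ half-gap comparisons with \Cref{ext:caro} then gives $D_q^{\rm par}=O(q^{3/2}+r_qB_q^2/\sqrt q)$. Therefore $D_q^{\rm par}\log q/N=O(\log q/\sqrt q+r_qB_q^2\log q/q^{5/2})\to0$, which is exactly where the hypothesis $B_q^2\log q/q^{5/2}\to0$ enters.

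Your remark about singular directions points the right way. But the argument still needs three things: the shared low-dimensional row space for the core, the nuclear-norm tail bound, and a norm-based cover of the output layer. Without them it reaches neither the stated hypothesis nor the carrier budget.
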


Both names are descriptive: the carrier carries the whole table at a known
cost, and the notch is the region of the norm--error plane the lemma cuts out,
cheap tables that fit the observations yet err confidently on a constant
fraction of cells.  The difficulty is that $D$ may be chosen \emph{after} the
mask is seen, and the proof meets it in three steps.  Before the mask is drawn
it fixes a finite atlas $\mathcal H_q$ of suspect cell sets, such that the
wrong cells of every cheap confidently-wrong table are covered, up to a
vanishing fraction of $N$, by some member.  The atlas is small: splitting $Z$
at rank about $\sqrt q$, replacing both parts by a few sampled atoms, covering
the sampled core by integer codes
\citep{zhang2002covering,ledent2021norm}, and counting the sign patterns of
the resulting polynomial gates \citep{caro2020pseudo} give
$\card{\mathcal H_q}=\exp(o(N))$, a count that needs exactly the scaling
assumed of $B_q$.  And a fixed set of $\beta N/2$ cells meets a uniform mask in
too few cells only with probability $e^{-cN}$, while a cheap table that is
confidently wrong on an observed cell violates its margin there
(\Cref{app:notch-proof}).  Part~\ref{it:main-add} is therefore the competition
\begin{equation}\label{eq:competition}
  \Prob\bigl(\exists\ \text{harmful low-norm table}\bigr)
  \ \le\ \card{\mathcal H_q}\,e^{-cN}
  \ =\ \exp\bigl(\log\card{\mathcal H_q}-cN\bigr)\longrightarrow0,
\end{equation}
the carrier serving only to put every endpoint inside the budget $B_q$; width
enters through the budget alone (\Cref{app:discussion-atlas}).

\begin{proof}[Proof of \Cref{thm:main}\ref{it:main-add}]
Padded with zero hidden rows to width $N$, $(A_{\rm car},Z_{\rm car})$ is
feasible for the minimisation defining $\Jstar$ on every addition mask, so
$\Jstar\le J_{\rm car}\le C_0q\sqrt{\log q}$, and by
\Cref{lem:reduced-normalization} each endpoint response $F$ has a width-$N$
reduced representation $(A_F,Z_F)$ with $\frob{A_F}\nucnorm{Z_F}=\Jstar$.
Take $D=9F-\Ccar$ and $y(x)=\Cadd(x)$.  On an observed cell $F_y-F_a\ge1$
against a carrier gap of at most $9$, so $D_y-D_a\ge0$; at a held-out
non-strict error, including a tie, some rival has $F_a\ge F_y$ against a
carrier gap of at least $1$, so $D_a-D_y\ge1$.  Balancing both representations
by homogeneity and stacking them,
$\widetilde A=[\,3A_F\ \ -A_{\rm car}\,]$ and
$\widetilde Z=[\,3Z_F;\ Z_{\rm car}\,]$ represent $D$ with
$\frob{\widetilde A}^2=9\Jstar+J_{\rm car}$ and
$\nucnorm{\widetilde Z}\le3\sqrt{\Jstar}+\sqrt{J_{\rm car}}\le
\sqrt2\sqrt{9\Jstar+J_{\rm car}}$, so
\begin{equation}\label{eq:JD}
  \Jred(D)\le\frob{\widetilde A}\nucnorm{\widetilde Z}
  \le\sqrt2\,(9\Jstar+J_{\rm car})
  \le10\sqrt2\,C_0\,q\sqrt{\log q}=:B_q .
\end{equation}
Then $B_q^2\log q/q^{5/2}=O((\log q)^2/\sqrt q)\to0$, and the stack has width
$N+w_q$, which \Cref{lem:one-sided-notch} permits.  An endpoint with
$\Acc_T<\tau$ has more than $(1-\tau)\card T\ge(1-\tau)(1-\alpha)N$ non-strict
errors, so with $\beta=\tfrac12(1-\tau)(1-\alpha)$ its table $D$ is excluded by
the lemma.  The exclusion event is a property of the mask, so it holds
simultaneously for all endpoints: with probability $1-o(1)$,
$\Aminus(\Cadd,\Omega)\ge\tau$.  This holds for every fixed $\tau<1$, which is
$\Aminus\xrightarrow{\mathrm P}1$; the phase decomposition gives the profile
limit.
\end{proof}

\begin{corollary}[Abelian group families are all-good]
\label{cor:abelian-allgood}
Let $(G_q)_q$ be any sequence of finite abelian groups with $|G_q|=q\to\infty$,
identify their elements with $\qset$, and let $C_{G_q}(u,v)=u+v$ be their
group tables.  For every fixed $\alpha,\tau\in(0,1)$,
\begin{equation}\label{eq:abelian-allgood}
  \Aminus(C_{G_q},\Omega)\xrightarrow{\mathrm P}1,
  \qquad \profile([C_{G_q}])\to(1,0,0).
\end{equation}
The conclusion holds for every relabelled representative.
\end{corollary}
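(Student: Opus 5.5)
The plan is to rerun the proof of \Cref{thm:main}\ref{it:main-add} verbatim, with two substitutions. The cyclic carrier of \Cref{lem:addition-carrier} is replaced by a carrier for $C_{G_q}$, and \Cref{lem:one-sided-notch} is invoked with $y(x)=C_{G_q}(x)$ instead of $\Cadd(x)$. The notch proof never uses the labels when it builds the atlas $\mathcal H_q$. It covers the confidently-wrong cells of cheap tables through rank splitting, atom sampling, integer codes and sign-pattern counting, and all of these steps depend only on $(A,Z)$ and the fixed features $\xicell_x$. The labels enter only at the end, through the margin violation on observed cells. I will check this explicitly and restate the lemma for an arbitrary deterministic colouring $y:\Xq\to\qset$. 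The rest of the argument is then unchanged: form $D=9F-\Ccar$, derive the stacked bound \eqref{eq:JD} with $B_q=10\sqrt2\,C_0 q\sqrt{\log q}$, take $\beta=\tfrac12(1-\tau)(1-\alpha)$, and note that exclusion holds simultaneously for all endpoints.

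The substantive step is the abelian carrier. I need a table $\Ccar$ of width $O(\log q)$ whose truth--rival gaps lie in $[1,9]$ for $C_{G_q}$ and whose cost satisfies $J_{\rm car}\le C_0 q\sqrt{\log q}$, with $C_0$ uniform over groups. Write $G_q\cong\prod_i\Zbb_{m_i}$ and let $\widehat G_q$ be its character group. For a character $\chi$, the identity $\mathrm{Re}\,\chi(u+v-a)=\mathrm{Re}[\chi(u)\chi(v)\overline{\chi(a)}]$ expands into products of $\cos$ and $\sin$ of $\arg\chi(u)$ and $\arg\chi(v)$, exactly as in the cyclic case with $2\pi k/q$ replaced by $\arg\chi$. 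I will follow the cyclic construction in \Cref{app:carrier-proof}. Sample $K=c\log q$ characters $\chi_1,\dots,\chi_K$ uniformly from $\widehat G_q$. For a fixed $g\neq0$, $\chi(g)$ is uniform over the $\mathrm{ord}$-th roots of unity for some order $\ge2$, so $\Exp\,\mathrm{Re}\,\chi(g)=0$ while $\mathrm{Re}\,\chi(0)=1$. Hoeffding's inequality plus a union bound over the $q$ values $g=u+v-a$ then gives, with positive probability, $\frac1K\sum_k\mathrm{Re}\,\chi_k(g)\le\tfrac12$ for every $g\ne0$. This yields a constant averaged gap. Each product of a row cosine and a column cosine is written as $\tfrac14[(s+t)^2-(s-t)^2]$ and implemented by the same piecewise-linear ReLU square; each hinge splits into a row part plus a column part. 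This reproduces the cyclic per-frequency cost, because all entries are bounded by $1$ exactly as before. After rescaling, the gap lies in $[1,9]$ and the cost is $O(q\sqrt{\log q})$.

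The relabelling clause follows from \Cref{thm:quotient-invariance}. Corresponding endpoints have identical cellwise strict correctness, and the mask bijection preserves $\nu_q$, so $\Aminus$ has the same law on every representative. Convergence of the profile follows from \Cref{thm:phase-profile}. The main obstacle I expect is making the constant in the carrier uniform over groups. Elements of order $2$ give real characters with $\sin\equiv0$, and small cyclic factors give characters with few distinct values. I will handle this by noting that the concentration bound needs only $\Exp\,\mathrm{Re}\,\chi(g)=0$ and $|\chi|\le1$, both of which hold in every group. The ReLU-square approximation error is then controlled in the same way as in the cyclic proof.
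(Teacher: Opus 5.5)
Your proposal is correct and takes the paper's route. The paper proves a notch lemma for an arbitrary deterministic colouring (\Cref{lem:colouring-notch}), packages your "rerun of the cyclic assembly" as a carrier criterion with a general upper gap $\Gamma_q$ (\Cref{prop:carrier-criterion}), and builds the abelian carrier by Hoeffding-sampling characters with the same ReLU product surrogate (\Cref{lem:abelian-carrier}, drawing from $\widehat G\setminus\{\mathbf 1\}$ with mean $-1/(q-1)$ where you draw mean-zero from all of $\widehat G$), before finishing with \Cref{thm:quotient-invariance}. One small correction: the atlas does use the labels, through the truth--rival half-gap comparisons in Step~4 of the notch proof, but this is harmless because the colouring is deterministic and fixed before the mask is drawn, and the atlas cardinality bound does not depend on it.
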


Cyclic frequencies are the characters of $\Zbb_q$; replacing them by the
characters of $G_q$ preserves the carrier's width $O(\log q)$, gaps
$[1,9]$, and cost $O(q\sqrt{\log q})$, and the notch uses no group
arithmetic, so the same budget proves the corollary.  The general
criterion and full character calculation are in
\Cref{app:carrier-criterion,app:abelian-carriers}.

\subsubsection{Uniform Latin squares: a counting bound}
\label{sec:main-uniform}

\begin{lemma}[Counting]\label{lem:uniform-selector-count}
Let $n_q=\lfloor\alpha N\rfloor$ and $r=\lfloor(1-\tau)(N-n_q)\rfloor$.  For
every learner $\mathsf A_q$,
\begin{equation}\label{eq:selector-count}
  \text{(a)}\quad
  P^{\mathsf A}_{q,\tau}(\Munif)
  \ \le\ \frac{q^{n_q}\sum_{i=0}^{r}\binom{N-n_q}{i}(q-1)^i}{\card\Lq},
  \qquad
  \text{(b)}\quad
  \card\Lq\ \ge\ \frac{(q!)^{2q}}{q^{q^2}}\ \ge\ \Bigl(\frac{q}{e^2}\Bigr)^{q^2}.
\end{equation}
\end{lemma}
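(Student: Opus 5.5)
For part (a) I would fix a learner $\mathsf A_q$ and condition on the mask. Since $C$ and $\Omega$ are independent, it suffices to show that for every fixed $\Omega$ with $\card\Omega=n_q$ the number of Latin squares $C$ with $\Acc_T(\mathsf A_q(\Dset(C,\Omega));C)\ge\tau$ is at most the numerator; averaging over $\Omega\sim\nu_q$ then gives the claim.

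For fixed $\Omega$, partition the successful $C$ by their restriction $C|_\Omega$. There are at most $q^{n_q}$ such restrictions. Once the restriction is fixed, the dataset $\Dset(C,\Omega)$ is fixed, so the learner's score table $S=\mathsf A_q(\Dset)$ is a fixed function $\Xq\to\Rbb^q$. Define the strict argmax $\hat y(x)$ wherever $S(x)$ has a unique maximiser, and leave it undefined otherwise. Strict correctness at $x\in T$ forces $C(x)=\hat y(x)$.

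Success means at least $\tau\card T$ correct held-out cells, hence at most $r$ cells of $T$ where $C(x)\ne\hat y(x)$, or where $\hat y(x)$ is undefined. I would then count completions $C|_T$:
\begin{itemize}
\item choose the set $I\subseteq T$ of size $i\le r$ of cells that are not forced;
\item on $T\setminus I$ the colour is determined;
\item on $I$ the colour must avoid $\hat y(x)$ when it is defined, which gives at most $q-1$ choices.
\end{itemize}
The undefined-argmax cells are the one place this breaks, since there all $q$ colours are allowed. I would handle this by fixing an arbitrary tie-break in the definition of $\hat y$. Every success still has at most $r$ cells where $C(x)\ne\hat y(x)$, because a cell where $C$ is correct has a unique maximiser equal to $C(x)$. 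So each completion lies in a set of the form above, with $(q-1)^i$ choices on $I$. Summing over $i$ (overcounting, and ignoring the Latin constraint on completions) gives $\sum_{i\le r}\binom{N-n_q}{i}(q-1)^i$. Dividing by $\card\Lq$ proves (a). The care needed on ties and on $r=\lfloor(1-\tau)(N-n_q)\rfloor$ versus the non-strict count is the only delicate point, and it is bookkeeping.

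For part (b) I would use the permanent bound. Build the square row by row: having placed $k$ rows forming a Latin rectangle, the number of choices for row $k+1$ is the permanent of a $q\times q$ $0/1$ matrix with all row and column sums equal to $q-k$. By van der Waerden (Egorychev–Falikman), this permanent is at least $(q-k)^q\,q!/q^q$. Multiplying over $k=0,\dots,q-1$ gives
\[
\card\Lq\ \ge\ \prod_{k=0}^{q-1}\frac{(q-k)^q\,q!}{q^q}\ =\ \frac{(q!)^q\,(q!)^q}{q^{q^2}}\ =\ \frac{(q!)^{2q}}{q^{q^2}}.
\]
Then $q!\ge(q/e)^q$ gives $(q!)^{2q}\ge(q/e)^{2q^2}$, hence $\card\Lq\ge(q/e^2)^{q^2}$.

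I expect the main step to be recognising and invoking the permanent lower bound. Everything else is elementary counting, and the later asymptotic bound in \Cref{thm:main} follows by estimating the binomial sum by $2^{N}q^{r}$.
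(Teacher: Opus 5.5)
Your proposal is correct and follows the paper's proof essentially step for step. For (a), you fix the mask, split by the at most $q^{n_q}$ observed label patterns, tie-break the learner's winner table arbitrarily, and count arrays within Hamming distance $r$ of it on $T$; for (b), you build the square row by row and apply the van der Waerden bound to the $(q-k)$-regular admissibility matrix, where your product over $k=0,\dots,q-1$ is the paper's $q!\prod_{k=1}^{q-1}$ written uniformly, since the $k=0$ factor equals $q!$.
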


The uniform pole needs no geometry.  Part~(a) fixes the mask, notes that
the observed labels can take at most $q^{n_q}$ values, and for each value counts
the tables within Hamming distance $r$ of what the learner predicts;
part~(b) is a lower bound on the number of Latin squares, obtained by
extending a square row by row and applying the van der Waerden permanent
bound to the admissible entries of each new row
\citep{egorychev1981solution,falikman1981proof,gurvits2005hyperbolic}; the
bound is essentially sharp, since
$\card\Lq=\bigl((1+o(1))\,q/e^2\bigr)^{q^2}$ \citep{vanlint2001course}.
The ratio of the two counts is the theorem.

\begin{proof}[Proof of \Cref{thm:main}\ref{it:main-unif}]
Since $\sum_{i\le r}\binom{N-n_q}{i}(q-1)^i\le2^{N}q^{r}$ and
$N-n_q-r\ge\tau(N-n_q)\ge\tau(1-\alpha)N$, the two parts of
\eqref{eq:selector-count} give
$P^{\mathsf A}_{q,\tau}(\Munif)\le 2^Ne^{2N}q^{-\tau(1-\alpha)N}$,
uniformly in $\mathsf A_q$.  On an all-good instance every deterministic
selector succeeds, so $\Pall(\Munif)\le P^{\mathsf A}_{q,\tau}(\Munif)$ for
any such selector.
\end{proof}

\begin{remark}[Biases]
With hidden bias $b$ and output bias $c$ added to \eqref{eq:network} and to
the penalised norm, both assertions of the theorem survive
(\Cref{cor:l2-bias-transfer}): a biased endpoint response has a bias-free
analysis representation of product $O(q^{7/6}(\log q)^{1/3})$, so
$B_q^2\log q/q^{5/2}\to0$ and \eqref{eq:competition} is unchanged.
\end{remark}
\begin{remark}[Vanishing observation fractions]
\label{rem:vanishing-observation}
Applying part~\ref{it:main-add} at $\alpha=1/k$ and choosing orders beyond
which the failure probability at accuracy $1-1/k$ is below $1/k$ yields
$\alpha_q\downarrow0$ along which $\Aminus(\Cadd,\Omega)\xrightarrow{\mathrm P}1$
still holds.  This is existence without a rate; the rate the atlas
calculation suggests, $\alpha_q\gg(\log q)^3/\sqrt q$, is recorded as a
proof-guided scaling question in \Cref{app:proof-guided-scaling}, and the empirical
thresholds of \Cref{tab:addition-coverage} fall from $0.33$ at $q=47$ to
$0.20$ at $q=211$.
\end{remark}
\begin{remark}[What part~\ref{it:main-unif} leaves open]\label{main-remarks}
The existential probability
$\Pex(\Munif)=\Prob[\exists\theta\in\Sq(\Dset(C,\Omega)):\Acc_T(\theta;C)\ge\tau]$
under $C\sim\Unif(\Lq)$, the mixed/no-good boundary of the triangle,
has no known limit, nor is any sequence with non-vanishing mixed mass
known; \Cref{app:classification-readiness} packages the programme.
The non-abelian and non-group extensions of the carrier criterion also
remain open.
    
\end{remark}

%% file: section/05-open-region.tex
\section{Beyond the Poles: Structured Families at Composite Order}
\label{sec:open-region}

At $q=64=2^6$ we compare the abelian all-good families of the previous
section with non-abelian and non-group candidates,
using product factors unavailable at prime order to measure partial label
recovery.  Runs follow \Cref{app:protocol} with label-stratified masks
(distinct from $\nu_q$); they do not measure the phase of $\Sq$.
$C_r$ is the cyclic table, $C_2^k$ the elementary abelian
table, $R_r$ a task-seeded Latin table built from random perfect matchings
(\Cref{app:orbit-details}), and products are direct products.

\begin{wraptable}{r}{0.5\textwidth}
  \vspace{0pt}
  \centering
  \footnotesize
  \setlength{\tabcolsep}{3.5pt}
  \renewcommand{\arraystretch}{1.05}
  \captionsetup{font=small,labelfont=bf,skip=3pt}
  \caption{Final trained predictors at $q{=}64$, $30\%$ coverage, six runs
  per family. ``Transition'' is the plateau-relative event of
  \Cref{app:protocol}; $0.99$ is a reporting cutoff unrelated to $\tau$.}
  \label{tab:group-family-endpoints}
  \begin{tabular}{@{}lccc@{}}
    \toprule
    & \multicolumn{3}{c}{\small final outcome over $6$ runs} \\
    \cmidrule(l{2pt}r{2pt}){2-4}
    Latin family & trans. & med.\ acc.\ & ${\ge}\,0.99$ \\
    \midrule
    $C_2^6$          & $6/6$ & $1.000$ & $6/6$ \\
    $C_{64}$         & $6/6$ & $0.949$ & $3/6$ \\
    $C_8{\times}C_8$ & $6/6$ & $0.941$ & $2/6$ \\
    $\mathrm{Dih}_{64}$ & $6/6$ & $0.870$ & $1/6$ \\
    $C_8{\times}R_8$ & $4/6$ & $0.231$ & $1/6$ \\
    \bottomrule
  \end{tabular}
  \vspace{-4pt}
\end{wraptable}
\noindent\textbf{Group laws beyond cyclic addition.}
Every group-table run in \Cref{tab:group-family-endpoints} transitions;
$C_2^6$ completes exactly, and the other medians are $0.87$--$0.95$.
These finite-training outcomes need not match the asymptotic phase:
$30\%$ is at addition's training threshold at nearby orders
(\Cref{tab:addition-coverage}).  $C_8\times R_8$ yields one run above
$0.99$.  The dihedral and non-group cases motivate further carriers
(\Cref{app:colouring-carriers}).

\begin{figure}[t]
  \centering
  \vspace{-20pt}
  \includegraphics[width=\linewidth]{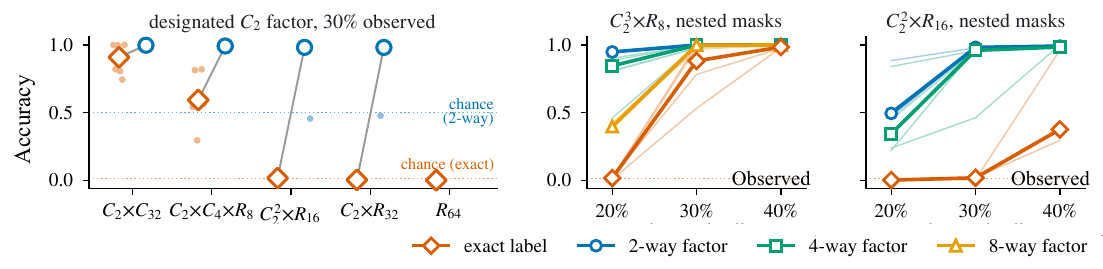}
  \caption{\textbf{Coarse factors are recovered before exact labels.}
  Colour encodes label resolution: exact label and the prespecified $2$-,
  $4$-, $8$-way factor quotients. Left: held-out accuracy at nominal
  $30\%$ coverage (runs small, medians outlined). Right: nested masks at
  $20\%$, $30\%$, $40\%$; coarse factors saturate one coverage level
  before the exact label.}
  \vspace{-8pt}
  \label{fig:open-region}
\end{figure}
\noindent\textbf{Factor recovery with weak exact completion.}
The table $C_2\times R_{32}$ carries a binary label component even when
its $32$-way residual is hard to infer.  Runs are scored on exact labels
and on \emph{factor quotients}: projections of predicted and true labels
to product factors, distinct from the orbit quotient $\Qq$.
In \Cref{fig:open-region}, left, replacing the cyclic residual $C_{32}$
by the random $R_{32}$ lowers median exact recovery to chance or below,
while binary accuracy remains above $0.98$: one label component is
recovered while finer components remain unresolved.

\noindent\textbf{Coverage moves the recovered resolution.}
The nested-mask panels re-train the same task and optimisation seeds at
increasing coverage.  Coarse factors are recovered before exact labels,
with the residual structure setting the required coverage: for
$C_2^3\times R_8$ the exact median rises
$0.017\to0.882\to0.985$ while the binary factor is $0.948$ already at
$20\%$.  This suggests a coordinate for the open region: the finest factor
resolved at a given coverage (\Cref{app:experiment-tables}).
Checkpoint trajectories, alternate factor directions, and probe
interventions (\Cref{app:mechanism-experiments}) confirm that the factor
accuracies reflect structured label resolution, not exact hits alone.

These observations motivate carriers beyond abelian groups and for coarse
factors.  For the all-good limit, we propose the following robust
necessity conjecture.

\begin{conjecture}[Robust margin necessity]
\label{conj:carrier-necessity}
Let $C_q\in\Lq$ be deterministic with $q\to\infty$.  If
$\Aminus(C_q,\Omega)\xrightarrow{\mathrm P}1$ for every fixed
$\alpha\in(0,1)$ under the corresponding uniform mask law, then $C_q$
admits mask-independent carriers of width at most $N=q^2$, with
truth--rival gaps at least $1$ outside $o(N)$ cells and product norm
$\frob{A_{{\rm car},q}}\nucnorm{Z_{{\rm car},q}}=o(q^{3/2})$.
\end{conjecture}
\Cref{app:carrier-necessity} gives the precise formulation, motivation,
and remaining proof gaps.

%% file: section/07-conclusion.tex
\section{Conclusion}\label{sec:conclusion}
We study which task structures make the entire minimum-norm endpoint
set generalise.  Our carrier criterion guarantees this asymptotically
for every finite abelian group family, whereas the uniform Latin
ensemble defeats observation-based learners at fixed positive accuracy.
Moreover, relabelling preserves endpoint generalisation despite changing
polynomial complexity, making task structure central to this
classification.  Composite-order experiments further suggest recovery
through coarse factors, without characterising the whole endpoint set.
Together, these results connect grokking to the generalisation properties
of its idealised endpoints, leaving carrier necessity and the
mixed/no-good boundary as open questions.

\subsection*{AI Use Statement}

Generative AI tools were used to improve the language and clarity of the
manuscript, and to surface existing results in the literature that
shortened the proofs.  The authors located the original sources and
verified each result in the form used; the external proof ingredients
are stated explicitly in \Cref{app:external}.

%% file: main-bib.tex

\bibliographystyle{iclr2027_conference}
\bibliography{references}

%% file: appendix/00-notation.tex
\section{Notation}
\label{app:notation}

\Cref{tab:notation} lists the symbols used throughout the main text.
Appendix-local symbols are defined where they enter.

\begin{table}[h]
  \centering
  \small
  \caption{Notation used in the main text.  Appendix-local symbols are defined
  where they enter.}
  \label{tab:notation}
  \begin{tabularx}{\linewidth}{@{}>{\raggedright\arraybackslash}p{0.24\linewidth}X@{}}
    \toprule
    Symbol & Meaning \\
    \midrule
    $q,\qset,S_q,\Xq,N$ & Task order, symbol set $\{0,\ldots,q-1\}$, its symmetric group, cell grid $\qset^2$, and $q^2$. \\
    $C,\Cq,\Lq,\Cadd$ & A colouring, all pairwise colourings, Latin colourings, and modular addition. \\
    $G_q,C_{G_q}$ & A finite abelian group of order $q$ and its group table, after identifying its elements with $\qset$. \\
    $\Omega,T,\nu_q,\alpha,n_q$ & Observed cells, held-out cells, mask law, coverage, and mask size. \\
    $\Dset(C,\Omega)$ & Observed triples $(u,v,C(u,v))$. \\
    $F_\theta,\theta,\Sq(\Dset)$ & Score map, network parameters, and minimum-norm endpoint set. \\
    $\theta^{\mathrm b},\mathcal S_q^{\mathrm b},\Pi_{q,\tau}^{\mathrm b}$ & Their $L_2$-regularised hidden/output-bias counterparts in \Cref{app:bias}. \\
    $\Acc_T,\tau,\Aplus,\Aminus$ & Strict held-out accuracy, success threshold, and endpoint extrema. \\
    $\Gall,\Gmix,\Gnone,\Gex$ & All-good, mixed, no-good, and existential indicators. \\
    $p_{q,\tau}^\star(C),\profile(C)$ & Threshold-indexed mask-averaged phase probability and phase profile. \\
    $\sim_q,\Qallq,\Qq$ & Colouring isomorphism, ambient quotient space, and Latin quotient space. \\
    $\xicell,\Emb,Z$ & Cell indicator, stacked embedding matrix, and reduced hidden matrix $W\Emb$. \\
    $\Jred,\Jstar,\Vstar$ & Reduced product functional, its constrained optimum, and the original norm optimum. \\
    $\Ccar,B_q$ & Full-grid carrier score table and the product bound in the notch lemma. \\
    $\Madd,\Munif,P_{q,\tau}^\star(\mathsf M)$ & Addition and uniform-Latin task ensembles and their threshold-indexed probabilities. \\
    $P_{q,\tau}^{\mathsf A}(\mathsf M)$ & Success probability of an observation-measurable learner $\mathsf A_q$; a selector is a learner with values in $\Sq$. \\
    $C_r,C_2^k,R_r$ & Cyclic, elementary-abelian, and task-seeded sequential-Latin families used in \Cref{sec:open-region}. \\
    \bottomrule
  \end{tabularx}
\end{table}
\FloatBarrier

%% file: appendix/00-supplementary-theory.tex
\section{Supplementary Theory}
\label{app:supp-theory}

This appendix contains the exact finite-instance criterion behind the phases,
the bias extension of \Cref{thm:main}, an unstructured comparison bound, and a
formal statement of the classification programme that the two poles leave
open.  Proofs are in \Cref{app:proofs}.

\subsection{An exact criterion for a finite instance}
\label{app:finite-criterion}

Fix $(q,C,\Omega)$ and $\tau\in(0,1)$, and allow hidden width exactly $N$
with zero padding.  For $B\in\Rbb^{N\times2q}$ let $H(B)$ have $x$-th column
$\relu(B\xicell_x)$.  On the nuclear sphere $\nucsphere=\{B:\nucnorm B=1\}$
define
\begin{equation}\label{eq:rhoB}
  \rho(B)=\min_A\ \frob A
  \quad\text{s.t.}\quad
  (e_{C(x)}-e_a)^{\top}AH(B)_x\ge1\quad(x\in\Omega,\ a\ne C(x)),
\end{equation}
with $\rho(B)=+\infty$ if no $A$ is feasible, and put
$g=\min_{\nucsphere}\rho$ and $\Eg=\{B\in\nucsphere:\rho(B)=g\}$.  For
$B\in\Eg$ let $A_B$ be the unique minimum-Frobenius solution of
\eqref{eq:rhoB}, $F_B=A_BH(B)$, $m_x(B)=F_{B,C(x)}(x)-\max_{a\ne C(x)}F_{B,a}(x)$
for $x\in T$, $s(B)$ the $k$-th largest held-out margin with
$k=\lceil\tau\card T\rceil$, and
\begin{equation}\label{eq:s-extrema}
  \smin=\min_{B\in\Eg}s(B),\qquad\smax=\max_{B\in\Eg}s(B).
\end{equation}

\begin{theorem}[Exact finite-instance criterion]\label{thm:finite-phase-criterion}
$\Eg$ is non-empty and compact, $B\mapsto F_B$ is continuous on $\Eg$, and the
global responses of \eqref{eq:minnorm} are exactly $\{F_B:B\in\Eg\}$.
Moreover
\begin{equation}\label{eq:finite-criterion}
  \textsf{all-good}\iff\smin>0,\qquad
  \textsf{mixed}\iff\smin\le0<\smax,\qquad
  \textsf{no-good}\iff\smax\le0 .
\end{equation}
\end{theorem}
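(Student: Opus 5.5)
The proof has three parts. First we reduce to the nuclear sphere and show the minimum is attained. Then we identify the endpoint responses with $\{F_B:B\in\Eg\}$. Finally we read off the phases from the quantile margin $s(B)$.

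\emph{Reduction.} Start from \Cref{lem:reduced-normalization}: the global minimisers of \eqref{eq:minnorm} are exactly the balanced parameters whose reduced pair $(A,Z)$, padded to width $N$, minimises $\frob A\nucnorm Z$ under the margins. Since $\relu$ is positively homogeneous, $H(cB)=cH(B)$ for $c>0$, so the rescaling $(A,Z)\mapsto(\nucnorm Z\,A,\;Z/\nucnorm Z)$ preserves both the response and the product. Hence $\Jstar=g$, and the minimising responses are exactly the $A H(B)$ with $B\in\Eg$ and $A$ feasible for \eqref{eq:rhoB} with $\frob A=g$.

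\emph{Existence, compactness, uniqueness.} For fixed $B$, \eqref{eq:rhoB} minimises a strictly convex function over a closed polyhedron. So when it is feasible it has a unique minimiser $A_B$, which gives uniqueness of the response for each $B$. Non-emptiness of $\Eg$ follows from three facts. First, $g<\infty$: width $N$ with one hidden unit per cell separates the cells, so some $B$ is feasible. Second, $\nucsphere$ is compact. Third, $\rho$ is lower semicontinuous. To see the third, take $B_n\to B$ with $\rho(B_n)$ bounded. Then the $A_{B_n}$ are bounded, a subsequence converges to some $A$, and $A$ is feasible at $B$ because the constraints $(e_{C(x)}-e_a)^\top AH(B)_x\ge1$ are closed conditions, $H$ being continuous. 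This gives $\rho(B)\le\liminf\rho(B_n)$. The sublevel set $\{\rho\le g\}=\Eg$ is then closed in a compact set, hence compact.

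\emph{Continuity of $B\mapsto F_B$ on $\Eg$.} This is the main obstacle, since $A_B$ solves a parametric QP. On $\Eg$ the value $\rho$ is constant and equal to $g$. Take $B_n\to B$ in $\Eg$. Every limit point $A$ of $A_{B_n}$ is feasible at $B$ by closedness, and has $\frob A=g=\rho(B)$. By uniqueness of the minimiser, $A=A_B$. So $A_{B_n}\to A_B$, and $F_B=A_BH(B)$ is continuous. Uniqueness is what rescues us: without it we would only get upper hemicontinuity.

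\emph{Phase criterion.} Strict correctness at a held-out cell $x$ is exactly $m_x(B)>0$. Then
\[
\Acc_T\ge\tau \iff \#\{x:m_x>0\}\ge k \iff s(B)>0,
\]
where $k=\lceil\tau\card T\rceil$ and $s(B)$ is the $k$-th largest held-out margin. Each $m_x$ is continuous in $F_B$, and the $k$-th order statistic of continuous functions is continuous, so $s$ is continuous on the compact set $\Eg$. Therefore $\smin$ and $\smax$ are attained. This gives $\Aminus\ge\tau\iff\smin>0$ and $\Aplus\ge\tau\iff\smax>0$. Substituting these into the three quantifiers of \eqref{eq:phase-quantifiers} yields \eqref{eq:finite-criterion}.
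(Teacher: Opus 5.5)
Your proposal is correct and follows the paper's proof essentially step for step: you normalise onto the nuclear sphere by positive homogeneity, get attainment and compactness from lower semicontinuity of $\rho$ together with the cell-selector feasibility, use uniqueness of $A_B$ to turn subsequential convergence into continuity on $\Eg$, and read the phases off the $k$-th order statistic of the strict held-out margins. The only thing the paper adds is a separate treatment of $\Omega=\varnothing$, where the unique minimiser has $Z=0$ and your rescaling $Z/\nucnorm Z$ is undefined; the conclusion still holds trivially there, since $g=0$, $\Eg$ is the whole sphere and every $A_B=0$.
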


A tie $s=0$ is on the bad side, matching \eqref{eq:acc}.  The criterion turns
the phase of one instance into two extrema over a compact set.

\begin{proposition}[The mixed phase is a degeneracy question]\label{rem:mixed-interval}
For fixed $(C,\Omega)$ the phases partition the threshold axis:
all-good on $(0,\Aminus]$, mixed on $(\Aminus,\Aplus]$, no-good on
$(\Aplus,1)$.  Hence
\begin{equation}\label{eq:mixed-iff}
  \exists\,\tau\in(0,1)\text{ with }(C,\Omega)\ \textsf{mixed}
  \iff
  \Acc_T(\cdot\,;C)\text{ is non-constant on }\{F_B:B\in\Eg\}.
\end{equation}
If the held-out accuracy of the minimum-norm response is almost surely
constant along a family, then $\pmix\equiv0$ on that family and its profile
lies on the edge of $\simplex$ joining $(1,0,0)$ to $(0,0,1)$ for every $\tau$.
\end{proposition}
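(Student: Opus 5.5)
The plan is to reduce everything to monotonicity of the phase conditions in $\tau$, using the definitions in \eqref{eq:phase-quantifiers} together with the fact that $\Aminus$ and $\Aplus$ are attained. By \Cref{thm:finite-phase-criterion}, the global responses are exactly $\{F_B:B\in\Eg\}$, with $\Eg$ non-empty and compact. The strict held-out accuracy takes only finitely many values, namely multiples of $1/\card T$. Hence $\Aminus=\min$ and $\Aplus=\max$ of $\Acc_T$ over the endpoint set are genuine extrema, attained by some endpoints, and $\Aminus\le\Aplus$. No continuity of $\Acc_T$ is needed, only finiteness of its range.

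\textbf{Partition of the threshold axis.} I would read off \eqref{eq:phase-quantifiers} as conditions on $\tau$:
\begin{itemize}
  \item all-good holds iff $\tau\le\Aminus$;
  \item mixed holds iff $\Aminus<\tau\le\Aplus$;
  \item no-good holds iff $\tau>\Aplus$.
\end{itemize}
Intersecting with $(0,1)$ gives the three intervals $(0,\Aminus]$, $(\Aminus,\Aplus]$ and $(\Aplus,1)$. These are disjoint and cover $(0,1)$, which matches \Cref{thm:phase-profile}. Some intervals may be empty, for instance when $\Aminus=0$ or $\Aplus=1$. The only care needed is the boundary conventions: ties are errors in \eqref{eq:acc}, and the inequalities are $\ge\tau$ versus $<\tau$.

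\textbf{The equivalence \eqref{eq:mixed-iff}.} If $\Acc_T$ is non-constant on the endpoint set, then $\Aminus<\Aplus$. Since $\Aplus\le1$, the interval $(\Aminus,\Aplus]\cap(0,1)$ is non-empty: it contains, for example, $(\Aminus+\Aplus)/2$. So some $\tau$ is mixed. Conversely, if some $\tau$ is mixed, then $\Aminus<\tau\le\Aplus$ forces $\Aminus<\Aplus$, so two endpoints have different accuracies. Non-constancy on $\Sq$ is the same as non-constancy on $\{F_B:B\in\Eg\}$, because accuracy depends only on the response.

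\textbf{Consequence for families.} Suppose the held-out accuracy of the minimum-norm response is almost surely constant along a family. Then for almost every mask $\Aminus=\Aplus$, so $\Gmix(C,\Omega)=0$ for every $\tau$. Taking expectations in \eqref{eq:phase-probabilities} gives $\pmix\equiv0$. Then $\profile=(\pall,0,\pnone)$ with $\pall+\pnone=1$, so the profile lies on the edge of $\simplex$ joining $(1,0,0)$ to $(0,0,1)$.

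The only genuine subtlety is attainment of the extrema, so that the endpoints of the intervals are closed on the correct side. This rests on the finite range of $\Acc_T$ and the non-emptiness in \Cref{thm:phase-profile}. I expect no real obstacle here; the rest is bookkeeping of strict versus non-strict inequalities.
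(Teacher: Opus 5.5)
Your proposal is correct and follows the paper's route. The paper also obtains the threshold-axis partition by reading the phase decomposition of \Cref{thm:phase-profile} for varying $\tau$, and it derives \eqref{eq:mixed-iff} from the identification of the global responses with $\{F_B:B\in\Eg\}$ in \Cref{thm:finite-phase-criterion}; your finite-range argument for attainment of $\Aminus,\Aplus$ and the explicit midpoint $(\Aminus+\Aplus)/2\in(0,1)$ only fill in details the paper leaves implicit (it works instead with the attained extrema $\smin,\smax$ over the compact set $\Eg$).
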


Two distinct responses of equal accuracy do not make an instance mixed, so
\eqref{eq:mixed-iff} is strictly stronger than non-uniqueness of the
minimum-norm response.  A spread of accuracies across finite-training runs is
not evidence of it either: such runs need not be global minimisers, and
their spread is typically removable by changing the optimiser or regulariser.
The natural source of the required degeneracy would be a nontrivial
stabiliser of the observed data inside the isomorphism group of
\Cref{def:quotient}, which by \Cref{thm:quotient-invariance} carries one
global response to another; we expect that stabiliser to be trivial with
high probability under the uniform mask law, and responses that respect a
symmetry produce score ties, which \eqref{eq:acc} counts as errors.

\subsection{\texorpdfstring{$L_2$}{L2}-regularised biases}
\label{app:bias}

Add a hidden bias $b\in\Rbb^{h_q}$ and an output bias $c\in\Rbb^q$, so that
$\theta^{\mathrm b}=(E_U,E_V,W,A,b,c)$ and
\begin{equation}\label{eq:biased-network}
  F_{\theta^{\mathrm b}}(u,v)=A\,\relu\!\bigl(W(E_U(u)+E_V(v))+b\bigr)+c,
  \qquad
  \lVert\theta^{\mathrm b}\rVert^2=\lVert\theta\rVert^2+\euc b^2+\euc c^2,
\end{equation}
and let $\mathcal S_q^{\mathrm b}(\Dset)$ be the minimum-norm set under the
constraints of \eqref{eq:minnorm}.  Superscript $\mathrm b$ marks the
corresponding phase events, probabilities, and profile.  With $Z=W\Emb$ and
$\mathcal F_{\mathrm b}$ the set of $(A,Z,b,c)$ satisfying the observed unit
margins for $F(x)=A\relu(Z\xicell_x+b)+c$, define
\begin{equation}\label{eq:biased-reduced-objective}
  \Phi_{\mathrm b}(A,Z,b,c)=\nucnorm Z+\tfrac12\bigl(\frob A^2+\euc b^2+\euc c^2\bigr).
\end{equation}

\begin{lemma}[Bias complexity bridge]\label{lem:bias-complexity-bridge}
The biased problem is equivalent, in optimal value and in score responses,
to $\min_{\mathcal F_{\mathrm b}}\Phi_{\mathrm b}$.  There is an absolute
constant $C_{\mathrm b}$ such that on every addition mask every biased
minimum-norm response $F$ has a finite-width bias-free representation
$F(x)=\widehat A\relu(\widehat Z\xicell_x)$ with
$\frob{\widehat A}\nucnorm{\widehat Z}\le C_{\mathrm b}q^{7/6}(\log q)^{1/3}$.
\end{lemma}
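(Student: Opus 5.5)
The plan has two parts: first reduce the biased norm to $\Phi_{\mathrm b}$ by factorisation, then use the addition carrier of \Cref{lem:addition-carrier} as a feasible point and absorb both biases into a bias-free reduced representation by exploiting one fixed direction of the cell indicators.

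\textbf{Equivalence with $\Phi_{\mathrm b}$.} I would reuse the factorisation step behind \Cref{lem:reduced-normalization}. For a fixed product $Z=W\Emb$, the variational identity
\[
\min_{W\Emb=Z}\bigl(\frob W^2+\frob\Emb^2\bigr)=2\nucnorm Z
\]
holds, with the minimum attained by the balanced SVD factorisation $W=U\Sigma^{1/2}$, $\Emb=\Sigma^{1/2}V^\top$. The embedding dimension $d_q=2q+1$ exceeds $\rank Z\le 2q$, so this factorisation is always admissible. The biases $b,c$ and the head $A$ enter the score only through $(A,Z,b,c)$, so they are untouched by this minimisation. Hence
\[
\tfrac12\lVert\theta^{\mathrm b}\rVert^2\ \ge\ \Phi_{\mathrm b}(A,Z,b,c),
\]
with equality at the balanced factorisation. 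Any biased network also yields the same score response from its $(A,Z,b,c)$. Together these give equality of optimal values, and the responses of the two minimisation problems then coincide. Existence of minimisers follows as in \Cref{thm:phase-profile}: the feasible set is closed and $\Phi_{\mathrm b}$ is coercive.

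\textbf{A priori bound on every biased endpoint.} The carrier $(A_{\rm car},Z_{\rm car})$ with $b=0$ and $c=0$ is feasible on every addition mask. Rescaling $Z\to tZ$, $A\to A/t$ preserves its response by ReLU homogeneity. Optimising
\[
t\nucnorm{Z_{\rm car}}+\frob{A_{\rm car}}^2/(2t^2)
\]
over $t>0$ gives the feasible value $\tfrac32J_{\rm car}^{2/3}$. Therefore the optimal value satisfies
\[
V_{\mathrm b}\le\tfrac32J_{\rm car}^{2/3}=O\bigl(q^{2/3}(\log q)^{1/3}\bigr).
\]
Since each term of $\Phi_{\mathrm b}$ is nonnegative, every minimiser has $\nucnorm Z$, $\frob A^2$, $\euc b^2$ and $\euc c^2$ each at most $2V_{\mathrm b}$.

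\textbf{Removing the biases.} Let $z=(\mathbf 1_q;0_q)\in\Rbb^{2q}$. Then $z^\top\xicell_x=1$ for every cell $x$, so $z$ acts as a constant input feature.
\begin{itemize}[leftmargin=1.2em,itemsep=0pt,topsep=2pt,parsep=0pt,partopsep=0pt]
  \item The hidden bias is absorbed as $Z\xicell_x+b=(Z+bz^\top)\xicell_x$.
  \item The output bias is produced by one extra hidden unit with row $z^\top$, since $\relu(z^\top\xicell_x)=1$, carrying head column $c$.
\end{itemize}
I would therefore set
\[
\widehat A=[\,A\ \ c\,],\qquad \widehat Z=\bigl[\,Z+bz^\top;\ z^\top\,\bigr],
\]
which represents $F$ exactly without biases. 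The norms are controlled as follows:
\[
\frob{\widehat A}\le\sqrt{4V_{\mathrm b}},\qquad
\nucnorm{\widehat Z}\le\nucnorm Z+\euc b\sqrt q+\sqrt q .
\]
The second bound uses the triangle inequality and the fact that a rank-one matrix has nuclear norm equal to the product of its factor norms. Because $V_{\mathrm b}=o(q)$, the $\sqrt q$ terms dominate, giving $\nucnorm{\widehat Z}=O\bigl(\sqrt{V_{\mathrm b}q}\bigr)$. The product is then
\[
\frob{\widehat A}\,\nucnorm{\widehat Z}=O\bigl(V_{\mathrm b}\sqrt q\bigr)=O\bigl(q^{7/6}(\log q)^{1/3}\bigr),
\]
as claimed.

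\textbf{Main obstacle.} The step needing most care is the first: showing that the reduction preserves the set of minimising responses, not only the optimal value. This requires that every optimal $\theta^{\mathrm b}$ is factor-balanced, so that its $(A,Z,b,c)$ minimises $\Phi_{\mathrm b}$, and conversely that every minimiser of $\Phi_{\mathrm b}$ lifts to an optimal $\theta^{\mathrm b}$. I would handle this with the equality case of the variational identity: $\frob W^2+\frob\Emb^2=2\nucnorm{W\Emb}$ forces $W^\top W=\Emb\Emb^\top$. Unlike the bias-free case, no rescaling between $A$ and $Z$ is applied in this step, since $b$ does not scale with $Z$.
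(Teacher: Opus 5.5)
Your proposal is correct and follows essentially the paper's route. The factorisation identity gives the equivalence with $\Phi_{\mathrm b}$, and the balanced carrier with $b=c=0$ bounds the biased optimum by $\tfrac32J_{\rm car}^{2/3}=O(q^{2/3}(\log q)^{1/3})$. The biases are then removed by a rank-one update of $Z$ along a direction on which every $\xicell_x$ evaluates to $1$, plus one constant hidden unit carrying $c$.

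The differences are cosmetic. You use $(\mathbf 1_q;0_q)$ where the paper uses $\tfrac12\mathbf 1_{2q}$, which costs only a factor $\sqrt2$. You also stack everything into one representation instead of invoking $\Jred(F_1+F_2)\le2(J_1+J_2)$.

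One small point: the bare $\sqrt q$ term in $\nucnorm{\widehat Z}$ is not $O(\sqrt{V_{\mathrm b}q})$ without a lower bound on $V_{\mathrm b}$. It contributes only $O(\sqrt{V_{\mathrm b}q})$ to the product, though, so your final bound stands.
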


The representation is an analysis device whose width may exceed $h_q$ by one
unit, which \Cref{lem:one-sided-notch} permits.

\begin{corollary}[Transfer to $L_2$-regularised biases]\label{cor:l2-bias-transfer}
For the model \eqref{eq:biased-network}:
\begin{enumerate}[label=(\roman*),leftmargin=2.2em,itemsep=2pt,topsep=2pt]
  \item feasibility, phase decomposition, and quotient invariance hold, so
        $\Pi^{\mathrm b}_{q,\tau}([C])$ is well defined on $\Qallq$;
  \item for fixed $\alpha,\tau\in(0,1)$,
        $\min_{\theta^{\mathrm b}\in\mathcal S_q^{\mathrm b}(\Dset(\Cadd,\Omega))}
        \Acc_T(\theta^{\mathrm b};\Cadd)\xrightarrow{\mathrm P}1$,
        $P^{\forall,\mathrm b}_{q,\tau}(\Madd)\to1$, and
        $\Pi^{\mathrm b}_{q,\tau}([\Cadd])\to(1,0,0)$;
  \item uniformly over deterministic biased selectors,
        $P^{\mathsf A,\mathrm b}_{q,\tau}(\Munif)\le(2e^2)^Nq^{-\tau(1-\alpha)N}\to0$;
        hence $P^{\forall,\mathrm b}_{q,\tau}(\Munif)\to0$ and biased
        all-good colourings have zero uniform mass;
  \item on a finite instance, with $s_{-,\mathrm b},s_{+,\mathrm b}$ the
        extrema of the $\lceil\tau\card T\rceil$-th largest held-out margin
        over the biased reduced minimiser set, the three phases are
        characterised by the sign conditions of \eqref{eq:finite-criterion};
        \Cref{thm:classification-readiness} holds for the biased profiles.
\end{enumerate}
\end{corollary}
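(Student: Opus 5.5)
The plan is to handle the four parts in order, reusing the bias-free arguments wherever the biased model differs only in bookkeeping.

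\emph{Part (i).} Feasibility is immediate: any bias-free interpolator, padded with $b=0,c=0$, satisfies the margins. Existence of minimisers then follows by coercivity of $\lVert\theta^{\mathrm b}\rVert^2$ on the closed feasible set. For this step I would work in the reduced form of \Cref{lem:bias-complexity-bridge}. The objective $\Phi_{\mathrm b}$ is coercive. The constraints are closed because the map $(A,Z,b,c)\mapsto F$ is continuous at fixed width $N$. Width $N$ suffices by a Carathéodory argument, exactly as in \Cref{thm:phase-profile}.

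The phase decomposition needs no change. It only uses that $\Aminus\le\Aplus$ are attained extrema over a nonempty compact set of responses.

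Quotient invariance repeats the proof of \Cref{thm:quotient-invariance}, with two additions. Row and column relabellings permute the embedding tables and leave $b$ untouched. A colour relabelling $\gamma$ permutes the rows of $A$ and the entries of $c$ together. Both moves preserve $\euc b$ and $\euc c$, the margins, and strict correctness, so the biased profile is well defined on $\Qallq$.

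\emph{Part (ii).} This part carries the real content. I rerun the proof of \Cref{thm:main}\ref{it:main-add} with the endpoint representation $(A_F,Z_F)$ replaced by the bias-free analysis representation $(\widehat A,\widehat Z)$ from \Cref{lem:bias-complexity-bridge}. Its product is $\le C_{\mathrm b}q^{7/6}(\log q)^{1/3}$.

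Form $D=9F-\Ccar$ as before. Rebalance both pieces by homogeneity and stack them. By the same computation as \eqref{eq:JD},
\[
\Jred(D)\le\sqrt2\bigl(9C_{\mathrm b}q^{7/6}(\log q)^{1/3}+C_0q\sqrt{\log q}\bigr)=:B_q^{\mathrm b}.
\]
The budget condition still holds:
\[
\frac{(B_q^{\mathrm b})^2\log q}{q^{5/2}}=O\!\bigl(q^{-1/6}(\log q)^{5/3}\bigr)\to0 .
\]
So \Cref{lem:one-sided-notch} applies with $\beta=\tfrac12(1-\tau)(1-\alpha)$. The sign arguments on observed and held-out cells are identical, since they only use the score differences of $F$.

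The main obstacle is hidden inside the bridge lemma, which I take as given. Beyond that, one must check that the output bias $c$ cancels in every truth--rival difference, so it never has to be represented. The hidden bias, in turn, is absorbed by the bridge's one extra unit. The three conclusions ($\Aminus\to1$ in probability, $P^{\forall,\mathrm b}\to1$, and the profile limit) then follow as in the bias-free case.

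\emph{Part (iii).} \Cref{lem:uniform-selector-count} never looks at the model, so the bound $(2e^2)^Nq^{-\tau(1-\alpha)N}$ holds for every learner, and biased selectors are learners. On a biased all-good instance every selector succeeds, so $P^{\forall,\mathrm b}\le P^{\mathsf A,\mathrm b}\to0$. Markov's inequality then gives vanishing uniform mass.

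\emph{Part (iv).} I repeat the proof of \Cref{thm:finite-phase-criterion} on the reduced biased problem. Normalise $(Z,b)$ jointly to a compact sphere, and let the inner problem be the strictly convex minimum-norm problem in $(A,c)$. This gives a unique, continuous optimal $(A,c)$ on the compact optimal set, and the sign characterisation via $s_{\pm,\mathrm b}$ follows verbatim. The readiness theorem only consumes the phase decomposition and quotient invariance, so it transfers by part (i).
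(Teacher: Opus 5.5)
Parts (i)--(iii) follow the paper's proof. The ingredients match: the $b=c=0$ embedding with coercivity at the fixed width $h_q$ (no Carath\'eodory step is needed, since the model fixes the width), permuting $c$ together with the rows of $A$, and the bridge representation of $F$ stacked with the carrier in $D=9F-\Ccar$. The model-free counting bound is also the same. Your balanced stack as in \eqref{eq:JD} gives $\Jred(D)\le\sqrt2\,(9\widehat J+J_{\rm car})$ with $\widehat J=\frob{\widehat A}\nucnorm{\widehat Z}$. This is slightly sharper than the paper's appeal to \eqref{eq:reduced-sum-bound}, and either suffices.

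Your side remark in (ii) is wrong, though. The gap $F_{C(x)}(x)-F_a(x)$ contains $c_{C(x)}-c_a$, so the output bias does not cancel and cannot be dropped from the observed margins. You also have the roles reversed. In \Cref{lem:bias-complexity-bridge} the hidden bias is absorbed by the rank-one shift $Z+\tfrac12 b\mathbf 1_{2q}^{\top}$, and the one extra unit $z_c=\tfrac12\mathbf 1_{2q}^{\top}$ is what carries $c$. Part (ii) survives only because the bridge lemma, which you use as a black box, already represents the full response with both biases. The ``check'' should be deleted, not performed.

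The genuine gap is in (iv). Normalising onto $\nucsphere$ works in the bias-free case because, once the scale is optimised out, the objective is a monotone function of the scale-invariant product $\frob A\nucnorm Z$. $\Phi_{\mathrm b}$ has no such structure. Under the response-preserving map $(A,Z,b,c)\mapsto(t^{-1}A,tZ,tb,c)$, its terms $\nucnorm Z$, $\euc b^2$, $\frob A^2$, $\euc c^2$ scale like $t$, $t^2$, $t^{-2}$, $1$. Fixing $(Z,b)$ on a sphere and solving the min-norm problem in $(A,c)$ therefore weights $A$ against $c$ incorrectly. Its minimising directions need not be those of biased global minimisers. If you instead optimise the scale out, the inner objective is no longer convex. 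For directions with $b=0$ it becomes $\tfrac32(\nucnorm{Z_0}\frob{A'})^{2/3}+\tfrac12\euc c^2$, where $A'$ is the output matrix paired with $Z_0$. So the unique, continuous $(A,c)$ you invoke is unavailable.

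No normalisation is needed. As in the paper, $E_{g,\mathrm b}=\{(A,Z,b,c)\in\mathcal F_{\mathrm b}:\Phi_{\mathrm b}=g_{\mathrm b}\}$ is non-empty and compact, because $\Phi_{\mathrm b}$ is coercive on the closed fixed-width feasible set. Its responses are exactly the biased global responses, and the held-out margins are continuous in $(A,Z,b,c)$. The extrema of the $k$-th largest margin are therefore attained, and the last step of \Cref{thm:finite-phase-criterion} goes through without any uniqueness. If you want to keep your inner-problem picture, fix $(Z,b)$ without normalising and use coercivity in $(Z,b)$. Finally, the biased readiness theorem also needs the addition vertex, which comes from your (ii), not from (i) alone.
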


\subsection{Unstructured colourings}
\label{app:uniform-counting}

Removing the Latin constraint gives an elementary independent-label bound for
comparison with \Cref{lem:uniform-selector-count}.

\begin{proposition}[Unstructured colourings]\label{prop:general-colouring}
Let $C\sim\Unif(\Cq)$.  For any learner, any $\alpha\in(0,1)$, and any
$\tau>1/q$,
\begin{equation}\label{eq:general-colouring}
  \Prob_{C\sim\Unif(\Cq),\,\Omega\sim\nu_q}
  \bigl[\Acc_T(\mathsf A_q(\Dset(C,\Omega));C)\ge\tau\bigr]
  \le\exp\bigl(-2(\tau-1/q)^2\card T\bigr).
\end{equation}
\end{proposition}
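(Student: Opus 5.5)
Fix the mask $\Omega$ first, since $C$ and $\Omega$ are independent. Conditional on $\Omega$, the learner's output $\mathsf A_q(\Dset(C,\Omega))$ is a deterministic function of the observed labels $(C(x))_{x\in\Omega}$. Under $C\sim\Unif(\Cq)$ the $N$ labels are i.i.d.\ uniform on $\qset$. The held-out labels $(C(x))_{x\in T}$ are therefore independent of the observed ones and remain i.i.d.\ uniform after conditioning on the whole dataset. A randomised learner is handled by also conditioning on its coins, as in \Cref{sec:main}.

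Now condition on $\Omega$ and on the observed labels. The score table $S=\mathsf A_q(\Dset)$ is then fixed. For each $x\in T$, strict correctness requires $S_{C(x)}(x)>\max_{a\ne C(x)}S_a(x)$, and at most one label $a$ can strictly dominate all others at $x$. Hence the indicator $I_x=\mathbf 1\{S_{C(x)}(x)>\max_{a\ne C(x)}S_a(x)\}$ satisfies $\Prob[I_x=1]\le 1/q$. The $I_x$ for $x\in T$ are independent, because they are functions of the distinct independent labels $C(x)$ with $S$ fixed.

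Next, $\Acc_T=\card T^{-1}\sum_{x\in T}I_x$ is an average of $\card T$ independent $[0,1]$-valued variables with mean at most $1/q$. Hoeffding's inequality, applied to the deviation above the true mean $\mu\le 1/q$, gives
\[
\Prob\bigl[\Acc_T\ge\tau\bigr]\le\Prob\bigl[\Acc_T-\mu\ge\tau-1/q\bigr]\le\exp\bigl(-2(\tau-1/q)^2\card T\bigr)
\]
whenever $\tau>1/q$. The bound does not depend on the conditioning, since $\card T=N-\lfloor\alpha N\rfloor$ is deterministic. Averaging over the observed labels and over $\Omega$ then yields \eqref{eq:general-colouring}.

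The only step that needs care is the one-label-at-most observation: ties count as errors, so at most one label can be strictly correct at each cell. Everything else is standard, and I expect no real obstacle.
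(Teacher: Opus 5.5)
Your proposal is correct and follows the paper's proof. The paper's one-line argument is the same: conditionally on $\Dset$, the held-out labels are i.i.d.\ uniform, each strict prediction is correct with probability at most $1/q$ independently across cells, and Hoeffding's inequality with deviation $\tau-1/q$ gives the bound; you spell out the details it leaves implicit, namely the uniqueness of a strict winner, the centring at $\mu\le 1/q$, and the averaging over the conditioning.
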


\begin{proof}
Conditionally on $\Dset$, the labels on $T$ are independent and uniform, so a
strict prediction is correct with probability at most $1/q$ per cell,
independently; Hoeffding's inequality gives the bound.
\end{proof}

\subsection{A proof-guided observation-scale question}
\label{app:proof-guided-scaling}

Tracking the discarded-cell parameter $\eta$ through the atlas calculation of
\Cref{app:notch-proof} suggests that the notch argument tolerates
\begin{equation}\label{eq:alpha-floor}
  \alpha_q\gg\frac{(\log q)^3}{\sqrt q},
  \qquad\text{equivalently}\qquad
  n_q\gg q^{3/2}(\log q)^3.
\end{equation}
Turning this into a theorem requires a uniform accounting of the
$\eta$-dependence in the covering and concentration bounds; the
$O(q\operatorname{polylog}q)$ scale is a separate target.

\subsection{A formal target for complete asymptotic classification}
\label{app:classification-readiness}

Fix $\tau\in(0,1)$.  Let $\Kinf$ be the set of subsequential limits of
$\profile([C_{q_j}])$ over sequences of orders $q_j\to\infty$ and classes
$[C_{q_j}]\in\mathcal Q_{q_j}$.  Set
$\vertexV=(1,0,0)$ and $\edgeE=\{(0,\lambda,1-\lambda):0\le\lambda\le1\}$,
and define the uniform polarisation defect
\begin{equation}\label{eq:defect}
  \Udefect=\sup_{[C]\in\Qq}\pall([C])\bigl(1-\pall([C])\bigr).
\end{equation}
\emph{Full edge realisation} holds when for every $\lambda\in[0,1]$ there are
$q_j\to\infty$ and classes with $p^{\forall}_{q_j,\tau}([C_{q_j}])\to0$ and
$p^{\mathrm{mix}}_{q_j,\tau}([C_{q_j}])\to\lambda$.

\begin{theorem}[Classification readiness]\label{thm:classification-readiness}
\Cref{thm:main}\ref{it:main-add} gives $\vertexV\in\Kinf$.  Then
$\Kinf=\{\vertexV\}\cup\edgeE$ holds if and only if both $\Udefect\to0$ and
full edge realisation holds.  Under these two conditions a convergent profile
sequence tends to $\vertexV$ iff $\pall\to1$, and to $(0,\lambda,1-\lambda)$
iff $\pall\to0$ and $\pmix\to\lambda$.
\end{theorem}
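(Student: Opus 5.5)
The theorem has three parts: membership of $\vertexV$, the ``if and only if'' characterisation of $\Kinf$, and the description of convergent sequences. For the first, I take the addition orbits $[\Cadd]$ along all $q\to\infty$. \Cref{thm:main}\ref{it:main-add} gives $\profile([\Cadd])\to(1,0,0)$, so $\vertexV\in\Kinf$. One preliminary fact is needed throughout: $\Kinf$ is contained in the compact simplex $\simplex$, since every profile lies there by \Cref{thm:phase-profile}. Hence every sequence has convergent subsequences, and the limit points are exactly the elements of $\Kinf$.

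For ``if'', I assume $\Udefect\to0$ and full edge realisation, and take any subsequential limit $(a,b,c)$ along classes $[C_{q_j}]$. Then $\pall_j(1-\pall_j)\le U_{q_j,\tau}\to0$, so every accumulation point of $\pall_j$ lies in $\{0,1\}$. Since $\pall_j\to a$, we get $a\in\{0,1\}$.
\begin{itemize}
\item If $a=1$, then $b=c=0$ because the coordinates sum to one, and the limit is $\vertexV$.
\item If $a=0$, the limit is $(0,b,1-b)\in\edgeE$.
\end{itemize}
This shows $\Kinf\subseteq\{\vertexV\}\cup\edgeE$. For the reverse inclusion, $\vertexV\in\Kinf$ by the first part. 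For $\lambda\in[0,1]$, full edge realisation supplies classes with $\pall\to0$ and $\pmix\to\lambda$, which forces $\pnone\to1-\lambda$ and puts $(0,\lambda,1-\lambda)$ in $\Kinf$.

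For ``only if'', full edge realisation follows directly: each point $(0,\lambda,1-\lambda)$ of $\edgeE$ lies in $\Kinf$, so by definition it is a subsequential limit of profiles of some classes with $q_j\to\infty$, and those classes satisfy $\pall\to0$ and $\pmix\to\lambda$. The harder step is $\Udefect\to0$, and I argue by contradiction. Suppose $\limsup U_{q,\tau}>\delta>0$. Then along some $q_j\to\infty$ there are classes with $\pall(1-\pall)>\delta/2$; they exist because the quotient $\Qq$ is finite, so the supremum is attained. Passing to a convergent subsequence by compactness gives a limit $(a,b,c)\in\Kinf$ with $a(1-a)\ge\delta/2$, so $a\notin\{0,1\}$. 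But every point of $\{\vertexV\}\cup\edgeE$ has first coordinate $0$ or $1$, which is a contradiction. The two points to handle carefully here are that the supremum is over finitely many classes, so an approximate maximiser suffices anyway, and that $\Kinf$ is taken over all order sequences, which lets us use the subsequence built from these maximisers.

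The last claim is about a convergent profile sequence. Under the two conditions its limit lies in $\{\vertexV\}\cup\edgeE$, and these points are distinguished by the first coordinate. So the sequence tends to $\vertexV$ if and only if its first coordinate $\pall\to1$. It tends to $(0,\lambda,1-\lambda)$ if and only if $\pall\to0$ and $\pmix\to\lambda$, with the third coordinate then determined by the simplex constraint. The converse directions here are immediate from coordinatewise convergence. I expect no real obstacle; the only step needing care is the contradiction argument for $\Udefect$, where compactness of $\simplex$ and the definition of $\Kinf$ over arbitrary order sequences do the work.
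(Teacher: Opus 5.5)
Your proposal is correct and follows essentially the same route as the paper. The bound $\pall(1-\pall)\le\Udefect$ and the simplex identity give $\Kinf\subseteq\{\vertexV\}\cup\edgeE$, the realising sequences give $\edgeE\subseteq\Kinf$, and the converse takes a convergent subsequence of near-maximisers of the defect, whose limit has first coordinate outside $\{0,1\}$ (your added note that $\Qq$ is finite, so those maximisers exist, fills a step the paper leaves implicit).
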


The theorem packages the remaining programme as a polarisation estimate and a
realisation construction.  Realisation is the harder half:
\Cref{rem:mixed-interval} says it needs instances on which $\Acc_T$ is
non-constant over the global responses, and none is known.

%% file: appendix/01-proofs.tex
\section{Proofs}\label{app:proofs}

This appendix gives the detailed proofs for
\Cref{sec:model}, \Cref{thm:main}, and \Cref{app:supp-theory}; the abelian extension
\Cref{cor:abelian-allgood} is proved in \Cref{app:abelian-carriers}.
\Cref{app:prelim-proofs} covers
the modelling section, \Cref{app:struct-proofs} the invariance theorem and
the finite-instance criterion, \Cref{app:sep-proofs} the two poles of
\Cref{thm:main}, \Cref{app:bias-proofs} the bias transfer, and
\Cref{app:external} states the external results in the forms used.

\subsection{Proofs for \texorpdfstring{\Cref{sec:model}}{the modelling section}}
\label{app:prelim-proofs}

\begin{proof}[Proof of \Cref{thm:bipartite-model}]
Assigning the value $C(u,v)$ to $U_uV_v$ is a bijection between label tables and
edge colourings with colour set $\qset$, for every integer $q\ge2$.

Now suppose $q$ is a prime power and identify $\qset$ with $\mathbb F_q$.
For $a\in\mathbb F_q$, define
\begin{equation}
  \ell_a(t)=1-(t-a)^{q-1}.
\end{equation}
The finite-field identity $z^{q-1}=1$ for $z\ne0$ gives
$\ell_a(t)=\mathbf 1\{t=a\}$ for $t\in\mathbb F_q$. Hence
\begin{equation}\label{eq:polynomial-interpolant}
  P_C(X,Y)
  =\sum_{u,v\in\mathbb F_q}
    C(u,v)\,\ell_u(X)\ell_v(Y)
\end{equation}
has degree at most $q-1$ in each variable and agrees with $C$ on
$\mathbb F_q^2$. The vector space of such reduced polynomials and the vector
space of functions $\mathbb F_q^2\to\mathbb F_q$ both have dimension $q^2$.
Since \eqref{eq:polynomial-interpolant} makes the evaluation map surjective, it
is bijective, proving uniqueness.

Suppose $C$ is a Latin colouring. Fixing a left endpoint $U_u$, the colours
$C(u,0),\dots,C(u,q-1)$ are pairwise distinct because each symbol appears once in
row $u$; fixing a right endpoint gives the same statement for columns. Hence
edges sharing an endpoint receive distinct colours, which is properness.

Conversely, properness says the $q$ edges at each left vertex, and at each right
vertex, carry pairwise distinct colours. Since there are exactly $q$ colours
available, each colour occurs exactly once in every row and every column, which
is the Latin condition.

Fix a colour $a$. The Latin condition gives each left vertex and each right
vertex exactly one incident edge of colour $a$, so $M_a$ is a perfect matching.
The classes are pairwise disjoint and cover $E(K_{q,q})$ by construction.
Conversely, if the edge set decomposes into $q$ perfect matchings then each
colour occurs exactly once at each vertex, which is properness and hence the
Latin condition.

Under each bijection, the mask selects the same evaluations, cells, or edges.
\end{proof}

\begin{proof}[Proof of \Cref{thm:phase-profile}: non-emptiness]
We exhibit a feasible point of finite norm; attainment then follows from
closedness and coercivity. Let $e_0,\dots,e_{2q}$ be the standard basis of
$\Rbb^{d_q}$, $d_q=2q+1$, and set
\begin{equation}
  E_U(u)=e_u+\tfrac12e_{2q},
  \qquad
  E_V(v)=e_{q+v}+\tfrac12e_{2q},
\end{equation}
so that $E_U(u)+E_V(v)=e_u+e_{q+v}+e_{2q}$. For each cell $(i,j)$ place one
hidden row
\begin{equation}
  w_{ij}=e_i+e_{q+j}-\tfrac32e_{2q}.
\end{equation}
Its preactivation at cell $(u,v)$ is
$\mathbf 1\{u=i\}+\mathbf 1\{v=j\}-\tfrac32$, which equals $\tfrac12$ at
$(i,j)$ and is at most $-\tfrac12$ elsewhere. Hence after $\relu$ the unit is
active only at $(i,j)$, with value $\tfrac12$. Let this unit output with weight
$2$ to class $C(i,j)$ and $0$ to every other class. The resulting network has
score $1$ at the true class of every cell and $0$ at every other class, so the
margin constraint of \eqref{eq:minnorm} is met with equality on the whole grid,
in particular on $\Omega$. There are $q^2=h_q$ such units, so the width is
respected and the norm is finite.

The feasible set of \eqref{eq:minnorm} is closed, being an intersection of
preimages of closed sets under the continuous map $\theta\mapsto F_\theta$, and
$\theta\mapsto\tfrac12\lVert\theta\rVert^2$ is coercive. A coercive continuous
function on a non-empty closed set attains its infimum, so
$\Sq(\Dset)\ne\varnothing$.
\end{proof}

\subsection{Proofs of invariance and of the finite-instance criterion}
\label{app:struct-proofs}

\begin{proof}[Proof of \Cref{thm:quotient-invariance}]
Consider first the direct case of \Cref{def:quotient}. Given $\Omega'$, set
\begin{equation}
  \Omega=\{(\rho(u),\kappa(v)):(u,v)\in\Omega'\},
\end{equation}
a bijection of the space of subsets of a fixed size, hence measure-preserving for
$\nu_q$. Given $\theta=(E_U,E_V,W,A)$ define
\begin{equation}
  E_U'(u)=E_U(\rho(u)),\qquad
  E_V'(v)=E_V(\kappa(v)),\qquad
  W'=W,
\end{equation}
and let $A'$ have rows $A'_{\gamma(a),:}=A_{a,:}$ for $a\in\qset$. Then
\begin{equation}\label{eq:score-transport}
  F_{\theta',\gamma(a)}(u,v)=F_{\theta,a}(\rho(u),\kappa(v)).
\end{equation}
Each transformation permutes the entries of a finite table or the rows of a
matrix, so $\lVert\theta'\rVert=\lVert\theta\rVert$. By
\eqref{eq:score-transport} the training constraints correspond term by term, and
applying the same construction with the inverse permutations yields the inverse
map; hence we have a norm-preserving bijection between feasible sets, which must
map global minimisers onto global minimisers. At a test cell the true label is
transported by the same $\gamma$, so strict argmax correctness is preserved cell
by cell, and therefore so is $\Acc_T$.

For the transposed case set
$\Omega=\{(\kappa(v),\rho(u)):(u,v)\in\Omega'\}$ and
\begin{equation}
  E_U'(u)=E_V(\rho(u)),\qquad
  E_V'(v)=E_U(\kappa(v)),
\end{equation}
with $A'$ permuted by $\gamma$ as before. The new preactivation at $(u,v)$ equals
the old one at $(\kappa(v),\rho(u))$. The two embedding tables enter
\eqref{eq:network} symmetrically, so swapping them is again norm-preserving, and
the uniform mask law is invariant under these coordinate permutations. Hence all
four events and their probabilities are unchanged.
\end{proof}

\begin{proof}[Proof of \Cref{thm:phase-profile}: decomposition]
By the first part the set $\Sq(\Dset)$ is non-empty. If it contains no good
global minimiser then $\Aplus<\tau$, which is \textsf{no-good}. Otherwise
$\Aplus\ge\tau$; if additionally there is no bad global minimiser then all of them
are good, which is \textsf{all-good}, and if there is one then good and bad coexist,
which is \textsf{mixed}. The three cases are mutually exclusive and exhaustive.
Taking probabilities under $\nu_q$ gives $\pall+\pmix+\pnone=1$.  The
threshold-axis partition of \Cref{rem:mixed-interval} is the same statement
read for fixed $(C,\Omega)$ and varying $\tau$, and \eqref{eq:mixed-iff}
follows from \Cref{thm:finite-phase-criterion}, since the global responses
are exactly $\{F_B:B\in\Eg\}$.
\end{proof}

\begin{proof}[Proof of \Cref{lem:reduced-normalization}]
If $\Omega=\varnothing$, the zero parameters are the unique minimiser and
all stated quantities vanish.  Assume henceforth that $\Omega\ne\varnothing$.
For any factorisation $Z=W\Emb$ the nuclear-norm factorisation identity gives
\begin{equation}\label{eq:factor-identity}
  \frac{\frob{W}^2+\frob{\Emb}^2}{2}\ \ge\ \nucnorm{W\Emb}=\nucnorm{Z},
\end{equation}
with equality attained as follows: if $Z=U\Sigma V^{\top}$ is a compact SVD, take
$W=U\Sigma^{1/2}$ and $\Emb=\Sigma^{1/2}V^{\top}$, so that
$\frob{W}^2=\frob{\Emb}^2=\operatorname{tr}\Sigma=\nucnorm{Z}$. Since
$\rank Z\le2q<d_q=2q+1$, this factorisation respects the embedding dimension, and
padding with zero rows respects the hidden width.

Now fix a reduced representation $(A,Z)$. Positive homogeneity of $\relu$ gives,
for every $t>0$,
\begin{equation}
  A\relu(Z\xicell_x)=(t^{-1}A)\relu\bigl((tZ)\xicell_x\bigr),
\end{equation}
so the score table, and hence feasibility, is unchanged under
$(A,Z)\mapsto(t^{-1}A,tZ)$. Eliminating $W$ and $\Emb$ via
\eqref{eq:factor-identity}, the objective of \eqref{eq:minnorm} at the balanced
point becomes
\begin{equation}
  \phi(t)=t\,\nucnorm{Z}+\frac{\frob{A}^2}{2t^2}.
\end{equation}
Then $\phi'(t)=\nucnorm{Z}-\frob{A}^2/t^3$, so the unique stationary point is
$t^3=\frob{A}^2/\nucnorm{Z}$, it is a minimum since $\phi''>0$, and
\begin{equation}
  \min_{t>0}\phi(t)=\tfrac32\bigl(\frob{A}\,\nucnorm{Z}\bigr)^{2/3}.
\end{equation}
Minimising over all feasible reduced representations gives $\Vstar=\tfrac32\Jstar^{2/3}$. At
the optimal $t$, $\frob{A}^2$, $\frob{W}^2$, and $\frob{\Emb}^2$ all equal
$\Jstar^{2/3}$, which is the balanced factorisation.
\end{proof}

\begin{proof}[Proof of \Cref{thm:finite-phase-criterion}]
If $\Omega=\varnothing$, then $g=0$, $\Eg=\nucsphere$, and every $A_B=0$;
the only global response is zero and the claim follows directly.  Assume
$\Omega\ne\varnothing$ below.
Positive homogeneity normalises any non-zero reduced representation $(A,B)$ to
$(\nucnorm{B}A,\ B/\nucnorm{B})$ without changing either the scores or the product
$\frob{A}\nucnorm{B}$. Hence the minimal reduced product equals
$\inf_{B\in\nucsphere}\rho(B)$.

The set $\nucsphere$ is a compact subset of a finite-dimensional space. The
function $\rho$ is lower semicontinuous: if $B_i\to B$ with
$\liminf\rho(B_i)<\infty$, pass to a subsequence along which the fixed-list
minimisers $A_i$ are bounded and convergent; continuity of the constraints in
\eqref{eq:rhoB} shows the limit is feasible for $B$, so
$\rho(B)\le\liminf\rho(B_i)$. The cell-selector construction of
\Cref{thm:phase-profile} provides at least one feasible list, so $g<\infty$ and the
minimum is attained. As $\rho\ge g$ everywhere on $\nucsphere$, we may write
$\Eg=\{B:\rho(B)\le g\}$, which is closed in a compact set, hence compact.

For fixed $B$ the feasible $A$ form a closed convex set and $\frob{A}^2$ is
strictly convex, so the minimiser $A_B$ is unique. If $B_i\to B$ with all
$B_i\in\Eg$, then $\frob{A_{B_i}}=g$ for all $i$; any convergent subsequence has
a limit that is feasible at $B$ with norm $g$, hence equals $A_B$ by uniqueness.
Therefore the whole sequence converges and $B\mapsto A_B$ is continuous; $\relu$
is continuous, so $B\mapsto F_B$ is continuous. A finite-order statistic of
continuous functions is continuous, so the extrema $\smin,\smax$ in
\eqref{eq:s-extrema} are attained.

Any reduced global minimiser, once normalised, must have $B\in\Eg$, and its output
must be $A_B$: otherwise replacing it by the smaller-norm fixed-list output would
strictly decrease the product, contradicting optimality. Conversely every
$B\in\Eg$ paired with $A_B$ attains the minimal product and lifts, by the
balancing step of \Cref{lem:reduced-normalization}, to a global minimiser of the
original problem. Hence $\{F_B:B\in\Eg\}$ is exactly the set of global responses.

Finally, a response has strict test accuracy at least $\tau$ iff at least
$k=\lceil\tau\card{T}\rceil$ of its held-out margins are strictly positive, iff
$s(B)>0$. Therefore all global responses are good iff $\min_{\Eg}s>0$; good and bad
coexist iff the minimum is non-positive while the maximum is positive; and no
global is good iff the maximum is non-positive. An exact tie $s=0$ counts as bad
by the strict convention in \eqref{eq:acc}. This is \eqref{eq:finite-criterion}.
\end{proof}

\subsection{Proofs for \texorpdfstring{\Cref{sec:main}}{the main theorem}}
\label{app:sep-proofs}

\subsubsection{The addition carrier}\label{app:carrier-proof}

\begin{proof}[Proof of \Cref{lem:addition-carrier}]
We give the construction explicitly. Put
\begin{equation}
  h=\tfrac23,\qquad
  \epsilon=\frac{\sqrt2h^2}{4}=\frac{\sqrt2}{9},\qquad
  \rho=\frac{1-\epsilon}{2},\qquad
  c=\frac{2}{1-\epsilon},
\end{equation}
and
\begin{equation}
  L=\left\lceil\frac{2(\log(q-1)+1)}{(\rho+1/(q-1))^2}\right\rceil=O(\log q).
\end{equation}

\paragraph{Frequency selection.}
We claim there are non-zero frequencies $k_1,\dots,k_L\in\Zbb_q$ such that every
$e\ne0$ satisfies
\begin{equation}\label{eq:freq-cover}
  \frac1L\sum_{\ell=1}^{L}\cos(2\pi k_\ell e/q)\ \le\ \rho .
\end{equation}
Argue probabilistically: for uniform non-zero $k$, the mean of
$\cos(2\pi ke/q)$ is $-1/(q-1)$, since the full sum over all $k\in\Zbb_q$
vanishes. The summands lie in $[-1,1]$, so Hoeffding's inequality
\citep{hoeffding1963probability} bounds the probability that
\eqref{eq:freq-cover} fails at a fixed $e$ by
$\exp\{-L(\rho+1/(q-1))^2/2\}$. A union bound over the $q-1$ values of $e$ gives
total probability below $1$ by the choice of $L$, so the required frequencies
exist.

\paragraph{An exact ReLU surrogate for multiplication.}
Let $Q_h$ be the piecewise-linear interpolant of $t^2$ on the grid
$\{-2,-\tfrac43,-\tfrac23,0,\tfrac23,\tfrac43,2\}$. It has the exact ReLU
representation
\begin{equation}\label{eq:Qh}
  Q_h(t)=h\lvert t\rvert+2h\sum_{j=1}^{2}\pospart{\lvert t\rvert-jh},
  \qquad
  \lvert t\rvert=\pospart{t}+\pospart{-t},
\end{equation}
and satisfies $0\le Q_h(t)-t^2\le h^2/4$ for $\lvert t\rvert\le2$. Hence
\begin{equation}
  P(x,y)=\frac{Q_h(x+y)-Q_h(x-y)}{4}
\end{equation}
satisfies, for $\lvert x\rvert,\lvert y\rvert\le1$,
\begin{equation}\label{eq:prod-approx}
  \lvert P(x,y)-xy\rvert\le h^2/16 .
\end{equation}

\paragraph{Scores.}
For each frequency $k_\ell$, set
$\phi_\ell(t)=2\pi k_\ell t/q$ and use \eqref{eq:prod-approx} inside the
addition formulas:
\begin{align}
  c_\ell(u,v)
    &=P(\cos\phi_\ell(u),\cos\phi_\ell(v))
      -P(\sin\phi_\ell(u),\sin\phi_\ell(v)),\\
  s_\ell(u,v)
    &=P(\sin\phi_\ell(u),\cos\phi_\ell(v))
      +P(\cos\phi_\ell(u),\sin\phi_\ell(v)),
\end{align}
so that $\lvert c_\ell-\cos\phi_\ell(u+v)\rvert\le h^2/8$ and
$\lvert s_\ell-\sin\phi_\ell(u+v)\rvert\le h^2/8$. Define
\begin{equation}\label{eq:carrier-score}
  \Ccarc{a}(u,v)
  =\frac cL\sum_{\ell=1}^{L}
   \bigl(\cos\phi_\ell(a)\,c_\ell(u,v)
        +\sin\phi_\ell(a)\,s_\ell(u,v)\bigr).
\end{equation}

\paragraph{Margins.}
The idealised version of \eqref{eq:carrier-score}, with $c_\ell,s_\ell$ replaced
by exact cosines and sines, scores $c$ at $a=u+v$ and, by \eqref{eq:freq-cover},
at most $c\rho$ at any wrong class. Each realised score differs from its ideal by
at most $c\sqrt2h^2/8$, so every truth--rival gap is at least
$c(1-\rho-\sqrt2h^2/4)=1$, which is the lower bound in
\eqref{eq:carrier-margins}. For the upper bound, each ideal score is at most $c$
in absolute value and each realised score deviates by at most $c\epsilon/2$; since
$\epsilon=\sqrt2/9<1/6$ and $c=2/(1-\epsilon)<12/5$, every gap is at most
\begin{equation}
  2c\Bigl(1+\frac\epsilon2\Bigr)<\frac{24}{5}\cdot\frac{13}{12}=\frac{26}{5}<9 .
\end{equation}

\paragraph{Additivity and width.}
Every hinge in \eqref{eq:Qh} is the positive part of $r_u+s_v-t$. The constant $t$
may be split, half into the row potential and half into the column potential, so
each is a legitimate additive preactivation in the reduced coordinates $\xicell_{(u,v)}=e_u+e_{q+v}$ of \Cref{sec:model-endpoint}.
Each frequency needs only a constant number of hidden rows, so the total width is
$O(L)=O(\log q)$, which is below $q^2$ for large $q$.

\paragraph{Norms.}
Each preactivation row has Euclidean norm $O(\sqrt q)$ over the $2q$ row/column
potential coordinates, and each corresponding output column has Euclidean norm
$O(\sqrt{q}/L)$. Hence
\begin{equation}
  \nucnorm{Z}\le\sqrt{O(L)}\,\frob{Z}=O(L\sqrt q),
  \qquad
  \frob{A}=O(\sqrt{q/L}),
\end{equation}
so, taking these constructed factors as $(A_{\rm car},Z_{\rm car})$,
$J_{\rm car}:=\frob{A_{\rm car}}\nucnorm{Z_{\rm car}}
\le C_0q\sqrt{\log q}$ after fixing an absolute $C_0$, which is
\eqref{eq:carrier-margins}. A compact SVD of
$Z_{\rm car}$ finally yields admissible $W$ and $\Emb$ as in
\Cref{lem:reduced-normalization}.
\end{proof}

\subsubsection{The one-sided notch lemma}\label{app:notch-proof}

\begin{proof}[Proof of \Cref{lem:one-sided-notch}]
Throughout, $y(x)=\Cadd(x)$.

\paragraph{Step 0: the design Gram matrix.}
The full addition design satisfies
\begin{equation}\label{eq:gram}
  \opnorm{\sum_{x\in\Xq}\xicell_x\xicell_x^{\top}}=2q,
\end{equation}
because that Gram matrix is
$\begin{psmallmatrix}qI&\mathbf 1\mathbf 1^{\top}\\ \mathbf 1\mathbf 1^{\top}&qI\end{psmallmatrix}$.

\paragraph{Step 1: low-rank split.}
Let $m=\lceil\sqrt q\rceil$ and split $Z$ into its rank-$m$ SVD core $Z_m$ and
the tail. Write the $i$-th column of $A$ as $a_i$ and the $i$-th row of $Z_m$ as
$z_i^{(m)}$, and put the non-zero terms in the form
\begin{equation}
  A\relu(Z_m\xicell_x)=\sum_i s_iu_i\relu(w_i^{\top}\xicell_x),
\end{equation}
with $\euc{u_i}=\euc{w_i}=1$ and $s_i=\euc{a_i}\euc{z_i^{(m)}}$, so that
\begin{equation}
  \sum_i s_i\le\frob{A}\frob{Z_m}\le\frob{A}\nucnorm{Z}\le B_q .
\end{equation}
By \eqref{eq:gram} each normalised scalar gate has full-grid squared norm at most
$2q$, so Maurey sampling in Hilbert space \citep{pisier1981remarques} yields a
proxy built from $K_0$ core atoms with full-grid squared vector error at most
$2qB_q^2/K_0$.

The $i$-th tail atom is
$a_i\bigl(\relu(z_i^{\top}\xicell_x)-\relu((z_i^{(m)})^{\top}\xicell_x)\bigr)$.
Using that $\relu$ is $1$-Lipschitz, Cauchy--Schwarz, and the singular-value tail
bound $\frob{Z-Z_m}^2\le\nucnorm{Z}^2/(m+1)$ --- which follows from
$(m+1)\sigma_{m+1}\le\sum_{j\le m+1}\sigma_j\le\nucnorm{Z}$ and termwise
summation --- we get
\begin{equation}\label{eq:tail-bound}
  \Bigl(\sum_i\lVert\text{tail atom}_i\rVert_{\ell_2(\Xq;\Rbb^q)}\Bigr)^2
  \le2q\frob{A}^2\frob{Z-Z_m}^2
  \le\frac{2qB_q^2}{m+1},
\end{equation}
and sampling $K_1$ tail atoms divides the right-hand side by $K_1$.

\paragraph{Step 2: sampling budgets.}
Set
\begin{equation}
  \epsilon_q=\frac{B_q^2\log q}{q^{5/2}},
  \qquad
  r_q=\begin{cases}
    \min\{\log q,\ \epsilon_q^{-1/2}\},&\epsilon_q>0,\\
    \log q,&\epsilon_q=0,
  \end{cases}
\end{equation}
and
\begin{equation}\label{eq:budgets}
  K_0=\max\{1,\lceil r_qB_q^2/q\rceil\},
  \qquad
  K_1=\max\{1,\lceil r_qB_q^2/(qm)\rceil\}.
\end{equation}
Then $r_q\to\infty$ and $r_q\epsilon_q\to0$ by the hypothesis $\epsilon_q\to0$, and the
combined full-grid squared vector error of the two proxies is $O(N/r_q)=o(N)$.

Write the core proxy as $\widetilde D_0=B_0H_0$, dividing each sampled core scalar
gate by the fixed constant $\sqrt{2q}$ and multiplying that constant into the
output column. The Maurey construction gives
\begin{equation}\label{eq:aq}
  \frob{B_0}^2\le a_q^2:=C_1N/r_q,
\end{equation}
and the normalised feature table satisfies
$\sum_{x\in\Xq}\euc{H_0(x)}^2\le K_0$, so after discarding at most $\eta N$ cells,
\begin{equation}\label{eq:bq}
  \euc{H_0(x)}^2\le b_q^2:=K_0/(\eta N).
\end{equation}
The tail proxy remains explicit outside the covering theorem:
\begin{equation}\label{eq:tail-proxy}
  \widetilde T(x)=\sum_{j=1}^{K_1}v_j
  \bigl(\relu(r_j^{\top}\xicell_x)-\relu((r_j')^{\top}\xicell_x)\bigr),
\end{equation}
which avoids hiding a mask-adaptive component inside a fixed-feature cover.

\paragraph{Step 3: a mask-independent cover code for the core.}
Apply \Cref{ext:zhang} with $p=r=2$, using the block-diagonal lift over the
$q$ output columns in \Cref{ext:ledent}; that is,
represent one evaluation $(x,c)$ by the feature
\begin{equation}
  \zeta_{x,c}=e_c\otimes H_0(x)\in\Rbb^{qK_0}.
\end{equation}
On the retained cells $\euc{\zeta_{x,c}}\le b_q$, while the stacked output weight
has Euclidean norm $\frob{B_0}\le a_q$. Fix precision $\delta=1/16$. Zhang's proof
first fixes a deterministic threshold grid
$\{\theta_s\}_{s=0}^{L_q^{\rm grid}}\subset
[-a_qb_q-\delta/2,\ a_qb_q+\delta/2]$ with
$L_q^{\rm grid}\le\lceil4a_qb_q/\delta+2\rceil$; this grid is determined by
$q,\eta,B_q,r_q$ and is independent of the numerical feature table and mask.

Dispose first of the degenerate branch: if $\delta>a_qb_q$ then Cauchy--Schwarz
bounds every core score coordinate by $a_qb_q<\delta$, so the single zero centre
is already a valid cover; add it to the atlas as a deterministic code. Assume
henceforth $\delta\le a_qb_q$.

At $p=2$ the duality map in Zhang's argument is linear, so each cover centre is
determined by a finite non-negative integer multiplicity code
$\kappa=(\alpha_{x,c,s},\beta_{x,c,s})$ of total mass at most
\begin{equation}\label{eq:Mq}
  M_q:=\bigl\lceil36a_q^2b_q^2\delta^{-2}\bigr\rceil,
\end{equation}
defined on the ambient address set
$\Xq\times\qset\times\{0,\dots,L_q^{\rm grid}\}\times\{+,-\}$ after zero-filling unused
addresses. Zhang's augmented separating vector is
\begin{equation}\label{eq:code-centre}
\begin{aligned}
  y_\kappa(H_0)&=2\sum_{x,c,s}(\alpha_{x,c,s}-\beta_{x,c,s})\,\zeta_{x,c},\\
  a_qz_\kappa&=\frac{2}{a_q}\sum_{x,c,s}(-\alpha_{x,c,s}+\beta_{x,c,s})\,\theta_s ,
\end{aligned}
\end{equation}
where $z_\kappa$ depends only on the code and the deterministic grid. Zhang's
separation argument shows every code that actually produces a cover centre has
$z_\kappa>0$; in the enumeration we simply discard codes with $z_\kappa\le0$, so
no division by zero or sign ambiguity arises. The effective centre is
$\widehat w_\kappa(H_0)=y_\kappa(H_0)/z_\kappa$. If $a_q=0$ the core output
vanishes identically and the zero centre suffices, so the formula is only used
when $a_q>0$; a sampled gate that vanishes on the whole grid contributes a zero
feature and requires no division by a feature norm. By
\Cref{ext:zhang,ext:ledent}, every realised $B_0H_0$ is approximated
within $\delta$ at every output coordinate of every retained cell by some
effective code centre.

The number of codes is likewise fixed before the mask is drawn. The address count
is at most $L_q^{\mathrm{addr}}=2Nq(L_q^{\rm grid}+1)$, and the number of non-negative
integer codes of total mass at most $M_q$ is at most
$(L_q^{\mathrm{addr}}+1)^{M_q}$. From \eqref{eq:budgets}--\eqref{eq:bq},
\begin{equation}
  a_q^2b_q^2=O_\eta\Bigl(\frac{B_q^2}{q}+\frac1{r_q}\Bigr),
\end{equation}
and $a_qb_q$ is at most a fixed polynomial in $q$, so
\begin{equation}\label{eq:code-count}
  \log\#\{\kappa\}
  =O_\eta\Bigl(\Bigl(1+\frac{B_q^2}{q}+\frac1{r_q}\Bigr)\log q\Bigr)=o(N).
\end{equation}
The additive constant $1$ is the at most one extra atom created by the ceiling in
\eqref{eq:Mq}; it contributes $O(\log q)=o(N)$, and in the non-degenerate branch
$a_q^2b_q^2\ge\delta^2$, so it is absorbed by the main term.

\paragraph{Step 4: sign-pattern counting.}
Write the common rank-$m$ row space of the core as $R\in\Rbb^{m\times2q}$ and the
$j$-th sampled core normal as $R^{\top}t_j$, giving $2qm+mK_0$ continuous core
parameters; the tail atoms of \eqref{eq:tail-proxy} contribute at most $5qK_1$
parameters through $v_j\in\Rbb^q$ and $r_j,r_j'\in\Rbb^{2q}$. Hence
\begin{equation}
  D_q^{\mathrm{par}}\le2qm+mK_0+5qK_1
  =O\Bigl(q^{3/2}+\frac{r_qB_q^2}{\sqrt q}+q\Bigr),
\end{equation}
and therefore
\begin{equation}\label{eq:par-count}
  \frac{D_q^{\mathrm{par}}\log q}{N}
  =O\Bigl(\frac{\log q}{\sqrt q}+r_q\epsilon_q\Bigr)=o(1).
\end{equation}

For each $x$ and $j$, the core preactivation $t_j^{\top}R\xicell_x$ is a
degree-$2$ polynomial in the continuous parameters, and the tail preactivations
$r_j^{\top}\xicell_x$, $(r_j')^{\top}\xicell_x$ are degree-$1$.  By
\Cref{ext:caro}, the ternary sign patterns of these $(K_0+2K_1)N$ polynomials
give at most
$\exp(O(D_q^{\mathrm{par}}\log q))$; because zeros are counted, all exact zero
gates are retained.

On a fixed gate stratum each $H_{0,j}(x)$ is a polynomial of degree at most $2$,
so by \eqref{eq:code-centre} the $c$-th score of an effective code centre at cell
$x$ has the explicit form
\begin{equation}\label{eq:centre-poly}
  \widehat F_{\kappa,c}(x)
  =\frac{2}{z_\kappa}\sum_{s,c',\ell}
   \sigma_{\kappa,s,c',\ell}\,H_{0,\ell}(s)H_{0,\ell}(x),
\end{equation}
with integer coefficients $\sigma_{\kappa,s,c',\ell}$ determined entirely by the
code and free of continuous parameters. Thus \eqref{eq:centre-poly} is a
polynomial of degree at most $4$ divided by the positive code-only constant
$z_\kappa$. The tail score \eqref{eq:tail-proxy} is a polynomial of degree at most
$2$ on a fixed tail-gate stratum. Clearing the positive denominator, each
truth--rival half-gap comparison
\begin{equation}
  \widehat F_{\kappa,a}(x)+\widetilde T_a(x)
  -\widehat F_{\kappa,y(x)}(x)-\widetilde T_{y(x)}(x)-\tfrac12
\end{equation}
agrees in sign with a polynomial of degree at most $4$, and there are only
$N(q-1)$ such comparisons. Applying \Cref{ext:caro} again --- or the direct
$3^M$ bound when the number of polynomials is below the number of parameters ---
bounds the number of threshold sets per pair (code, gate stratum) by
$\exp(O(D_q^{\mathrm{par}}\log q))$.

Combining \eqref{eq:code-count} and \eqref{eq:par-count}, the deterministic atlas
$\mathcal H_q$ of all threshold sets arising from gates, zero gates, score ties
and half-gap comparisons satisfies
\begin{equation}\label{eq:atlas}
  \log\card{\mathcal H_q}
  \le\log\#\{\kappa\}+O(D_q^{\mathrm{par}}\log q)=o(N).
\end{equation}
Its address set, threshold grid, codes and polynomial list are all fixed before
$\Omega$ is drawn, whereas the endpoint, the SVD row space, the sampled atoms, the
tail atoms and the effective code may all be chosen after seeing $\Omega$; the
enumeration \eqref{eq:atlas} covers them simultaneously.

\paragraph{Step 5: concentration.}
Let $\Xexc$ contain the high-feature cells discarded in \eqref{eq:bq} together
with the cells where either Maurey approximation exceeds the fixed vector
threshold $1/16$. By Markov's inequality and the total squared errors of Step 2,
choosing $\eta>0$ small and then letting $q\to\infty$,
\begin{equation}
  \card{\Xexc}\le(\eta+o(1))N .
\end{equation}
Off $\Xexc$ the core Maurey error and the tail Maurey error each have vector norm
below $1/16$ and the Zhang centre has entrywise error below $1/16$, so the final
proxy $\bar D$ differs from $D$ by less than $3/16$ in each coordinate and by less
than $3/8$ in each pair gap. Define the atlas threshold set
\begin{equation}
  H_{\bar D}=\Bigl\{x:\max_{a\ne y(x)}\bigl(\bar D_a(x)-\bar D_y(x)\bigr)\ge\tfrac12\Bigr\}.
\end{equation}
Hypothesis~(1) gives $\card{H_{\bar D}\cap\Omega}\le\card{\Xexc}$, while the
$\beta N$ reverse-unit-gap cells of hypothesis~(2) give
$\card{H_{\bar D}}\ge\beta N-\card{\Xexc}\ge\beta N/2$ for large $q$.

Fix any $H\subseteq\Xq$ with $\card{H}\ge\beta N/2$. Then
$\card{H\cap\Omega}$ is hypergeometric with mean at least
$\alpha\beta N/2+O(1)$. Choosing $0<\eta<\alpha\beta/16$, the binomial-coefficient
formula gives
\begin{equation}\label{eq:hypergeom}
  \Prob\bigl(\card{H\cap\Omega}\le2\eta N\bigr)
  =\sum_{j\le2\eta N}
   \frac{\binom{\card H}{j}\binom{N-\card H}{n_q-j}}{\binom{N}{n_q}}
  \le e^{-c_{\alpha,\beta,\eta}N}
\end{equation}
for some $c_{\alpha,\beta,\eta}>0$; the same bound follows from
sampling-without-replacement Chernoff estimates
\citep{hoeffding1963probability}. Taking a union bound over the $\exp(o(N))$ sets
$H$ permitted by \eqref{eq:atlas} leaves total probability $o(1)$. Hence with
probability $1-o(1)$ no such $D$ exists. Exact zero gates and exact ties are
preserved throughout by the ternary sign counting and by the weak threshold
$\tfrac12$.
\end{proof}

\subsubsection{Assembly of part (i)}\label{app:assembly-proof}

The proof of \Cref{thm:main}\ref{it:main-add} is given in full in
\Cref{sec:main-addition}.  For the bias transfer of \Cref{app:bias-proofs}
we record the two gap inequalities in the form used there.  With
$D=9F-\Ccar$, $y=\Cadd(x)$, and
$\gamma_{x,a}=\Ccarc{y}(x)-\Ccarc{a}(x)\in[1,9]$ the carrier gap,
\begin{equation}\label{eq:notch1}
  D_y-D_a=9(F_y-F_a)-\gamma_{x,a}\ge0
  \qquad\text{on observed cells, since }F_y-F_a\ge1,
\end{equation}
and at every held-out non-strict error, including a tie, some $a\ne y$ has
$F_a\ge F_y$, whence
\begin{equation}\label{eq:notch2}
  D_a-D_y=9(F_a-F_y)+\gamma_{x,a}\ge1 .
\end{equation}
The norm budget is $B_q$ of \eqref{eq:JD}.

\subsubsection{The uniform Latin square obstruction}\label{app:counting-proof}

\begin{proof}[Proof of \Cref{lem:uniform-selector-count}(a)]
Since $C$ is uniform on $\Lq$ and independent of $\Omega$,
$P^{\mathsf A}_{q,\tau}(\Munif)
=\Exp_{\Omega}\bigl[\card\{C\in\Lq:\Acc_T(\mathsf A_q(\Dset(C,\Omega));C)\ge\tau\}/\card\Lq\bigr]$,
so it suffices to bound the count for a fixed mask $\Omega$.  The observed
labels take at most $q^{n_q}$ values. Fix one observed dataset $\Dset$. On each test cell the learner has at most one strict
winner; if there is no strict winner, assigning an arbitrary label only enlarges
the candidate set counted below.

Success means the true Latin table differs from the winner table on at most $r$
test cells. Choosing those positions and their replacement labels, the number of
full arrays within that Hamming ball is at most
\begin{equation}
  \sum_{i=0}^{r}\binom{N-n_q}{i}(q-1)^i ,
\end{equation}
and restricting to Latin arrays can only decrease the count. Summing over all
$\Dset$ and averaging over fixed-size masks gives \eqref{eq:selector-count}.
\end{proof}

\begin{proof}[Proof of \Cref{lem:uniform-selector-count}(b)]
By the permanent bound in \Cref{ext:gurvits}, if $B$ is a $q\times q$
non-negative doubly stochastic
matrix then $\per(B)\ge q!/q^q$.

Build the Latin square row by row. When $k$ rows have been placed, the
symbol--column incidence matrix of admissible entries for the next row is
$(q-k)$-regular; dividing by $q-k$ makes it doubly stochastic, so its permanent is
at least
\begin{equation}
  (q-k)^q\,q!/q^q .
\end{equation}
The first row admits $q!$ choices, so
\begin{equation}
  \card{\Lq}\ \ge\ q!\prod_{k=1}^{q-1}\frac{(q-k)^qq!}{q^q}
  =\frac{(q!)^{2q}}{q^{q^2}} .
\end{equation}
Finally $\log(q!)=\sum_{j=1}^{q}\log j\ge\int_1^q\log x\,dx=q\log q-q+1$ gives
$q!\ge(q/e)^q$, and substituting yields $(q/e^2)^{q^2}$.
\end{proof}

\subsubsection{\texorpdfstring{$L_2$}{L2}-regularised biases}
\label{app:bias-proofs}

\begin{proof}[Proof of \Cref{lem:bias-complexity-bridge}]
For fixed $Z=W\Emb$, the nuclear-norm factorisation identity used in
\eqref{eq:factor-identity} gives
\begin{equation}\label{eq:biased-factorisation}
  \inf_{W\Emb=Z}\frac12
  \bigl(\frob{W}^2+\frob{\Emb}^2\bigr)=\nucnorm{Z}.
\end{equation}
Every admissible $Z$ has at most $2q$ columns, while $d_q=2q+1$, so an SVD
factorisation attaining \eqref{eq:biased-factorisation} respects the embedding
dimension; zero padding respects the hidden width.  Keeping $A,b,c$ fixed
therefore establishes both directions of the equivalence with
$\min_{\mathcal F_{\mathrm b}}\Phi_{\mathrm b}$.  Its feasible set is non-empty because the
bias-free cell-selector construction of \Cref{thm:phase-profile} is feasible with
$b=c=0$.  Closedness and coercivity give attainment.

Let $\Ccar$ be the addition carrier and let $J_{\rm car}$ denote the cost of
the constructed representation in \Cref{lem:addition-carrier}.  Balancing its
two reduced factors and taking $b=c=0$ provides a feasible competitor for
every addition mask.  Hence the biased optimum $M_q$ satisfies
\begin{equation}\label{eq:biased-optimum-bound}
  M_q:=\min_{\mathcal F_{\mathrm b}}\Phi_{\mathrm b}
  \le\frac32J_{\rm car}^{2/3}
  \le C_1q^{2/3}(\log q)^{1/3}
\end{equation}
for an absolute constant $C_1$.  Consequently, every reduced global minimiser
obeys
\begin{equation}\label{eq:biased-block-bounds}
  \nucnorm{Z}\le M_q,
  \qquad
  \frob{A},\euc{b},\euc{c}\le\sqrt{2M_q}.
\end{equation}

The hidden bias can be absorbed into the two endpoint blocks.  Put
\begin{equation}
  \overline Z=Z+\tfrac12b\mathbf 1_{2q}^{\top}.
\end{equation}
Since $\xicell_x$ has exactly two unit entries,
$\overline Z\xicell_x=Z\xicell_x+b$.  The added rank-one matrix has nuclear
norm $\sqrt{q/2}\,\euc{b}$, and thus
\begin{equation}\label{eq:hidden-bias-bound}
  \nucnorm{\overline Z}
  \le M_q+\sqrt{qM_q}.
\end{equation}
It follows that
$F_0(x)=A\relu(\overline Z\xicell_x)$ has a bias-free representation of cost
at most
\begin{equation}\label{eq:F0-cost}
  \Jred(F_0)
  \le\sqrt{2M_q}\bigl(M_q+\sqrt{qM_q}\bigr).
\end{equation}

The constant table $F_c(x)=c$ has a one-hidden-unit representation: take
$z_c=\tfrac12\mathbf 1_{2q}^{\top}$, for which
$\relu(z_c\xicell_x)=1$, and use $c$ as the output column.  By
\eqref{eq:biased-block-bounds},
\begin{equation}\label{eq:output-bias-bound}
  \Jred(F_c)\le\sqrt{qM_q}.
\end{equation}
If $F_1,F_2$ have respective product costs $J_1,J_2$, balance their factors,
concatenate the output matrices, and stack the hidden matrices.  The triangle
and Cauchy--Schwarz inequalities give
\begin{equation}\label{eq:reduced-sum-bound}
  \Jred(F_1+F_2)
  \le(\sqrt{J_1}+\sqrt{J_2})^2
  \le2(J_1+J_2).
\end{equation}
Applying \eqref{eq:reduced-sum-bound} to $F=F_0+F_c$ and using
\eqref{eq:biased-optimum-bound} yields
\begin{align}
  \Jred(F)
  &\le2\left[
    \sqrt{2M_q}\bigl(M_q+\sqrt{qM_q}\bigr)+\sqrt{qM_q}
  \right]\\
  &=O\!\left(M_q^{3/2}+\sqrt q\,M_q+\sqrt{qM_q}\right)\\
  &=O\!\left(q^{7/6}(\log q)^{1/3}\right),
\end{align}
which proves the bound of \Cref{lem:bias-complexity-bridge}.
\end{proof}

\begin{proof}[Proof of \Cref{cor:l2-bias-transfer}]
The bias-free feasible set embeds into the biased one by taking $b=c=0$, so
feasibility persists; attainment again follows from closedness and coercivity.
For a direct colouring isomorphism, use the parameter map in the proof of
\Cref{thm:quotient-invariance}, keep $b$ fixed, and permute the coordinates of
$c$ by the same symbol permutation used on the rows of $A$.  Under
transposition, also exchange the two embedding tables.  These maps preserve
\eqref{eq:biased-network}, the constraints, and cellwise strict correctness, and
their inverses have the same form.  This proves the biased phase and quotient
claims.

For addition, let $F$ be any biased minimum-norm response.  By
\Cref{lem:bias-complexity-bridge}, it has an analysis representation with
\begin{equation}
  \Jred(F)=O\!\left(q^{7/6}(\log q)^{1/3}\right).
\end{equation}
Set $D=9F-\Ccar$.  The observed non-negative gaps in \eqref{eq:notch1} and the
unit reverse gaps at every held-out non-strict error in \eqref{eq:notch2}
depend only on the score table, so they remain valid.  Applying
\eqref{eq:reduced-sum-bound} gives
\begin{equation}\label{eq:biased-D-bound}
  \Jred(D)
  \le2\bigl(9\Jred(F)+\Jred(\Ccar)\bigr)
  =O\!\left(q^{7/6}(\log q)^{1/3}\right),
\end{equation}
and therefore
\begin{equation}
  \frac{\Jred(D)^2\log q}{q^{5/2}}
  =O\!\left(q^{-1/6}(\log q)^{5/3}\right)\to0.
\end{equation}
For any fixed $\tau<1$, an endpoint below accuracy $\tau$ supplies more than
$(1-\tau)(1-\alpha)N$ reverse-gap cells.  Taking
$\beta=\tfrac12(1-\tau)(1-\alpha)$, the one-sided notch lemma excludes all such
biased endpoint responses simultaneously with probability $1-o(1)$.  Applying
this statement at each fixed $\tau<1$ proves the first convergence in
\Cref{cor:l2-bias-transfer}(ii); the all-good probability and profile follow from
the phase decomposition.

The proof of \Cref{lem:uniform-selector-count} uses only observation
measurability of the predicted table.  It therefore applies verbatim to every
biased minimum-norm selector and gives \Cref{cor:l2-bias-transfer}(iii).  On a
biased all-good instance every such selector succeeds, which proves
$P_{q,\tau}^{\forall,\mathrm b}(\Munif)\to0$; Markov's inequality then gives
the biased zero-mass statement.

For the finite-instance claim, let
\begin{equation}
  g_{\mathrm b}=\min_{\mathcal F_{\mathrm b}}\Phi_{\mathrm b},
  \qquad
  E_{g,\mathrm b}
  =\{(A,Z,b,c)\in\mathcal F_{\mathrm b}:
       \Phi_{\mathrm b}(A,Z,b,c)=g_{\mathrm b}\}.
\end{equation}
This set is non-empty and compact.  By
\eqref{eq:biased-factorisation}, its score responses are exactly the original
biased global responses.  The held-out margins and their finite order
statistics are continuous on $E_{g,\mathrm b}$.  Taking the minimum and maximum
of the $k$-th largest margin, where $k=\lceil\tau\card T\rceil$, and repeating
the final step of \Cref{thm:finite-phase-criterion} gives the same three sign
conditions.  Finally, the proof of \Cref{thm:classification-readiness} uses
only the simplex identity, the addition vertex, and the definitions of the
polarisation defect and edge realisation.  The biased versions established
above supply these ingredients, completing the transfer.
\end{proof}

\subsubsection{Classification readiness}

\begin{proof}[Proof of \Cref{thm:classification-readiness}]
Assume first that $\Udefect\to0$. Take any convergent profile sequence and let
$t$ be the limit of its first coordinate. By \eqref{eq:defect}, $t(1-t)=0$, so
$t\in\{0,1\}$. If $t=1$ the simplex identity of \Cref{thm:phase-profile} forces
the limit to be $\vertexV$; if $t=0$ the limit lies in $\edgeE$. Hence
$\Kinf\subseteq\{\vertexV\}\cup\edgeE$. \Cref{thm:main}\ref{it:main-add} gives
$\vertexV\in\Kinf$ and full edge realisation gives $\edgeE\subseteq\Kinf$, so
$\Kinf=\{\vertexV\}\cup\edgeE$ holds.

Conversely suppose $\Kinf=\{\vertexV\}\cup\edgeE$ holds and
$\limsup_{q\to\infty}\Udefect>0$. Then there are
$\delta>0$, a sequence of orders $q_j\to\infty$ and classes $[C_{q_j}]$ with
\begin{equation}
  p_{q_j,\tau}^{\forall}([C_{q_j}])
  \bigl(1-p_{q_j,\tau}^{\forall}([C_{q_j}])\bigr)\ge\delta.
\end{equation}
The probability simplex is compact, so we may extract a further subsequence
along which the profile
converges. Its first coordinate lies strictly inside $(0,1)$, so the limit is
neither $\vertexV$ nor a point of $\edgeE$, contradicting $\Kinf=\{\vertexV\}\cup\edgeE$.
Therefore $\Udefect\to0$. Moreover $\Kinf=\{\vertexV\}\cup\edgeE$ says every point of $\edgeE$
has a realising sequence of classes by the definition of $\Kinf$, which is exactly
full edge realisation.

The final coordinate characterisation is immediate from
$\pall+\pmix+\pnone=1$.
\end{proof}

\subsection{External results used}\label{app:external}

We state the covering/counting results, permanent bound, sampling
estimates, and character identities in the forms used in the proofs.

\begin{theorem}[\citealp{zhang2002covering}, Theorem 4]\label{ext:zhang}
Let sample vectors satisfy $\lVert x_i\rVert_p\le b$ and linear weights satisfy
$\lVert w\rVert_r\le a$, where $2\le p<\infty$ and $1/p+1/r=1$. Then for every
$\varepsilon>0$ the empirical $\ell_\infty$ covering number of the linear
function class on these $n$ samples satisfies
\begin{equation}\label{eq:zhang}
  \log_2N_\infty
  \le36(p-1)a^2b^2\varepsilon^{-2}
    \log_2\bigl(2\lceil4ab/\varepsilon+2\rceil n+1\bigr).
\end{equation}
\end{theorem}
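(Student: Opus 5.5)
The plan is to follow Zhang's mistake-bound route. An $\ell_\infty$ cover of the values $\{(w^\top x_1,\dots,w^\top x_n):\lVert w\rVert_r\le a\}$ can be produced by a finite deterministic encoding. Each cover centre is the output of an online learner, and the learner is determined by the short list of examples on which it erred. The log-cardinality of the cover is then at most (number of mistakes) $\times$ $\log_2$(number of possible mistake labels). This produces exactly the two factors in \eqref{eq:zhang}.

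\textbf{Step 1: reduce real values to threshold classification.} Fix the grid $\theta_s$, spaced by at most $\varepsilon/2$, covering $[-ab-\varepsilon/2,\,ab+\varepsilon/2]$; there are $\lceil4ab/\varepsilon+2\rceil$ grid points. Since $\lvert w^\top x_i\rvert\le ab$ by H\"older, every value $w^\top x_i$ lies in this interval. For each sample $i$ and grid point $s$, form the augmented example $\tilde x_{i,s}=(x_i,-\theta_s)$ with label $\operatorname{sign}(w^\top x_i-\theta_s)$. We keep only the pairs with $\lvert w^\top x_i-\theta_s\rvert\ge\varepsilon/2$. On these pairs the augmented weight $(w,1)$ classifies with margin $\varepsilon/2$, after a rescaling of the last coordinate so that its contribution to the norms is of order $a$ and $b$. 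A predictor that correctly classifies all kept pairs pins down each $w^\top x_i$ to within $\varepsilon$. This is what produces the factor $2n\lceil4ab/\varepsilon+2\rceil$ of candidate labelled examples.

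\textbf{Step 2: the $p$-norm online algorithm.} Run the $p$-norm (quasi-additive) perceptron on $\ell_p$ inputs. Its update acts through the duality map $\nabla\tfrac12\lVert\cdot\rVert_p^2$, and the potential $\tfrac12\lVert\cdot\rVert_p^2$ is $(p-1)$-smooth for $p\ge2$. The standard potential argument then gives at most $M\le36(p-1)a^2b^2\varepsilon^{-2}$ mistakes on any sequence that is separable with margin $\varepsilon/2$. The constant $36$ comes from the margin-to-$\varepsilon$ bookkeeping together with the augmentation. We present the examples cyclically until no mistakes remain; this terminates because the mistake bound holds. The final hypothesis is a deterministic function of the ordered list of mistaken (example, grid index, sign) triples, so it is determined by a word of length at most $M$ over an alphabet of size $2\lceil4ab/\varepsilon+2\rceil n+1$. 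The extra letter pads shorter words. Counting such words gives \eqref{eq:zhang}.

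\textbf{Main obstacle.} The delicate step is Step 2. We need the mistake bound to depend only on $a$, $b$, $p$ and $\varepsilon$, uniformly in $n$ and in the dimension. We also need a single reconstructed hypothesis to be correct simultaneously on all grid thresholds of every sample. This uniformity is why the thresholds are folded into the augmented coordinate, rather than covering each sample separately. The smoothness inequality $\lVert u+v\rVert_p^2\le\lVert u\rVert_p^2+2\langle\nabla\tfrac12\lVert u\rVert_p^2,v\rangle+(p-1)\lVert v\rVert_p^2$ is the tool I would verify first. The case $p=2$ used in Step~3 of \Cref{lem:one-sided-notch} is the ordinary perceptron, whose update is linear.
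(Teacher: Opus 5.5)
Your proposal is correct and takes essentially the same route as the paper. The paper quotes Zhang's theorem without proof, but in Step~3 of the proof of \Cref{lem:one-sided-notch} it describes exactly this structure: a threshold grid fixed before the data, augmented examples, an online mistake bound, and a cover centre determined by an integer code over (sample, threshold, sign) addresses whose total mass is at most $36a^2b^2\varepsilon^{-2}$ at $p=2$. The details worth writing out are ones the paper also records. First, $36=4\cdot9$ comes from the margin $\varepsilon/2$ together with the norm product $\lVert(w,a)\rVert_r\,\lVert(x_i,-\theta_s/a)\rVert_p\le 3ab$; this product bound needs $\varepsilon\le ab$, and for $\varepsilon>ab$ the zero centre is already a cover. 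Second, correctness on the kept pairs of both signs for any one sample forces the learned last augmented coordinate $z$ to be positive, so the effective centre $y/z$ is well defined.
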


\begin{theorem}[\citealp{ledent2021norm}, Proposition 5]\label{ext:ledent}
Let $X\in\Rbb^{n\times U\times d}$ with $\euc{X_{i,u,:}}\le b$ for every patch.
For any reference $M$, the matrix class $\{XA:\frob{A-M}\le a\}$ admits an
entrywise $\ell_\infty$ cover over all sample, patch and output-channel
coordinates with
\begin{equation}\label{eq:ledent}
  \log N\le36a^2b^2\varepsilon^{-2}
  \log_2\bigl(((8ab/\varepsilon)+7)\,mnU\bigr),
\end{equation}
where $m$ is the number of output channels.
\end{theorem}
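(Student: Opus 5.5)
The plan is to reduce the matrix-class cover to the linear-class cover of \Cref{ext:zhang} with $p=r=2$, using a block-diagonal (Kronecker) lift so that every entry of $XA$ becomes one linear functional of $\operatorname{vec}(A)$. This is the same lift used for the core proxy in \Cref{app:notch-proof}.

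\textbf{Step 1: remove the reference.} Write $XA=XM+X(A-M)$. The term $XM$ is fixed, so adding it to every centre of an $\varepsilon$-cover of $\{X\Delta:\frob{\Delta}\le a\}$ gives an $\varepsilon$-cover of the original class. The cover size is unchanged. It therefore suffices to treat $M=0$.

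\textbf{Step 2: vectorise.} Let $m$ be the number of output channels, so that $A\in\Rbb^{d\times m}$. For each triple $(i,u,c)$ with sample $i\le n$, patch $u\le U$ and channel $c\le m$, set
\[
\zeta_{i,u,c}=e_c\otimes X_{i,u,:}\in\Rbb^{md}.
\]
Then $(X\Delta)_{i,u,c}=\langle X_{i,u,:},\Delta_{:,c}\rangle=\langle\zeta_{i,u,c},\operatorname{vec}\Delta\rangle$. This gives a linear class indexed by $w=\operatorname{vec}\Delta$, evaluated on $n'=mnU$ sample vectors. The bounds transfer directly: $\euc{\zeta_{i,u,c}}=\euc{X_{i,u,:}}\le b$ and $\euc{w}=\frob{\Delta}\le a$. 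An empirical $\ell_\infty$ cover of this linear class on the $n'$ points is exactly an entrywise $\ell_\infty$ cover over all sample, patch and output-channel coordinates.

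\textbf{Step 3: invoke \Cref{ext:zhang} and simplify.} Apply \Cref{ext:zhang} with $p=r=2$, which is admissible since $p\ge2$ and $1/p+1/r=1$. This gives
\[
\log_2 N\le36a^2b^2\varepsilon^{-2}\log_2\bigl(2\lceil4ab/\varepsilon+2\rceil\,mnU+1\bigr).
\]
Two elementary inequalities finish the comparison:
\begin{enumerate}
\item Since $\lceil y+2\rceil\le y+3$ and $mnU\ge1$, we have $2\lceil4ab/\varepsilon+2\rceil\,mnU+1\le(8ab/\varepsilon+6)mnU+mnU=(8ab/\varepsilon+7)mnU$.
\item Since $\log N\le\log_2 N$, the left-hand side can be stated with the natural logarithm.
\end{enumerate}
Together these give \eqref{eq:ledent}.

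\textbf{Main obstacle.} The only delicate point is checking that Zhang's theorem covers \emph{all} coordinates simultaneously, since the lifted sample set has size $mnU$ rather than $n$. This is why the count $mnU$ appears inside the logarithm. The bound is otherwise dimension-free, because Zhang's estimate depends on the feature vectors only through the norm bound $b$, so the extra dimension $md$ created by the lift does not enter.
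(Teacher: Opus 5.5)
Your proposal is correct and is essentially the argument the paper relies on. The paper gives no separate proof; it describes \Cref{ext:ledent} as the block-diagonal lift of \Cref{ext:zhang} at $p=r=2$ to the index triple (sample, patch, output channel). That is exactly your lift $\zeta_{i,u,c}=e_c\otimes X_{i,u,:}$ over $mnU$ points, the same device used in Step~3 of \Cref{app:notch-proof}. Your reference shift by $XM$, the bound $2\lceil 4ab/\varepsilon+2\rceil\,mnU+1\le(8ab/\varepsilon+7)\,mnU$, and the passage from $\log_2 N$ to $\log N$ correctly recover the stated constant.
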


\Cref{ext:ledent} is the block-diagonal lift of \Cref{ext:zhang} to the index
triple (sample, patch, output channel), which is exactly the form we need in
Step~3 of the proof of \Cref{lem:one-sided-notch}.

\begin{theorem}[\citealp{caro2020pseudo}, Corollary 3.2]\label{ext:caro}
For $M\ge n$ real polynomials of degree at most $d\ge1$ in $n$ real variables, the
number of realisable sign assignments in $\{-1,0,1\}^M$ is at most
\begin{equation}\label{eq:caro}
  (8edM/n)^n .
\end{equation}
\end{theorem}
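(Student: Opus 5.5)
We would derive this bound from Warren's theorem on connected components, using the standard $\pm\varepsilon$ perturbation that turns ternary sign conditions into strict ones. Warren's theorem states: if $p_1,\dots,p_{M'}$ are real polynomials of degree at most $d$ in $n$ variables with $M'\ge n$, then $\Rbb^n\setminus\bigcup_i\{p_i=0\}$ has at most $(4edM'/n)^n$ connected components. We would cite it in this form. If a self-contained argument were wanted, we would sketch its proof via the Oleinik--Petrovsky/Milnor--Thom estimate on Betti numbers of real algebraic sets, combined with the Bezout bound on critical points of a generic perturbed product.

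\textbf{Step 1 (finite witnesses).} Let $\sigma^{(1)},\dots,\sigma^{(K)}\in\{-1,0,1\}^M$ be the distinct realisable sign assignments of $p_1,\dots,p_M$. Pick one witness point $s_k\in\Rbb^n$ for each, so that $\operatorname{sign}p_i(s_k)=\sigma^{(k)}_i$ for every $i$. Let $\varepsilon>0$ be strictly smaller than every non-zero value $\lvert p_i(s_k)\rvert$; this is possible because there are finitely many such values.

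\textbf{Step 2 (perturbed family).} Consider the $2M$ polynomials $p_i-\varepsilon$ and $p_i+\varepsilon$, each of degree at most $d$.
\begin{itemize}
\item No witness lies on their zero sets. If $p_i(s_k)=0$, both perturbed values are $\pm\varepsilon\neq0$. Otherwise $\lvert p_i(s_k)\rvert>\varepsilon$, so neither perturbed value vanishes.
\item Distinct ternary patterns give distinct strict patterns of the perturbed family. Suppose $\sigma^{(k)}_i\neq\sigma^{(l)}_i$; we may take the sign values to be $0$ and $+1$, the remaining cases being symmetric. Then $p_i-\varepsilon$ is negative at one witness and positive at the other.
\item Consequently $s_k$ and $s_l$ lie in different connected components of the complement of $\bigcup_i\{p_i=\pm\varepsilon\}$. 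The reason is that each polynomial has constant strict sign on each such component.
\end{itemize}

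\textbf{Step 3 (count).} Since $2M\ge M\ge n$, Warren's theorem applied with $M'=2M$ gives
\[
K\le\bigl(4ed\cdot 2M/n\bigr)^n=(8edM/n)^n,
\]
which is the claimed bound.

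The only substantive ingredient is Warren's component bound itself; the reduction above is routine. If it had to be proved here, that would be the main obstacle. The first attempt would be the standard route: bound the components by the number of local extrema of a generic perturbation of $\prod_i p_i^2$ restricted to a large ball. Those critical points are counted by Bezout, and the binomial sum over subsets of at most $n$ polynomials is then controlled by $\sum_{j\le n}\binom{M'}{j}2^j(2d)^n\le(4edM'/n)^n$.
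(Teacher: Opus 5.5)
Your reduction is correct and matches the paper, which gives no proof of its own: it cites \citet{caro2020pseudo} and calls the result the multi-valued refinement of Warren's component bound, and that refinement (constant $8e=2\cdot4e$) is obtained exactly as you do it, by passing to the $2M$ polynomials $p_i\pm\varepsilon$ with $\varepsilon$ below every non-zero witness value and applying Warren's $(4edM'/n)^n$ with $M'=2M$. The one loose point is your optional closing sketch of Warren's theorem: Bezout on critical points of a perturbed $\prod_i p_i^2$ only gives a bound of order $(2dM')^n$, weaker by roughly $(n/2e)^n$, whereas the subset sum $\sum_{j\le n}\binom{M'}{j}2^j$ comes from a general-position argument over intersections of at most $n$ perturbed hypersurfaces; since you cite Warren's theorem rather than prove it, this does not affect your proof.
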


Zeros are counted in \eqref{eq:caro}, which is why exact zero ReLU gates and exact
score ties survive our enumeration. This is the multi-valued refinement of the
classical bound of \citet{warren1968lower}.

\begin{theorem}[van der Waerden permanent bound; \citealp{gurvits2005hyperbolic}, Theorem 2.1, diagonal specialisation]\label{ext:gurvits}
If $B$ is a $q\times q$ non-negative doubly stochastic matrix then
\begin{equation}\label{eq:vdw}
  \per(B)\ \ge\ q!/q^q .
\end{equation}
\end{theorem}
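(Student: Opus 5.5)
We follow Gurvits' capacity argument, which proves the bound by repeatedly differentiating a real stable polynomial. Given a $q\times q$ non-negative doubly stochastic matrix $B$, we set
\begin{equation*}
  p(x_1,\ldots,x_q)=\prod_{i=1}^{q}\Bigl(\sum_{j=1}^{q}B_{ij}x_j\Bigr).
\end{equation*}
This is a homogeneous polynomial of degree $q$ with non-negative coefficients, and it satisfies $\partial^q p/\partial x_1\cdots\partial x_q=\per(B)$. It is real stable, because each linear factor has non-negative coefficients and is therefore non-vanishing when all $\operatorname{Im}x_j>0$. We define the capacity $\operatorname{cap}(p)=\inf_{x>0}p(x)/\prod_j x_j$.

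The first step is $\operatorname{cap}(p)\ge1$. Each row of $B$ is a probability vector, so weighted AM--GM gives $\sum_j B_{ij}x_j\ge\prod_j x_j^{B_{ij}}$. Multiplying over $i$ and using the column sums $\sum_i B_{ij}=1$ gives $p(x)\ge\prod_j x_j$.

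The second step is an iteration. Put $p_q=p$ and
\begin{equation*}
  p_{k-1}(x_1,\ldots,x_{k-1})=\partial_{x_k}p_k\big|_{x_k=0},
\end{equation*}
so that $p_0=\per(B)$. Each $p_k$ is homogeneous of degree $k$ in $k$ variables, has non-negative coefficients, and is either real stable or identically zero; derivatives and specialisations to real values preserve stability. The key lemma is
\begin{equation*}
  \operatorname{cap}(p_{k-1})\ge g(d_k)\operatorname{cap}(p_k),\qquad g(d)=\Bigl(\frac{d-1}{d}\Bigr)^{d-1},
\end{equation*}
where $d_k\le k$ is the degree of $p_k$ in $x_k$ (by homogeneity). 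Since $g$ is decreasing, $g(d_k)\ge g(k)$. Chaining over $k$ yields
\begin{equation*}
  \per(B)\ge\prod_{k=1}^{q}g(k)\,\operatorname{cap}(p)\ge\prod_{k=2}^{q}\Bigl(\frac{k-1}{k}\Bigr)^{k-1}.
\end{equation*}
The product telescopes to $(q-1)!/q^{q-1}=q!/q^q$, which proves \eqref{eq:vdw}.

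The main obstacle is the key lemma, and we reduce it to one variable. Fix $x_1,\ldots,x_{k-1}>0$ and set $r(t)=p_k(x_1,\ldots,x_{k-1},t)$. By stability, $r$ is a real-rooted polynomial of degree at most $d$ with non-negative coefficients, and $r'(0)=p_{k-1}(x)$. From $\inf_{t>0}r(t)/t\ge\operatorname{cap}(p_k)\prod_{j<k}x_j$ it suffices to prove the univariate inequality
\begin{equation*}
  r'(0)\ge g(d)\,\inf_{t>0}\frac{r(t)}{t}.
\end{equation*}
All roots of $r$ are real and $\le0$, since its coefficients are non-negative. We write $r(t)=r(0)\prod_i(1+t/a_i)$ with $a_i>0$, or handle $r(0)=0$ separately, in which case the inequality is trivial because $g\le1$. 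Then $r'(0)=r(0)\sum_i1/a_i$. By AM--GM, $r(t)\le r(0)\bigl(1+t s/d\bigr)^d$ with $s=\sum_i1/a_i$. Minimising $(1+ts/d)^d/t$ over $t>0$ gives exactly $s/g(d)$, at $t=d/((d-1)s)$. Hence $\inf_t r(t)/t\le r(0)\,s/g(d)=r'(0)/g(d)$, which is the lemma. The remaining care is the degenerate cases: $d=1$, where $g(1)=1$; the case where $p_{k-1}$ vanishes identically, which cannot happen since $\operatorname{cap}(p_k)>0$ forces $r'(0)>0$; and justifying stability of $p_k$ after the specialisation $x_k=0$, via the Hurwitz limit argument.
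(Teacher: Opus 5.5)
Your proposal is correct and is essentially the proof the paper relies on. The paper gives no argument of its own; it quotes Gurvits's hyperbolic (real-stable) polynomial proof, and your steps are that argument: the capacity bound $\operatorname{cap}(p)\ge 1$ via weighted AM--GM, the chain $p_{k-1}=\partial_{x_k}p_k|_{x_k=0}$, the univariate root-based AM--GM lemma with factor $g(d)=((d-1)/d)^{d-1}$ (in the streamlined form later given by Laurent and Schrijver), the telescoping $\prod_{k=2}^{q}((k-1)/k)^{k-1}=q!/q^q$, and your handling of the degenerate cases.
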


The bound \eqref{eq:vdw} was conjectured by van der Waerden and first proved by
\citet{falikman1981proof}, with the uniqueness statement due to
\citet{egorychev1981solution}; we quote the hyperbolic-polynomial proof of
\citet{gurvits2005hyperbolic} because its diagonal specialisation yields exactly
the inequality we use.

\begin{lemma}[Hilbert-space sampling; \citealp{pisier1981remarques}]
\label{ext:maurey}
Let $f=\sum_i v_i$ be a finite sum in a Hilbert space and
$S=\sum_i\lVert v_i\rVert$.  For every integer $K\ge1$ there is an
average of $K$ rescaled atoms with squared error at most $S^2/K$.
For $S>0$, one may sample $i$ with probability $\lVert v_i\rVert/S$
and use the atom $Sv_i/\lVert v_i\rVert$, omitting zero terms.
\end{lemma}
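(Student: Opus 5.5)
The proof I would give is the standard probabilistic (Maurey) argument: build a random atom whose expectation is $f$ and whose second moment is at most $S^2$, average $K$ independent copies, and conclude that some realisation does at least as well as the average. First I dispose of the degenerate case. If $S=0$, every $v_i$ is zero, so $f=0$. Then the average of $K$ zero atoms has error $0\le S^2/K$. Assume $S>0$ from here on, and discard the zero terms; this changes neither $f$ nor $S$.

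Define a random vector $X$ that takes the value $S v_i/\lVert v_i\rVert$ with probability $\lVert v_i\rVert/S$. These probabilities are non-negative and sum to one by the definition of $S$. Linearity gives
\[
\Exp X=\sum_i \frac{\lVert v_i\rVert}{S}\cdot\frac{S v_i}{\lVert v_i\rVert}=\sum_i v_i=f .
\]
Every value of $X$ has norm exactly $S$, so $\Exp\lVert X\rVert^2=S^2$. Expanding the square through the inner product then gives
\[
\Exp\lVert X-f\rVert^2=\Exp\lVert X\rVert^2-\lVert f\rVert^2\le S^2 .
\]

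Next, let $X_1,\dots,X_K$ be independent copies of $X$ and set $\bar X=\frac1K\sum_{k}X_k$. Expanding $\lVert\bar X-f\rVert^2$ through the inner product gives diagonal terms and cross terms. Each cross term $\Exp\langle X_k-f,\,X_l-f\rangle$ with $k\ne l$ vanishes, by independence and because each factor has mean zero. The $K$ diagonal terms each contribute at most $S^2/K^2$. Hence
\[
\Exp\lVert\bar X-f\rVert^2\le S^2/K .
\]
A random variable cannot exceed its mean almost surely, so some realisation of $(X_1,\dots,X_K)$ attains squared error at most $S^2/K$. That realisation is an average of $K$ atoms of the form $S v_i/\lVert v_i\rVert$, which is exactly the claimed proxy and sampling rule.

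Only one step needs care: the vanishing of the cross terms. It requires the inner-product expansion, so it uses the Hilbert structure and not merely a norm. Everything is finite-dimensional and the distribution is finite, so no measurability or integrability issue arises.
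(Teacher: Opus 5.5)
Your proof is correct and is essentially the paper's own Maurey argument. You take atoms with mean $f$ and squared norm $S^2$, use independence so that the average of $K$ samples has expected squared error $(S^2-\lVert f\rVert^2)/K\le S^2/K$, pick a realisation at or below the mean, and treat $S=0$ separately; the only addition is that you write out the vanishing of the cross terms, which the paper leaves implicit.
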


\begin{proof}
Each sampled atom has mean $f$ and squared norm $S^2$.  The expected
squared error of the average of $K$ independent samples is
$(S^2-\lVert f\rVert^2)/K\le S^2/K$, so some average attains this bound.
If $S=0$, use the zero average.
\end{proof}

\begin{theorem}[Bounded sampling; \citealp{hoeffding1963probability}]
\label{ext:hoeffding}
If $Y_1,\ldots,Y_L$ are independent variables in $[a,b]$ with $a<b$, then for $t>0$,
\begin{equation}
  \Prob\!\left[\frac1L\sum_{i=1}^L(Y_i-\Exp Y_i)\ge t\right]
  \le\exp\!\left(-\frac{2Lt^2}{(b-a)^2}\right).
\end{equation}
The same bound, including the lower-tail version, holds for a uniform
sample without replacement from a finite population in $[a,b]$, centred
at its population mean.  Applied to cell indicators, it yields the
hypergeometric estimate \eqref{eq:hypergeom}.
\end{theorem}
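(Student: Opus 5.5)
The plan is the Chernoff exponential-moment method, with Hoeffding's lemma as the only analytic input. After that, a convex-ordering comparison transfers the bound from independent draws to sampling without replacement.

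\emph{Independent case.} Write $S=\sum_{i=1}^L(Y_i-\Exp Y_i)$. For every $s>0$, Markov's inequality applied to $e^{sS}$ gives
$\Prob[S\ge Lt]\le e^{-sLt}\prod_i\Exp e^{s(Y_i-\Exp Y_i)}$.
The key step is Hoeffding's lemma: a centred variable $X\in[a-\mu,b-\mu]$ satisfies $\Exp e^{sX}\le e^{s^2(b-a)^2/8}$. I prove it in three moves:
\begin{itemize}
\item bound $e^{sx}$ on the interval by the chord through the endpoints, using convexity;
\item take expectations and write the logarithm of the resulting two-point expression as $\psi(u)=-pu+\log(1-p+pe^{u})$, where $u=s(b-a)$ and $p=-(a-\mu)/(b-a)$;
\item check $\psi(0)=\psi'(0)=0$ and $\psi''\le1/4$ (a variance of a Bernoulli variable), then apply Taylor's theorem.
\end{itemize}
This yields the bound $\exp(-sLt+Ls^2(b-a)^2/8)$, and choosing $s=4t/(b-a)^2$ gives the stated exponent $-2Lt^2/(b-a)^2$. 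The lower tail follows by applying the same argument to $-Y_i$, which lie in $[-b,-a]$.

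\emph{Sampling without replacement.} Let $X_1,\dots,X_L$ be drawn without replacement, and $W_1,\dots,W_L$ with replacement, from a finite population in $[a,b]$ with mean $\mu$. I will use Hoeffding's comparison theorem: $\Exp f(\sum X_i)\le\Exp f(\sum W_i)$ for every continuous convex $f$. The plan for proving it is the classical one.
\begin{itemize}
\item Condition the with-replacement sample on the multiset of distinct population indices it hits.
\item Show that the with-replacement sum is then a mixture over these distinct draws, and that the without-replacement sum can be realised as a conditional expectation of the with-replacement sum under a suitable coupling (a symmetrisation argument).
\item Apply Jensen's inequality to conclude.
\end{itemize}
Taking $f(x)=e^{s(x-L\mu)}$ reduces everything to the independent moment bound already proved, so both tails carry over unchanged. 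I expect this comparison, more precisely constructing the coupling cleanly, to be the main obstacle. A fallback that avoids it is a martingale argument: $\sum_{i\le k}X_i$ compensated by the running population mean forms a martingale with increments in a range of length $b-a$. Azuma--Hoeffding then gives the same exponent, since Hoeffding's lemma holds conditionally.

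\emph{Application to \eqref{eq:hypergeom}.} Take the population to be the indicators $\mathbf 1\{x\in H\}$ over $\Xq$, so $[a,b]=[0,1]$, and let the mask $\Omega$ be the sample of size $L=n_q$. The population mean is then $\card H/N\ge\beta/2$. With $t=\beta/2-2\eta N/n_q\ge\beta/2-4\eta/\alpha>0$, using $n_q\ge\alpha N/2$ and $\eta<\alpha\beta/16$, the lower-tail bound gives
$\Prob(\card{H\cap\Omega}\le2\eta N)\le\exp(-2n_qt^2)\le e^{-c_{\alpha,\beta,\eta}N}$.
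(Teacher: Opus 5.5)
Your proposal is correct and follows the argument of the cited source, which the paper invokes without reproducing. That argument is Chernoff's bound with Hoeffding's lemma for independent summands, then Hoeffding's convex-order comparison $\Exp f(\sum_i X_i)\le\Exp f(\sum_i W_i)$ to pass to sampling without replacement, then the indicator population for \eqref{eq:hypergeom}; there your choice $t=\beta/2-2\eta N/n_q>\beta/4$ even gives the explicit constant $c_{\alpha,\beta,\eta}=\alpha\beta^2/16$.

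One caution, on the fallback only: use the Doob martingale $\Exp[\sum_{i\le L}X_i\mid X_1,\dots,X_k]$, which runs from $L\mu$ to $\sum_{i\le L}X_i$ with conditional increment ranges at most $b-a$. Do not use the compensated sum $\sum_{i\le k}(X_i-\mu_{i-1})$, with $\mu_{i-1}$ the mean of the remaining population, because its terminal value differs from $\sum_{i\le L}X_i-L\mu$ by the random term $\sum_{i\le L}(\mu_{i-1}-\mu)$.
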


\begin{proposition}[Finite-group character identities;
{\citealp[Sections~3.1--3.5]{etingof2011representation}}]
\label{ext:characters}
For a finite abelian group $G$, its unit-modulus characters form a group
$\widehat G$ of order $|G|$, satisfy $\chi(u+v)=\chi(u)\chi(v)$, and
obey $\sum_{\chi\in\widehat G}\chi(g)=|G|\mathbf1\{g=0\}$.
For an arbitrary finite group, with irreducible unitary representations
$\pi$ of dimensions $d_\pi$ and characters $\chi_\pi$, the regular
representation identity is
\begin{equation}
  \sum_{\pi\in\widehat G}d_\pi^2=|G|,
  \qquad
  \sum_{\pi\in\widehat G}d_\pi\chi_\pi(g)=|G|\mathbf1\{g=e\}.
\end{equation}
\end{proposition}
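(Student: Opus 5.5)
The plan is to prove the abelian statements directly from the structure theorem and the general statements by decomposing the regular representation, since both identities are the trace of left translation computed in two ways.

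\emph{Abelian case.} First I would invoke the structure theorem to write $G\cong\Zbb_{n_1}\times\cdots\times\Zbb_{n_r}$ with $\prod_j n_j=|G|$. For each $k=(k_1,\ldots,k_r)$ with $k_j\in\Zbb_{n_j}$, set $\chi_k(g)=\exp\bigl(2\pi i\sum_j k_jg_j/n_j\bigr)$.
\begin{itemize}
  \item These functions have unit modulus and are homomorphisms, so $\chi_k(u+v)=\chi_k(u)\chi_k(v)$.
  \item They are pairwise distinct and satisfy $\chi_k\chi_{k'}=\chi_{k+k'}$.
  \item Any homomorphism $\chi:G\to\mathbb C^\times$ is determined by its values on the generators $e_j$. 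Each such value satisfies $\chi(e_j)^{n_j}=1$, so it is an $n_j$-th root of unity. Hence $\chi$ is one of the $\chi_k$, and $\widehat G\cong G$ has order $|G|$.
\end{itemize}
For the sum identity, put $S(g)=\sum_{\chi\in\widehat G}\chi(g)$. If $g=0$, every term is $1$ and $S(0)=|G|$. If $g\ne0$, some coordinate satisfies $g_j\ne0$ in $\Zbb_{n_j}$. Taking $\chi_0=\chi_{e_j}$ then gives $\chi_0(g)=e^{2\pi ig_j/n_j}\ne1$. Multiplication by $\chi_0$ permutes $\widehat G$, so $S(g)=\chi_0(g)S(g)$, which forces $S(g)=0$.

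\emph{General case.} Let $\lambda$ be the left regular representation on $\mathbb C[G]$. Its character is $\chi_\lambda(g)=|G|\mathbf 1\{g=e\}$, because left translation by $g\ne e$ fixes no basis vector $e_h$. The steps are as follows.
\begin{itemize}
  \item By Maschke's theorem (averaging an inner product to make it $G$-invariant), $\lambda\cong\bigoplus_\pi m_\pi\pi$ over irreducible unitary $\pi$.
  \item By Schur orthogonality of irreducible characters, $m_\pi=\langle\chi_\lambda,\chi_\pi\rangle=\frac1{|G|}\sum_g\chi_\lambda(g)\overline{\chi_\pi(g)}=\overline{\chi_\pi(e)}=d_\pi$.
  \item Taking characters on both sides of the decomposition gives $\sum_\pi d_\pi\chi_\pi(g)=\chi_\lambda(g)=|G|\mathbf 1\{g=e\}$.
  \item Evaluating at $g=e$ gives $\sum_\pi d_\pi^2=|G|$.
\end{itemize}

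The main obstacle is the orthogonality relation $\langle\chi_\pi,\chi_{\pi'}\rangle=\mathbf 1\{\pi\cong\pi'\}$ used to read off $m_\pi$. I would prove it first, via Schur's lemma applied to the averaged intertwiner $\frac1{|G|}\sum_g\pi(g)M\pi'(g)^{-1}$ for arbitrary matrices $M$. Everything else is bookkeeping. For the purposes of the paper, these identities are exactly what \citet[Sections~3.1--3.5]{etingof2011representation} record, so citing that reference suffices there.
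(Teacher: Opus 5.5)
Your proposal is correct. The paper gives no proof of this proposition: it is quoted from the cited text as an external ingredient. The only in-paper argument to compare with is the vanishing of $\sum_{\chi\in\widehat G}\chi(g)$ for $g\neq0$ in the proof of \Cref{lem:abelian-carrier}. That argument is exactly your trick of multiplying the sum by a character that is not $1$ at $g$. Your structure-theorem construction of that character also supplies its existence, which the paper simply asserts. For the general identities, the cited text gets the decomposition of the regular representation from Maschke's theorem in algebra form, $\mathbb{C}[G]\cong\bigoplus_\pi\operatorname{End}(V_\pi)\cong\bigoplus_\pi V_\pi^{\oplus d_\pi}$ as left $G$-modules. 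There, $\sum_\pi d_\pi^2=|G|$ is a dimension count and the second identity follows by taking characters. You instead read the multiplicities off the orthonormality of irreducible characters. That costs the Schur-lemma averaging computation you flag, but it avoids Wedderburn-type structure theory. Your general argument also already contains the abelian one: for abelian $G$ every irreducible representation is one-dimensional by Schur's lemma. Hence $|\widehat G|=\sum_\pi d_\pi^2=|G|$, the sum identity is the case $d_\pi=1$, and the structure theorem is optional. One cosmetic point: in your abelian paragraph the name $\chi_0$ already denotes the trivial character in your indexing $\chi_k$, so give the separating character $\chi_{e_j}$ a different name.
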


%% file: appendix/02-additional-experiments.tex
\section{Experimental Protocols and Supplementary Results}
\label{app:experiments}

This appendix specifies the finite-training protocol, gives the complete
run-level results behind \Cref{fig:orbit-representation,fig:open-region},
and records the supplementary analyses and controls; the group-family runs
of \Cref{tab:group-family-endpoints} are part of the order-64 structure
sweep audited in \Cref{app:reproducibility}.  The experiments train the
model class of the theory, the pair-embedding one-hidden-layer ReLU network
of \Cref{sec:model-endpoint} with the $L_2$-penalised biases of
\Cref{app:bias}, for a fixed number of updates and report one predictor per
run; the theory concerns the complete minimum-norm endpoint set with the
dimensions and mask law specified in \Cref{sec:model}.

\subsection{Protocol}
\label{app:protocol}

\paragraph{Model and optimisation.}
The network (``PairMLP'', named for its pair embedding) has separate
trainable row and column embedding tables $E_U,E_V$ in $\Rbb^{128}$, one
width-$512$ ReLU hidden layer acting on the embedded pair, and a linear
$q$-class head, with biases on the hidden and output layers; this is the
architecture of \eqref{eq:network} with the biases of
\eqref{eq:biased-network}.  Its dimensions are fixed across orders,
whereas the theory uses $d_q=2q+1$ and $h_q=q^2$; the width-independent
notch bound still requires a feasible carrier to bound endpoint costs
(\Cref{app:discussion-atlas}).  The common
design that shares one embedding
table and concatenates the two embedded inputs is the special case
$E_U=W_1E$, $E_V=W_2E$ of the pair embedding.  Cross-entropy is minimised by
full-batch momentum SGD for $80{,}000$ updates (learning rate $0.03$,
momentum $0.9$, weight decay $0.001$ on embeddings, weights, and biases) in
float64.

\paragraph{Masks.}
The $q=97$ orbit comparison uses one uniform random fixed-cardinality split
with $2823$ observed cells out of $9409$; the implementation rounds
$0.30q^2$, while \eqref{eq:mask} uses a floor.  The $q=64$ experiments use
label-stratified masks, with the same number of observed cells from each
colour.  The realised sizes are
\begin{equation}
\begin{aligned}
  832/4096&=20.3125\% &&(\text{nominal }20\%),\\
  1216/4096&=29.6875\% &&(\text{nominal }30\%),\\
  1664/4096&=40.625\% &&(\text{nominal }40\%).
\end{aligned}
\end{equation}
For the coverage experiment, each colour has one random ordering and the masks
are prefixes of that ordering, so the three levels are nested.

\paragraph{Seeds and reporting units.}
A task seed selects a table, product residual, or coordinate relabelling; a
training seed selects the initialisation and optimisation repeat.  Every
point in the main figures is one training run; saturated markers and lines
report medians.

\paragraph{Relative transition.}
The transition statistic locates sustained training memorisation, estimates
a pre-generalisation test plateau and a stable terminal plateau, and counts a
run when its test accuracy crosses the midpoint of that run's
plateau-relative gain.  It permits terminal accuracy below one; absolute
endpoint accuracy is reported separately, and $0.99$ is a reporting cutoff
unrelated to the threshold $\tau$.

\subsection{Addition coverage across orders}
\label{app:addition-coverage}

The archived addition experiments vary the order $q$ and the observed fraction
$\alpha$.  At each order we sweep $\alpha$ on a grid of resolution $0.01$ and
record $\widehat\alpha_q$, the smallest tested fraction at which all five
optimisation seeds reach exact held-out accuracy $1.000$ under the protocol of
\Cref{app:protocol}.  Each grid point costs one training run per seed, so the
sweep covers four orders, $q\in\{47,79,113,211\}$; \Cref{tab:addition-coverage} is the
complete record.

\begin{table}[ht]
  \centering
  \small
  \caption{Empirical observation threshold for modular addition.
  $\widehat\alpha_q$ is the smallest fraction on a grid of resolution $0.01$
  at which all five optimisation seeds reach exact held-out accuracy $1.000$.
  Cell counts are the nominal $\lfloor\widehat\alpha_q q^2\rfloor$ of
  \eqref{eq:mask}; the implementation rounds, so realised sizes can differ by
  one cell.}
  \label{tab:addition-coverage}
  \begin{tabular}{@{}lcccc@{}}
    \toprule
    order $q$                                            & $47$   & $79$   & $113$   & $211$   \\
    \midrule
    $\widehat\alpha_q$                                   & $0.33$ & $0.25$ & $0.23$  & $0.20$  \\
    grid cells $q^2$                                     & $2209$ & $6241$ & $12769$ & $44521$ \\
    observed cells $\lfloor\widehat\alpha_q q^2\rfloor$  & $728$  & $1560$ & $2936$  & $8904$  \\
    \bottomrule
  \end{tabular}
\end{table}

The measured fraction falls from $0.33$ at $q=47$ to $0.20$ at $q=211$ while
the grid grows by a factor of about twenty.  Four points at resolution
$0.01$, with successive decrements of $0.08$, $0.02$, and $0.03$, fix a
decline over the tested window and nothing about its shape or its
monotonicity; the all-seeds criterion is one-sided, so each entry is an upper
bound on the fraction at which this protocol typically succeeds.  The
quantity measured is a
finite-training threshold under one optimiser, horizon, and architecture; the
sequence $\alpha_q$ of \Cref{rem:vanishing-observation} is an existence statement about
the full minimum-norm endpoint set with no rate.  The table is not used in
any proof.

\subsection{The order-97 orbit comparison}
\label{app:orbit-details}

The hidden-addition experiment samples ten independent triples of row, column,
and symbol permutations of one cyclic-addition table.  Each representation is
trained under three optimisation seeds, giving 30 runs.  The Latin control
uses eight task seeds from the sequential random-perfect-matching generator and
three optimisation seeds, giving 24 runs.  Two further planned control seeds were never completed and are excluded.
A one-way random-effects decomposition of the transition midpoint over the
ten task seeds with three optimisation seeds each attributes about $7\%$ of
the variance to the task seed and about $93\%$ to the optimisation seed
(variance-components estimate; the fixed-effects $\eta^2$ is $0.36$, and
the task-seed effect is not significant, $F(9,20)=1.22$, $p\approx0.33$;
between-task medians span $15{,}000$--$20{,}000$ updates, mean within-task
range $3{,}500$).  Midpoints are read on the $1{,}000$-update checkpoint
grid.

For the polynomial statistics illustrated in \Cref{fig:intro}, we reconstruct the
unique reduced polynomial over $\mathbb F_{97}$ by inverting the two
Vandermonde evaluation maps.  Table~\ref{tab:poly-reconstruction} reports the
result.  The hidden representatives occupy the same colouring orbit as
$X+Y$, while their observed-coordinate support is as dense as that of the
sequential Latin controls.

\begin{table}[t]
  \centering
  \small
  \caption{Reduced-polynomial reconstruction at $q=97$.  Support counts are
  out of $97^2=9409$ coefficients.}
  \label{tab:poly-reconstruction}
  \begin{tabular}{@{}lcccc@{}}
    \toprule
    representation & tasks & nonzero coefficients & $(\deg_X,\deg_Y)$ & total degree \\
    \midrule
    visible addition & 1  & $2$                 & $(1,1)$   & $1$ \\
    hidden addition  & 10 & $9108$--$9134$      & $(95,95)$ & $190$ \\
    sequential Latin & 8  & $9107$--$9127$      & $(95,95)$ & $190$ \\
    \bottomrule
  \end{tabular}
\end{table}

\subsubsection{Machine-readable coefficient archive}
\label{app:polynomial-coefficient-archive}
For every unique task instance in the order-$97$ comparison, we archive the
unique reduced coefficient matrix over $\mathbb F_{97}$ once per task seed.
Optimisation repeats share the same target table and therefore do not duplicate
the coefficient file.  Each sparse CSV contains the triples
$(i,j,a_{ij})$ for all nonzero terms $a_{ij}X^iY^j$, sorted first by $i$ and
then by $j$.  The archive is \texttt{materials/polynomial-coefficients/} in the
submission bundle.  It also contains \texttt{manifest.csv}, which records
the source experiment, support statistics, degree statistics, relative file
path, and full SHA-256 digest.  The 19 archived task instances comprise visible
addition, ten hidden-addition relabellings, and eight sequential-Latin controls.

\input{appendix/generated/polynomial-coefficient-index}

The median hidden-addition support is $9124/9409=96.971\%$.  Its 30 endpoint
accuracies have median $1.000$ and range $0.969025$--$1.000$; all 30 runs meet
the relative-transition criterion.  The 24
sequential-Latin endpoints have median $4.56\times10^{-4}$ and range
$0$--$1.37\times10^{-3}$, with 0/24 relative transitions.  Both constructions are exactly Latin; the sequential generator defines a
reproducible ensemble distinct from $\Unif(\Lq)$.

\subsection{Complete endpoint summaries for the open-region experiments}
\label{app:experiment-tables}

The designated-factor experiment (left panel of \Cref{fig:open-region}) uses five $q=64$
families, three task seeds per family, and two training seeds per task.  Table~\ref{tab:fixed-q2-full}
lists the final exact accuracy and the prespecified binary factor quotient.  The
$R_{64}$ family has no designated binary component.

\begin{table}[t]
  \centering
  \small
  \caption{Final medians at nominal $30\%$ label-stratified coverage; six runs per
  family.}
  \label{tab:fixed-q2-full}
  \begin{tabular}{@{}lcc@{}}
    \toprule
    family & exact accuracy & prespecified binary factor quotient \\
    \midrule
    $C_2\times C_{32}$            & $0.9094$  & $0.9977$ \\
    $C_2\times C_4\times R_8$    & $0.5938$  & $0.9922$ \\
    $C_2^2\times R_{16}$          & $0.0165$  & $0.9823$ \\
    $C_2\times R_{32}$            & $0.00330$ & $0.9816$ \\
    $R_{64}$                       & $0.00069$ & --- \\
    \bottomrule
  \end{tabular}
\end{table}

The run-level binary factor-quotient ranges for the two hardest structured families
are $0.456$--$0.989$ for $C_2^2\times R_{16}$ and
$0.478$--$0.984$ for $C_2\times R_{32}$.  This variation motivates the
run-level markers in the left panel of \Cref{fig:open-region}.

Table~\ref{tab:nested-mask-full} gives all median resolutions used in the
matched nested-mask comparison.  Every entry aggregates the same two task
seeds and two training seeds, giving four matched triplets per family.

\begin{table}[t]
  \centering
  \small
  \caption{Final held-out accuracy under nested masks.  A dash denotes a
  factor quotient absent from the factor chain.}
  \label{tab:nested-mask-full}
  \begin{tabular}{@{}llccc@{}}
    \toprule
    family & resolution & $20\%$ & $30\%$ & $40\%$ \\
    \midrule
    \multirow{4}{*}{$C_2^3\times R_8$}
      & binary factor quotient & $0.948$ & $0.999$ & $1.000$ \\
      & $4$-way factor quotient & $0.846$ & $0.998$ & $1.000$ \\
      & $8$-way factor quotient & $0.398$ & $0.998$ & $1.000$ \\
      & exact             & $0.0169$ & $0.882$ & $0.985$ \\
    \midrule
    \multirow{3}{*}{$C_2^2\times R_{16}$}
      & binary factor quotient & $0.495$ & $0.982$ & $0.992$ \\
      & $4$-way factor quotient & $0.344$ & $0.960$ & $0.984$ \\
      & exact             & $0.00184$ & $0.0189$ & $0.376$ \\
    \bottomrule
  \end{tabular}
\end{table}

\subsection{Factor-quotient lattice and intervention analyses}
\label{app:mechanism-experiments}

\Cref{sec:open-region} evaluates prespecified product-factor quotients at the final checkpoint.
The supplementary mechanism audit expands the evaluation to every checkpoint
and, for elementary $2$-group factors, to all factor-quotient directions of a fixed
order.  Eighteen runs contribute nine saved checkpoints each.  The audit
contains 550 factor-quotient-lattice records, 148 probe/intervention records, and 32
matched random-partition controls.

Two observations refine the endpoint picture.  First, different optimisation
runs can select different isomorphic binary factor-quotient directions.  In one
$C_2^2\times R_{16}$ run, for example, prespecified binary accuracy is $0.4559$
while an alternate binary factor quotient reaches $0.9858$.  Factor-quotient
recovery is therefore naturally evaluated over the available projection
lattice.  Second, factor-quotient labels are linearly
readable at every one of the 17 checkpoints selected for intervention, and
removing the classifier component associated with the recovered factor quotient
produces a substantially larger behavioural change than matched random
subspaces.  Removing one ridge-probe direction often has little effect; the
distributed classifier component is the more faithful intervention target.

These analyses concern representation dynamics along the trajectory; their
role here is to confirm that the factor-quotient accuracies in
\Cref{fig:open-region} reflect structured label resolution rather than the
contribution of exact hits to a coarse score.

\subsection{Supplementary families and negative controls}
\label{app:supplementary-experiments}

Several additional sweeps delimit the empirical interpretation.
\begin{itemize}[leftmargin=2.0em,itemsep=3pt,topsep=2pt]
  \item A product ladder from $C_{64}$ through $C_8\times R_8$,
        $C_4\times R_{16}$, $C_2\times R_{32}$, and $R_{64}$ shows a gradual
        loss of exact completion as the sequential residual factor grows.  The
        $C_8\times R_8$ endpoints range from $0.0427$ to $0.9924$, providing a
        finite-order non-group high-generalisation example.
  \item Sparse perturbations of the cyclic table yield a finite crossover at
        corruption levels around $2$--$5\%$.  Forty valid runs remain after
        removing task-seed collisions.  These endpoints quantify robustness to
        leaving the group orbit and carry no asymptotic phase label.
  \item A bank of nine named polynomial rules contains 81 runs.  Six families
        show relative transitions in all nine repeats; three cross-term
        variants show none.  The variant names used in the experiment archive
        are kept separate from the theoretical term \textsf{mixed}.
  \item A second stratified split reproduces the endpoint contrast between
        $C_{64}$ and $R_{64}$ in eight additional runs.
  \item Under the same SGD/weight-decay protocol, sixteen Transformer runs
        do not reach sustained training memorisation within the horizon and
        are therefore not comparable endpoints.
\end{itemize}
Timing-only weight-decay and angular-clock analyses remain in the experiment
archive as analyses of trajectory delay.

\subsection{Completeness and reproducibility audit}
\label{app:reproducibility}

The order-64 structure sweep contains 72/72 completed runs and the
factor-quotient sweep contains 64/64 completed runs, all reaching $80{,}000$
updates.  Stored
manifests and checkpoint hashes pass the archive checks.  The order-97 Latin
control contributes the 24 completed runs specified in
\Cref{app:orbit-details}; denominators use completed runs only.

Archived run identities record a base commit and a dirty-worktree flag, so
exact reproduction requires the frozen source snapshot, which the submission
archive preserves together with the run manifests, analysis script, endpoint
CSVs, figure inputs, and the finite-field polynomial reconstruction table.

%% file: appendix/generated/polynomial-coefficient-index.tex
\begin{table*}[t]
  \centering
  \scriptsize
  \setlength{\tabcolsep}{4pt}
  \caption{Index of the machine-readable sparse coefficient archives. Each CSV stores the nonzero triples $(i,j,a_{ij})$ over $\mathbb F_{97}$.}
  \label{tab:polynomial-coefficient-archive}
  \begin{tabular}{@{}lrrrrrl@{}}
    \toprule
    family & seed & nonzero & support (\%) & degrees & total & file \\
    \midrule
    visible addition & 0 & 2 & 0.021 & $(1,1)$ & 1 & \texttt{visible\_addition\_task\_000.csv} \\
    hidden addition & 0 & 9118 & 96.907 & $(95,95)$ & 190 & \texttt{hidden\_addition\_task\_000.csv} \\
    hidden addition & 1 & 9133 & 97.067 & $(95,95)$ & 190 & \texttt{hidden\_addition\_task\_001.csv} \\
    hidden addition & 2 & 9121 & 96.939 & $(95,95)$ & 190 & \texttt{hidden\_addition\_task\_002.csv} \\
    hidden addition & 3 & 9109 & 96.812 & $(95,95)$ & 190 & \texttt{hidden\_addition\_task\_003.csv} \\
    hidden addition & 4 & 9134 & 97.077 & $(95,95)$ & 190 & \texttt{hidden\_addition\_task\_004.csv} \\
    hidden addition & 5 & 9126 & 96.992 & $(95,95)$ & 190 & \texttt{hidden\_addition\_task\_005.csv} \\
    hidden addition & 6 & 9131 & 97.045 & $(95,95)$ & 190 & \texttt{hidden\_addition\_task\_006.csv} \\
    hidden addition & 7 & 9127 & 97.003 & $(95,95)$ & 190 & \texttt{hidden\_addition\_task\_007.csv} \\
    hidden addition & 8 & 9122 & 96.950 & $(95,95)$ & 190 & \texttt{hidden\_addition\_task\_008.csv} \\
    hidden addition & 9 & 9108 & 96.801 & $(95,95)$ & 190 & \texttt{hidden\_addition\_task\_009.csv} \\
    sequential Latin & 0 & 9121 & 96.939 & $(95,95)$ & 190 & \texttt{sequential\_latin\_task\_000.csv} \\
    sequential Latin & 1 & 9115 & 96.875 & $(95,95)$ & 190 & \texttt{sequential\_latin\_task\_001.csv} \\
    sequential Latin & 2 & 9107 & 96.790 & $(95,95)$ & 190 & \texttt{sequential\_latin\_task\_002.csv} \\
    sequential Latin & 3 & 9118 & 96.907 & $(95,95)$ & 190 & \texttt{sequential\_latin\_task\_003.csv} \\
    sequential Latin & 4 & 9119 & 96.918 & $(95,95)$ & 190 & \texttt{sequential\_latin\_task\_004.csv} \\
    sequential Latin & 5 & 9125 & 96.982 & $(95,95)$ & 190 & \texttt{sequential\_latin\_task\_005.csv} \\
    sequential Latin & 6 & 9127 & 97.003 & $(95,95)$ & 190 & \texttt{sequential\_latin\_task\_006.csv} \\
    sequential Latin & 7 & 9127 & 97.003 & $(95,95)$ & 190 & \texttt{sequential\_latin\_task\_007.csv} \\
    \bottomrule
  \end{tabular}
\end{table*}

%% file: appendix/03-discussion.tex
\section{Discussion}
\label{app:discussion}

The first two subsections discuss the stability of the all-good proof and
the relation between endpoints and training timescales.  The third turns
to task structure: it extracts a colouring-independent carrier criterion,
proves the all-good conclusion for every finite abelian group family, and
examines the necessity conjecture stated in \Cref{conj:carrier-necessity}.

\subsection{What the all-good proof rests on}
\label{app:discussion-atlas}

\paragraph{The competition.}
Stripped of its construction, the proof of \Cref{thm:main}\ref{it:main-add}
is the competition \eqref{eq:competition} between a counting term and a
deviation term.  $\mathcal H_q$ is the deterministic atlas of
\eqref{eq:atlas}: the enumeration, fixed before the mask is drawn, of every
cell set on which a low-norm score table can be confidently wrong while
remaining consistent on the observed cells.  The exponent $cN$ of
\eqref{eq:hypergeom} is the large-deviation price a positive-density set pays
to meet a uniform mask of density $\alpha$ in too few cells; against it,
$\log\card{\mathcal H_q}$ counts how many harmful patterns an adversary who
sees the mask may choose between.  The proof succeeds because
\eqref{eq:atlas} gives $\log\card{\mathcal H_q}=o(N)$, so the deviation term
dominates and every minimum-norm response is forced to generalise.  The
carrier of \Cref{lem:addition-carrier}, the SVD split, Maurey sampling, and
the sign-pattern counting exist only to make $\log\card{\mathcal H_q}$
subquadratic while keeping the atlas mask-independent.  That the atlas is
fixed first is why the conclusion covers the whole solution set: the
adversary's freedom to choose after seeing $\Omega$ has already been paid
for.

\paragraph{Stability in width.}
\Cref{lem:one-sided-notch} quantifies over tables of \emph{any} finite width,
and the atlas dimension of \eqref{eq:par-count},
$D_q^{\mathrm{par}}\le2qm+mK_0+5qK_1=O(q^{3/2}+r_qB_q^2/\sqrt q+q)$, contains
no width variable: $m=\lceil\sqrt q\rceil$ and the atom budgets $K_0,K_1$ of
\eqref{eq:budgets} depend only on $q$, the norm budget $B_q$, and the
schedule $r_q$.  Maurey sampling replaces a sum over arbitrarily many hidden
units by $K_0$ sampled atoms at a controlled error, so width enters the
argument only through the product $\frob A\nucnorm Z$.  Once the width
suffices to realise the carrier ($O(\log q)$ units, padded to $N$ in the
proof), further width changes the reachable norm but not the counting.

\paragraph{Fragility in depth.}
There is no depth-$L$ analogue, and the obstruction is structural: all three
ingredients of \eqref{eq:atlas} are tied to a single hidden layer.  The
variational identity behind $\Jred$ reduces a two-factor $L_2$ objective to
$\frob A\nucnorm Z$, with no comparable surrogate for a composition of $L$
factors; Maurey sampling compresses a sum of atoms, whereas depth composes
them, so the Gram bound \eqref{eq:gram} that caps each normalised gate at
$2q$ does not apply to inner layers; and the sign-pattern count treats
preactivations of degree $2$ in the core parameters, whereas at depth $L$
they become piecewise polynomial of degree growing with $L$.  Each of these
degrades the entropy estimate in the same direction.  This says the present
proof does not extend, not that deep networks fail; a depth-robust form of
\eqref{eq:competition} is the natural next target.

\paragraph{The empirical record on depth.}
\citet{fanDeepGrokking2024} train $12$-layer MLPs, find deep networks at least
as susceptible to delayed generalisation as shallow ones, and find that
internal feature rank, not weight norm, tracks the transition;
\citet{manirSystematicEmpiricalGrokking2026} report a non-monotone depth
effect on modular addition, with depth-$4$ MLPs failing to grok and
depth-$8$ residual networks recovering generalisation.  Both point to
architectures in which upper layers are forced to track a representation
built below, which is also the situation the atlas argument can cover: a
single controlled map over a given feature table.  A norm bound over a
solution set and a rank statistic along a trajectory answer different
questions, and a trajectory-level account of depth would likely be organised
around a different quantity than the one controlled here.

\subsection{Timescales, and the extent to which they decouple}
\label{app:discussion-timescale}

\paragraph{Scope.}
The phase of \eqref{eq:phase-quantifiers} is a property of the solution set
$\Sq(\Dset)$ and carries no time index; every statement in \Cref{sec:main}
would read identically if the optimiser reached its endpoint immediately.
The transition statistic of \Cref{app:protocol} is a reporting device, not
an object the theory predicts.

\paragraph{What our measurements show.}
The order-$97$ orbit comparison is a clean instrument for the separation:
all ten hidden-addition tasks are relabellings of one addition table, so
\Cref{thm:quotient-invariance} makes the phase constant along the orbit, and
the trained predictors have concentrated final accuracies
($30/30$ transitions, median $1.000$, range $0.969$--$1.000$).
The delay is not constant: midpoints span $13{,}000$--$21{,}000$ updates,
with about $93\%$ of their variance attributable to the optimisation seed
(\Cref{app:orbit-details}).  Within one orbit the phase is invariant and
the delay is dominated by optimisation randomness.  This does not establish
that endpoint and delay are independent: every family we sweep either fixes
the task and varies the optimiser, or varies the task and reports endpoints;
none varies task structure while measuring delay under a comparable phase.

\paragraph{Where the coupling is visible.}
\citet{guBeyondProgressMeasures2025} distinguish the accuracy ceiling
from the training process leading to it, and their experiments measure
both the ceiling and transition time.  At
$p=97$ with a $30\%$ training fraction, they replace the random split by
excised training distributions---a square block $i,j\le k$ and a strip
$i\le t$---and record both the accuracy ceiling after the transition and the
epoch at which it occurs (\Cref{tab:gu-excision}).  Enlarging the excision
lowers the ceiling monotonically and shifts the transition epoch
non-monotonically; the widest strip removes the transition.  Under a random
split the same configuration transitions near epoch $5000$ and reaches
accuracy $1$.  One manipulation thus moves ceiling and delay together.

\begin{table}[ht]
  \centering
  \small
  \caption{Excised training distributions at $p=97$, reproduced from
  \citet{guBeyondProgressMeasures2025}.  U-Acc is the accuracy ceiling
  after the transition, GO-Epoch the epoch at which it occurs.}
  \label{tab:gu-excision}
  \begin{tabular}{@{}llcc@{}}
    \toprule
    excision & parameter & U-Acc (\%) & GO-Epoch \\
    \midrule
    \multirow{4}{*}{square $i,j\le k$}
      & $k=30$ & $99.04$ & $1500$ \\
      & $k=40$ & $84.50$ & $1700$ \\
      & $k=50$ & $77.39$ & $3400$ \\
      & $k=60$ & $57.31$ & $2200$ \\
    \midrule
    \multirow{3}{*}{strip $i\le t$}
      & $t=40$ & $89.89$ & $1300$ \\
      & $t=50$ & $80.10$ & $1600$ \\
      & $t=67$ & $7.56$  & none observed \\
    \bottomrule
  \end{tabular}
\end{table}

A ceiling below one, with runs halting at different accuracies, is not
evidence of the mixed phase: that phase concerns the exact minimum-norm set
(\Cref{rem:mixed-interval}), and no training curve reports it.

\paragraph{How this sits against our framework.}
Both of their manipulations fall outside our hypotheses in instructive ways.
The excisions keep the colouring at $\Cadd$ and replace the mask law: our
$\nu_q$ is the uniform fixed-cardinality law of \eqref{eq:mask}, and
uniformity is what makes $\card{H\cap\Omega}$ hypergeometric in
\eqref{eq:hypergeom} and so supplies the deviation term of
\eqref{eq:competition}.  A square or strip excision is a mask that a
positive-density harmful set can evade at no exponential cost, so the
degradation of the ceiling is consistent with, and delimits, what the
uniform-mask assumption does.  Their second manipulation changes the rule to
$f(i,j)=i^2+ij+j^2$, whose ceiling stays below $1$ under every split and on
which a two-layer network stalls near $0.01$ while a transformer retains
partial generalisation; for fixed $i$ that map is quadratic in $j$, so the
rule is not Latin and lies outside $\Lq$.  What survives the translation is
the separation of the accuracy ceiling from the process leading to it.
Our phase formalises an endpoint question by quantifying over the whole
minimum-norm set under uniform masks; their experiments measure trained
transformers under designed masks and expose the role of the training
distribution.  This comparison does not establish that the attainable
accuracy is independent of the optimiser.  The coupling between that
ceiling and the delay is visible in their data and absent from ours; it is
an open direction.

\subsection{What the all-good proof needs from the colouring}
\label{app:colouring-carriers}

\Cref{app:discussion-atlas} asked what the proof is stable to as the model
varies.  Here we vary the colouring while keeping the bias-free model,
the product norm \eqref{eq:Jred}, and the mask law \eqref{eq:mask} fixed.
Only the carrier construction uses addition; the counting argument that
turns a carrier into an all-good statement does not.

\subsubsection{A colouring-independent notch and a carrier criterion}
\label{app:carrier-criterion}

\begin{lemma}[One-sided notch for any colouring]
\label[lemma]{lem:colouring-notch}
Fix $\alpha,\beta\in(0,1)$ and a deterministic sequence $C_q\in\Cq$.
Let $B_q$ satisfy $B_q^2\log q/q^{5/2}\to0$.  With probability $1-o(1)$
over $\Omega\sim\nu_q$, there is no finite-width table
$D(x)=A\relu(Z\xicell_x)$ with $\frob A\nucnorm Z\le B_q$ such that
\begin{enumerate}[label=(\arabic*),leftmargin=2.2em,itemsep=1pt,topsep=1pt]
  \item $D_{C_q(x)}(x)-D_a(x)\ge0$ for every $x\in\Omega$ and
        $a\ne C_q(x)$; and
  \item at least $\beta N$ cells admit a rival $a\ne C_q(x)$ with
        $D_a(x)-D_{C_q(x)}(x)\ge1$.
\end{enumerate}
The failure bound is uniform in the choice of the deterministic colouring.
\end{lemma}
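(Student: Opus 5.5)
The plan is to rerun the proof of \Cref{lem:one-sided-notch} essentially verbatim, replacing $y(x)=\Cadd(x)$ by $y(x)=C_q(x)$ throughout, and to check that no step uses anything about the colouring beyond the fact that it is a fixed function $\Xq\to\qset$ chosen before the mask is drawn. Steps~0--3 never look at labels. The Gram bound \eqref{eq:gram} is a property of the design $\{\xicell_x\}$ alone. The low-rank split at $m=\lceil\sqrt q\rceil$, the Maurey sampling of core and tail atoms with budgets \eqref{eq:budgets}, and the Zhang/Ledent cover code with its deterministic threshold grid and multiplicity codes $\kappa$ depend only on $q$, $B_q$, $r_q$, $\eta$ and the ambient address set. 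Hence the code count \eqref{eq:code-count} and the parameter count \eqref{eq:par-count} hold with the same constants for every colouring.

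The one place the colouring enters is Step~4, through the list of truth--rival half-gap comparisons $\widehat F_{\kappa,a}(x)+\widetilde T_a(x)-\widehat F_{\kappa,y(x)}(x)-\widetilde T_{y(x)}(x)-\tfrac12$. The key observation is that this list is determined by the deterministic $C_q$, so it is fixed before $\Omega$. Its size is still $N(q-1)$ polynomials of degree at most $4$. \Cref{ext:caro} therefore gives the same bound $\exp(O(D_q^{\mathrm{par}}\log q))$ per (code, gate stratum). To make the bound visibly uniform, I would go further and enlarge the list to all ordered pairs $(a,c)$ with $a\ne c$ at every cell, i.e.\ $Nq(q-1)$ polynomials. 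This costs only an extra $O(D_q^{\mathrm{par}}\log q)$ in the exponent. The resulting sign-pattern atlas is then literally independent of $C_q$, and each $C_q$ merely selects, cellwise, which comparisons define the threshold set $H_{\bar D}$. So $\log\card{\mathcal H_q}=o(N)$ with an $o(\cdot)$ that does not depend on $C_q$.

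Step~5 is unchanged. The exceptional set $\Xexc$ of size $(\eta+o(1))N$ comes from the label-free Markov bounds. Hypothesis~(1) forces $\card{H_{\bar D}\cap\Omega}\le\card{\Xexc}$, and hypothesis~(2) forces $\card{H_{\bar D}}\ge\beta N/2$. For each fixed $H$ with $\card H\ge\beta N/2$, the hypergeometric bound \eqref{eq:hypergeom} holds with a constant $c_{\alpha,\beta,\eta}$ that depends only on $\card H$, $\alpha$ and $\eta$. A union bound over $\mathcal H_q$ then gives failure probability at most $\exp(o(N)-cN)$, with both terms free of $C_q$. This yields the uniformity claim: the supremum over deterministic colourings of the failure probability is $o(1)$.

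The main obstacle I expect is bookkeeping in Step~4. One has to confirm that the atlas bound in \eqref{eq:atlas} really absorbs the colouring-dependent selection without a union over colourings. A union over colourings would be fatal, since there are $q^{N}$ of them. The enlargement to all ordered label pairs is what avoids this, and I would state it explicitly rather than rely on the addition proof's wording. A secondary check is the carrier-free character of the lemma. No carrier is used here, and the budget $B_q$ is an arbitrary sequence satisfying the stated scaling, so nothing from \Cref{lem:addition-carrier} needs to be transferred.
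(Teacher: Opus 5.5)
Your proposal is correct and is essentially the paper's proof. You substitute $y(x)=C_q(x)$ into the proof of the addition notch lemma and note three things: Steps~0--3 and~5 are label-free; in Step~4 the deterministic colouring only selects the truth coordinate in $N(q-1)$ comparisons of unchanged degree and parameter count; and so the atlas is fixed before the mask, with a cardinality bound that does not depend on the colouring. Your enlargement to all $Nq(q-1)$ ordered label pairs is a harmless optional variant: the paper simply lets the atlas depend on $C_q$, since no union over colourings is needed for one deterministic sequence. Like the paper's argument, it yields a failure bound uniform in the colouring, not a simultaneous event over all colourings.
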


\begin{proof}
In \Cref{app:notch-proof}, replace $y(x)=\Cadd(x)$ by $y(x)=C_q(x)$.
The Gram matrix in Step~0 depends only on the grid and on $\xicell$.
The low-rank split, sampling, and cover codes of Steps~1--3 use only the
score table and its norm budget.  In Step~4, the labels select the truth
coordinate in $N(q-1)$ polynomial comparisons, without changing their
degrees or parameter counts.  The resulting atlas may depend on $C_q$,
but its cardinality bound does not, and it is fixed before the mask is
drawn.  Step~5 uses only the size of a set and the uniform mask law.
Thus the same bounds prove the claim for each fixed colouring, uniformly
in that colouring.  This is not a simultaneous event over all colourings.
\end{proof}

\begin{proposition}[Carrier criterion]\label[proposition]{prop:carrier-criterion}
Let $C_q\in\Cq$ be deterministic.  Suppose that, for every sufficiently
large $q$, it admits a full-grid carrier
$\Ccar=A_{\rm car}\relu(Z_{\rm car}\xicell_{\cdot})$ of width at most
$h_q=N$, with $J_{{\rm car},q}:=\frob{A_{\rm car}}\nucnorm{Z_{\rm car}}$
and $\Gamma_q\ge1$ satisfying
\begin{equation}\label{eq:general-carrier}
\begin{gathered}
  1\le\Ccarc{C_q(x)}(x)-\Ccarc a(x)\le\Gamma_q
  \qquad(x\in\Xq,\ a\ne C_q(x)),\\
  \Gamma_q J_{{\rm car},q}
  =o\!\left(\frac{q^{5/4}}{\sqrt{\log q}}\right).
\end{gathered}
\end{equation}
Then for every fixed $\alpha,\tau\in(0,1)$,
\begin{equation}\label{eq:carrier-allgood}
  \Aminus(C_q,\Omega)\xrightarrow{\mathrm P}1,
  \qquad \pall(C_q)\to1,
  \qquad \profile([C_q])\to\vertexV=(1,0,0).
\end{equation}
\end{proposition}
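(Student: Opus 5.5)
The proof copies the proof of \Cref{thm:main}\ref{it:main-add}, with \Cref{lem:addition-carrier} replaced by the hypothesised carrier and \Cref{lem:one-sided-notch} replaced by \Cref{lem:colouring-notch}.

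\textbf{Step 1: bound every endpoint.} Fix $q$ and a mask $\Omega$. The carrier $(A_{\rm car},Z_{\rm car})$ has width at most $N$. Padded with zero rows, it meets every observed unit margin, because its full-grid gaps are at least $1$. So it is feasible for $\Jstar$ on every mask, and $\Jstar\le J_{{\rm car},q}$. By \Cref{lem:reduced-normalization}, every endpoint response $F$ then has a reduced representation $(A_F,Z_F)$ with $\frob{A_F}\nucnorm{Z_F}=\Jstar$.

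\textbf{Step 2: the difference table.} Set $D=\Gamma_qF-\Ccar$ and $y(x)=C_q(x)$.

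On an observed cell, $F_y-F_a\ge1$ and the carrier gap is at most $\Gamma_q$, so
\[
D_y-D_a=\Gamma_q(F_y-F_a)-\gamma_{x,a}\ge0 .
\]
At a held-out non-strict error (ties included), some rival has $F_a\ge F_y$. The carrier gap there is at least $1$, so $D_a-D_y\ge1$.

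\textbf{Step 3: the norm budget.} Balance both representations by homogeneity. Concatenate the heads as $[\sqrt{\Gamma_q}A_F\ \ -A_{\rm car}]$ and stack the hidden matrices as $[\sqrt{\Gamma_q}Z_F;\ Z_{\rm car}]$, with the balancing chosen so the scalar $\Gamma_q$ is split evenly. Then \eqref{eq:reduced-sum-bound} gives
\[
\Jred(D)\le2\bigl(\Gamma_q\Jstar+J_{{\rm car},q}\bigr)\le4\,\Gamma_qJ_{{\rm car},q}=:B_q ,
\]
using $\Gamma_q\ge1$. The hypothesis \eqref{eq:general-carrier} gives
\[
B_q^2\log q/q^{5/2}=16\,(\Gamma_qJ_{{\rm car},q})^2\log q/q^{5/2}\to0 ,
\]
which is exactly the scaling that \Cref{lem:colouring-notch} requires. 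Width is no issue, since that lemma allows any finite width; the stack has width at most $2N$.

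\textbf{Step 4: exclude bad endpoints.} Set $\beta=\tfrac12(1-\tau)(1-\alpha)$. An endpoint with $\Acc_T<\tau$ has more than $(1-\tau)(1-\alpha)N\ge\beta N$ non-strict held-out errors. Its $D$ would therefore satisfy both conditions of \Cref{lem:colouring-notch}.

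That lemma's exclusion event depends only on $\Omega$, with the colouring fixed deterministically. So with probability $1-o(1)$ it rules out all endpoints simultaneously, giving $\Aminus(C_q,\Omega)\ge\tau$. Since $\tau<1$ was arbitrary, $\Aminus\xrightarrow{\mathrm P}1$. Then $\pall(C_q)\to1$, and \Cref{thm:phase-profile} together with \Cref{thm:quotient-invariance} gives the profile limit on the orbit.

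\textbf{Main obstacle.} The delicate point is Step 3. The scaling factor must be $\Gamma_q$, not the constant $9$ used for addition, so it enters the budget. The hypothesis must therefore control the product $\Gamma_qJ_{\rm car}$, rather than $J_{\rm car}$ alone, at the $q^{5/4}/\sqrt{\log q}$ scale.
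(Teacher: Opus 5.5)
Your proof is correct and follows the paper's argument essentially step for step: the same difference table $D=\Gamma_qF-\Ccar$, the same $\sqrt{\Gamma_q}$-weighted stack of balanced representations, and the same application of the colouring-independent notch lemma with $\beta=\tfrac12(1-\tau)(1-\alpha)$. The only deviation is cosmetic: you route the norm bound through the generic sum inequality and get the budget $4\Gamma_qJ_{{\rm car},q}$ instead of the paper's $\sqrt2(\Gamma_q+1)J_{{\rm car},q}$, which changes nothing since both are constant multiples of $\Gamma_qJ_{{\rm car},q}$.
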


\begin{proof}
Zero padding makes the carrier feasible in \eqref{eq:Jred} on every mask,
so $\Jstar\le J_{{\rm car},q}$.  For any endpoint response $F$, take its
width-$N$ reduced representation from \Cref{lem:reduced-normalization}
and put $D=\Gamma_q F-\Ccar$.  On $\Omega$, the observed unit margin and
the upper carrier gap give $D_y-D_a\ge0$, where $y=C_q(x)$.  At every
held-out non-strict error, some $a\ne y$ has $F_a\ge F_y$, so the lower
carrier gap gives $D_a-D_y\ge1$.

Balance the two representations by positive homogeneity so that
$\frob{A_F}=\nucnorm{Z_F}=\sqrt{\Jstar}$ and
$\frob{A_{\rm car}}=\nucnorm{Z_{\rm car}}=\sqrt{J_{{\rm car},q}}$.
The stacked factors
\begin{equation}\label{eq:general-carrier-stack}
  \widetilde A=[\,\sqrt{\Gamma_q}A_F\ \ -A_{\rm car}\,],
  \qquad
  \widetilde Z=[\,\sqrt{\Gamma_q}Z_F;\ Z_{\rm car}\,]
\end{equation}
represent $D$, and the same estimate as \eqref{eq:JD} gives
\begin{equation}\label{eq:general-carrier-budget}
\begin{aligned}
  \frob{\widetilde A}\nucnorm{\widetilde Z}
  &\le\sqrt2\,(\Gamma_q\Jstar+J_{{\rm car},q})\\
  &\le\sqrt2\,(\Gamma_q+1)J_{{\rm car},q}=:B_q .
\end{aligned}
\end{equation}
Since $\Gamma_q\ge1$, condition \eqref{eq:general-carrier} is equivalent
to $B_q^2\log q/q^{5/2}\to0$.  The stack may exceed width $N$, which
\Cref{lem:colouring-notch} permits; the carrier itself must fit width
$N$ to justify the bound on $\Jstar$.
An endpoint with $\Acc_T<\tau$ supplies more than
$(1-\tau)(1-\alpha)N$ reverse-unit-gap cells.  Apply
\Cref{lem:colouring-notch} with
$\beta=\tfrac12(1-\tau)(1-\alpha)$.  Its exclusion event covers every
endpoint simultaneously, proving \eqref{eq:carrier-allgood}.
\end{proof}

The proposition is a sufficient condition.  Its contrapositive constrains
families that fail to be all-good; it does not say that a family without
such a carrier fails to generalise.  In particular, the observed optimum
$\Jstar$ can be cheaper than a chosen full-grid construction, and a
colouring-specific atlas can be smaller than our bound for the whole
norm ball.  The necessity direction in \Cref{conj:carrier-necessity}
is an additional hypothesis.

\subsubsection{Finite abelian group tables}
\label{app:abelian-carriers}

Let $G$ be a finite abelian group of order $q$, identify its elements with
$\qset$, and write its operation additively.  Its table
$C_G(u,v)=u+v$ is Latin.  Let $\widehat G$ be its character group, with
$q$ elements, and write
$\chi(t)=c_\chi(t)+i s_\chi(t)$, where
$c_\chi(t)^2+s_\chi(t)^2=1$.
We use the standard finite-group character identities of
\citet[Sections~3.1--3.5]{etingof2011representation}.

\begin{lemma}[Abelian carrier]\label[lemma]{lem:abelian-carrier}
There is an absolute constant $C_{\rm ab}$ such that, for every finite
abelian group $G$ of sufficiently large order $q$, the table $C_G$ has a
full-grid carrier of width $O(\log q)$, truth--rival gaps in $[1,9]$,
and $J_{{\rm car},q}\le C_{\rm ab}q\sqrt{\log q}$.
\end{lemma}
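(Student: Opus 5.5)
The plan is to transplant the proof of \Cref{lem:addition-carrier} verbatim, replacing the cyclic frequencies $k_\ell\in\Zbb_q$ by characters $\chi_\ell\in\widehat G$ and the angle $\phi_\ell(t)$ by the character's real and imaginary parts $c_{\chi_\ell}(t),s_{\chi_\ell}(t)$. We keep the same constants $h=\tfrac23$, $\epsilon=\sqrt2/9$, $\rho=(1-\epsilon)/2$, $c=2/(1-\epsilon)$ and the same $L=O(\log q)$.

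\textbf{Character selection.} First we select characters $\chi_1,\dots,\chi_L$ so that every $e\ne0$ satisfies
\[
\frac1L\sum_{\ell}\operatorname{Re}\chi_\ell(e)\le\rho .
\]
Draw $\chi$ uniformly from $\widehat G\setminus\{1\}$. By the orthogonality identity of \Cref{ext:characters}, $\sum_{\chi\in\widehat G}\chi(e)=0$ for $e\ne0$, so $\Exp\operatorname{Re}\chi(e)=-1/(q-1)$. Since $\operatorname{Re}\chi(e)\in[-1,1]$, Hoeffding (\Cref{ext:hoeffding}) bounds the failure probability at a fixed $e$. A union bound over the $q-1$ nonzero $e$ then gives existence with exactly the same $L$. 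No cyclic structure is used here: only $|\widehat G|=q$ and orthogonality.

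\textbf{Product surrogate and scores.} Multiplicativity $\chi(u+v)=\chi(u)\chi(v)$ gives
\[
c_\chi(u+v)=c_\chi(u)c_\chi(v)-s_\chi(u)s_\chi(v),\qquad s_\chi(u+v)=s_\chi(u)c_\chi(v)+c_\chi(u)s_\chi(v),
\]
with all four factors in $[-1,1]$. This is exactly the input needed by the ReLU surrogate $P$ of \eqref{eq:prod-approx}. We then define $c_\ell,s_\ell$ and the score
\[
\Ccarc{a}(u,v)=\frac cL\sum_\ell\bigl(c_{\chi_\ell}(a)c_\ell+s_{\chi_\ell}(a)s_\ell\bigr).
\]
Its ideal version equals $\frac cL\sum_\ell\operatorname{Re}\chi_\ell\bigl((u+v)-a\bigr)$, using $\chi(a)^{-1}=\overline{\chi(a)}$. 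This is $c$ at $a=u+v$ and at most $c\rho$ otherwise. The perturbation bound of at most $c\sqrt2h^2/8$ per score is identical, so the gaps lie in $[1,26/5]\subset[1,9]$ by the same arithmetic.

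\textbf{Additivity and norms.} Each hinge in $Q_h(x\pm y)$ has the form $\pospart{r_u+s_v-t}$, where $r_u$ and $s_v$ are bounded functions of $u$ and $v$ alone (character values). So it is a row of $Z$ acting on $\xicell_{(u,v)}$, with the constant split between the two halves. There are a constant number of rows per character, giving width $O(L)=O(\log q)\le N$. Each row of $Z$ has entries bounded by an absolute constant over $2q$ coordinates, hence Euclidean norm $O(\sqrt q)$. Each output column has entries of size $O(c/L)$ over $q$ classes, hence norm $O(\sqrt q/L)$. The same estimates $\nucnorm{Z}\le\sqrt{O(L)}\frob Z=O(L\sqrt q)$ and $\frob A=O(\sqrt{q/L})$ give $J_{\rm car}\le C_{\rm ab}q\sqrt{\log q}$. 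The constant is absolute because no bound depends on the group structure.

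\textbf{Main obstacle.} The only delicate point is making sure nothing uses cyclicity. Two places need checking: the mean computation, which needs orthogonality over the full dual group (excluding the trivial character), and the real-valuedness of $c_\chi,s_\chi$. Characters of exponent $2$ are real, so $s_\chi\equiv0$; the construction still works, with some hinges wasted. Everything else is a literal copy of the proof of \Cref{lem:addition-carrier}.
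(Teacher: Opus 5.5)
Your proposal is correct and follows the paper's proof essentially step for step. It keeps the same constants and $L$, selects characters uniformly from $\widehat G\setminus\{\mathbf 1\}$ via orthogonality, Hoeffding and a union bound, and feeds the real/imaginary addition identities into the surrogate $P$; the same $c\sqrt2h^2/8$ score error then gives gaps in $[1,9]$, and the same row/column norm estimates give $J_{{\rm car},q}=O(q\sqrt{\log q})$ with group-independent constants.
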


\begin{proof}
Keep $h,\epsilon,\rho,c,L$ and the ReLU product surrogate $P$ from
\Cref{app:carrier-proof}.  We give each change to that construction.

\paragraph{Character selection.}
For $e\ne0$, character orthogonality gives
$\sum_{\chi\in\widehat G}\chi(e)=0$.  Indeed, a character $\psi$ with
$\psi(e)\ne1$ exists, and multiplying the sum by $\psi(e)$ merely
permutes its terms.  Thus, if $\mathbf1$ denotes the trivial character,
\begin{equation}\label{eq:abelian-character-mean}
  \Exp_{\chi\sim\Unif(\widehat G\setminus\{\mathbf1\})}
  [c_\chi(e)]=-\frac1{q-1}.
\end{equation}
The real parts lie in $[-1,1]$.  Hoeffding's inequality and a union bound
over the $q-1$ nonzero elements therefore give, with the same choice of
$L=O(\log q)$, characters $\chi_1,\ldots,\chi_L$ such that
\begin{equation}\label{eq:abelian-character-cover}
  \frac1L\sum_{\ell=1}^L c_{\chi_\ell}(e)\le\rho
  \qquad(e\ne0).
\end{equation}
Sampling with replacement is allowed; no distinctness is needed.

\paragraph{Products, scores, and gaps.}
Multiplicativity gives the real and imaginary addition identities.  Set
\begin{align}
  \widetilde c_\ell(u,v)
  &=P(c_{\chi_\ell}(u),c_{\chi_\ell}(v))
    -P(s_{\chi_\ell}(u),s_{\chi_\ell}(v)),\nonumber\\
  \widetilde s_\ell(u,v)
  &=P(s_{\chi_\ell}(u),c_{\chi_\ell}(v))
    +P(c_{\chi_\ell}(u),s_{\chi_\ell}(v)).
  \label{eq:abelian-products}
\end{align}
By \eqref{eq:prod-approx}, these approximate
$c_{\chi_\ell}(u+v)$ and $s_{\chi_\ell}(u+v)$ to error at most $h^2/8$.
Define
\begin{equation}\label{eq:abelian-carrier-score}
  \Ccarc a(u,v)=\frac cL\sum_{\ell=1}^L
  \bigl(c_{\chi_\ell}(a)\widetilde c_\ell(u,v)
       +s_{\chi_\ell}(a)\widetilde s_\ell(u,v)\bigr).
\end{equation}
Its ideal version is
$\frac cL\sum_\ell\operatorname{Re}\chi_\ell(u+v-a)$, which is $c$
at the true label and at most $c\rho$ at a rival.  Since
$|c_\chi(a)|+|s_\chi(a)|\le\sqrt2$, the score error is at most
$c\sqrt2h^2/8$.  The margin calculation in \Cref{app:carrier-proof}
therefore gives exactly the same interval $[1,9]$.

\paragraph{Architecture and cost.}
Expanding $P$ using \eqref{eq:Qh} produces a constant number of hinges
per character.  Each hinge is $\pospart{r_u+s_v-t}$, with bounded row
and column potentials; splitting $t$ equally between them makes it a
single reduced ReLU unit.  The preactivation row has norm $O(\sqrt q)$,
and its output column has norm $O(\sqrt q/L)$, with constants independent
of $G$.  Consequently
\begin{equation}\label{eq:abelian-carrier-norms}
  \frob{A_{\rm car}}=O(\sqrt{q/L}),\qquad
  \nucnorm{Z_{\rm car}}\le\sqrt{O(L)}\frob{Z_{\rm car}}
  =O(L\sqrt q),
\end{equation}
giving $J_{{\rm car},q}=O(q\sqrt{\log q})$ and width $O(\log q)\le N$.
\end{proof}

\begin{proof}[Proof of \Cref{cor:abelian-allgood}]
\Cref{lem:abelian-carrier} gives
$\Gamma_qJ_{{\rm car},q}=O(q\sqrt{\log q})
=o(q^{5/4}/\sqrt{\log q})$.  Apply \Cref{prop:carrier-criterion} and
\Cref{thm:quotient-invariance}.
\end{proof}

Thus the all-good pole contains every abelian group family, beyond the
cyclic case of \Cref{thm:main}\ref{it:main-add}.  Non-isomorphic abelian
groups give distinct colouring orbits: an isotopy between two such group
tables can be written $h(u+v)=f(u)+g(v)$; substituting the identity in
each argument shows that $u\mapsto h(u)-h(0)$ is a group isomorphism.
Transposition changes no abelian table.  At orders $q=2^m$, partitions
of $m$ therefore supply an unbounded number of distinct abelian orbits
($11$ at $m=6$), all covered by the same asymptotic bound.  This is a
limit statement along families, not a claim that a finite-order profile
equals $\vertexV$ exactly.

\subsubsection{What the finite-order experiments can suggest}
\label{app:carrier-experiments}

The abelian rows $C_2^6$, $C_{64}$, and $C_8\times C_8$ in
\Cref{tab:group-family-endpoints} are finite members of families covered
by \Cref{cor:abelian-allgood}.  The corollary concerns the whole
minimum-norm endpoint set under $\nu_q$ as $q\to\infty$; the table
reports one trained predictor per run at $q=64$, under label-stratified
masks and a fixed horizon.  It neither tests the endpoint quantifier nor
orders the finite-training accuracies.

There is nevertheless a concrete simplification for elementary abelian
$2$-groups.  Every character is real with values in $\{\pm1\}$, so
$s_\chi=0$ and \eqref{eq:abelian-products} reduces to one product
$P(c_\chi(u),c_\chi(v))$ per character.  Its two arguments have sum and
difference in $\{-2,0,2\}$, interpolation nodes of $Q_h$, hence
$P(x,y)=xy$ exactly on these inputs.  The construction uses fewer hinges
and has no multiplication error.  These are improvements to this
explicit circuit, not a comparison of the minimum achievable carrier
norms across groups or a prediction of the $6/6$ exact completions for
$C_2^6$.

\subsubsection{Non-abelian tables and the missing cost estimate}
\label{app:nonabelian-carriers}

The transitions for $\mathrm{Dih}_{64}$ invite an extension beyond scalar
characters.  Every one-dimensional character of a non-abelian group
factors through its abelianisation.  It therefore equals $1$ on every
commutator, so no selection of such characters can satisfy
\eqref{eq:abelian-character-cover} at a non-identity commutator.  For the
order-$64$ dihedral group, the abelianisation has only four elements.

Write $\widehat G$ now for the irreducible unitary dual, let
$\pi\in\widehat G$ have dimension $d_\pi$, and set
$\chi_\pi=\operatorname{tr}\pi$.  The regular-representation identity
gives
\begin{equation}\label{eq:nonabelian-character-mean}
  \sum_{\pi\in\widehat G}\frac{d_\pi^2}{q}
  \frac{\chi_\pi(g)}{d_\pi}=\mathbf1\{g=e\}.
\end{equation}
The real part of the normalised character lies in $[-1,1]$, so sampling
with weights $d_\pi^2/q$ gives a separation argument analogous to the
abelian one.  The candidate ideal score for label $a$ uses
$\operatorname{Re}\chi_\pi(uva^{-1})/d_\pi$, rather than
$\chi_\pi(uv)$ alone.  In particular,
\begin{equation}\label{eq:nonabelian-score-products}
  \frac{\chi_\pi(uva^{-1})}{d_\pi}
  =\frac1{d_\pi}\sum_{i,j,k=1}^{d_\pi}
    \pi(u)_{ik}\pi(v)_{kj}\overline{\pi(a)_{ij}}.
\end{equation}
Each complex product of a row quantity and a column quantity can again be
expanded into real products and approximated using $P$.  But this direct
construction contains $d_\pi^3$ scalar complex products shared across
the label scores; counting only the $d_\pi^2$ terms in
$\operatorname{tr}(\pi(u)\pi(v))$ does not cost all those scores.
Moreover, multiplication errors accumulate with dimension, so the mesh
may need refinement, and the nuclear and output norms must be recomputed.
We leave these costs open.  Families with uniformly bounded irreducible
dimensions, including dihedral groups, are a concrete first target for
\Cref{prop:carrier-criterion}; growing dimensions require quantitative
control of both approximation and representation size.

\subsubsection{Rarity, and what a necessary carrier would mean}
\label{app:carrier-necessity}

Every colouring has a full-grid carrier of finite, indeed polynomial,
cost: the cell-indicator construction in \Cref{app:prelim-proofs} uses
$N$ units and bounded parameter entries.  This includes non-group Latin
tables and uniformly random Latin squares.  What is unknown for the
non-group products of \Cref{sec:open-region} is a construction cheap
enough to satisfy \eqref{eq:general-carrier} along an infinite family.

The uniform obstruction does give a precise rarity statement for cheap
carriers.  For a deterministic
$b_q=o(q^{5/4}/\sqrt{\log q})$, let $\mathcal K_q(b_q)\subseteq\Lq$
contain the colourings with a width-at-most-$N$ carrier whose gaps lie in
$[1,\Gamma]$ and whose cost satisfies $\Gamma J_{\rm car}\le b_q$ for
some $\Gamma\ge1$.  The proof of \Cref{prop:carrier-criterion}, using
$B_q=2\sqrt2 b_q$, gives $\pall(C)\ge1-o(1)$ uniformly on this set.
Thus \Cref{thm:main}\ref{it:main-unif} implies
\begin{equation}\label{eq:cheap-carrier-rarity}
  \frac{\card{\mathcal K_q(b_q)}}{\card\Lq}\,(1-o(1))
  \le\Pall(\Munif)\longrightarrow0.
\end{equation}
This rules out this cheap-carrier regime for almost all Latin colourings;
it rules out neither arbitrary carriers nor the existence of a good
endpoint in a mixed instance.

The sufficient budget in \eqref{eq:general-carrier} is not necessary.
The accompanying \emph{Carrier necessity counterexample} note constructs
a deterministic Latin family along even orders for which
$\Aminus\xrightarrow{\mathrm P}1$ at every fixed positive observation
fraction, yet the minimum of $\Gamma J_{\rm car}$ over all width-at-most-$N$
carriers with gaps in $[1,\Gamma]$ is $(1+o(1))q^{13/10}$.
Its ratio to $q^{5/4}/\sqrt{\log q}$ tends to infinity.
The construction uses independent four-cell Latin trades and the first
crossing of a carrier-cost threshold; a multiscale margin count still
establishes all-goodness.  Thus the counterexample separates the old
numerical budget from the broader carrier mechanism.  Its lower bound
concerns the joint cost $\Gamma J$, not the unit-margin cost without
$\Gamma$.

\paragraph{A mask-independent robust certificate.}
For a colouring $C$ and $0\le\varepsilon<1$, define
\begin{equation}\label{eq:robust-margin-cost}
 \mathcal R_q(C,\varepsilon)=\inf\left\{\frob A\nucnorm Z:
 \begin{array}{l}
 w\le N,\quad E\subseteq\Xq,\quad\card E\le\varepsilon N,\\
 F_{C(x)}(x)-F_a(x)\ge1\quad(x\notin E,\ a\ne C(x))
 \end{array}\right\},
\end{equation}
where $F=A\relu(Z\xicell_{\cdot})$.  The infimum ranges jointly over
the representation and the exceptional set $E$.  Both are chosen from
the deterministic task, independently of the mask.  No upper gap is
prescribed, and the architecture and width restriction remain those of
\Cref{sec:model}.

In this notation, \Cref{conj:carrier-necessity} asserts that if a
deterministic Latin sequence satisfies
$\Aminus(C_q,\Omega)\xrightarrow{\mathrm P}1$ for every fixed
$\alpha\in(0,1)$ under its corresponding mask law, then there exists a
deterministic sequence $\varepsilon_q\to0$ such that
\begin{equation}\label{eq:carrier-necessity-conjecture}
 \mathcal R_q(C_q,\varepsilon_q)=o(q^{3/2}).
\end{equation}
The premise includes every fixed accuracy threshold $\tau<1$, separately
at each fixed observation fraction; it does not assert a single event
uniform in $\alpha$.  The conclusion is equivalent to the existence of
the carriers in the main-text statement: near-minimisers of the
infimum suffice to select their representations and exceptional sets.

The upper-gap factor $\Gamma$ pays for the subtraction
$\Gamma F-\Ccar$ in the original proof, whereas a direct margin argument
can work with $F$ itself.  Removing it avoids treating that proof cost
as an intrinsic necessary complexity.  Allowing $o(N)$ exceptional
cells matches an accuracy limit that can ignore a vanishing fraction
of difficult cells.  Requiring all-goodness at every fixed positive
observation fraction avoids assuming an unproved transfer from one
fraction to all others.  These are independent changes to the converse,
not a claim that merely replacing the exponent $5/4$ by $3/2$ gives a
sharp characterisation.

The accompanying multiscale exploration gives a sufficient condition
using a full-grid unit-margin carrier of cost
$o(q^{3/2}/\log q)$, without an upper-gap factor.  It does not prove
\eqref{eq:carrier-necessity-conjecture}: all-goodness supplies neither
an upper bound on $\Jstar$ nor a quantitative lower bound on the positive
held-out margins.  Scaling a good endpoint to unit margin outside a
small exceptional set may therefore have uncontrolled cost.  Averaging
endpoints over masks does not resolve this problem, since accuracy does
not control the magnitude of rare negative gaps or the norm of a
combined representation.  An entropy upper bound also cannot be
reversed to force bad global minimisers above its sufficient budget.

Conversely, a robust carrier need not satisfy the constraints at
observed exceptional cells, so it does not by itself give
$\Jstar\le\mathcal R_q(C_q,\varepsilon_q)$.  Repairing an arbitrary
$o(N)$-cell set requires additional cost and width control, beyond the
four-cell trade argument.  The revised conjecture is therefore only
a proposed necessary condition, not an equivalence; neither necessity
of the exponent $3/2$ nor a critical logarithmic factor is established.